\algnewcommand{\LineComment}[1]{\State 
 \textcolor{gray}{\# #1}}
\DeclareMathOperator*{\E}{\mathbb{E}}
\DeclareMathOperator{\x}{\mathbf{x}}
\DeclareMathOperator{\X}{\mathcal{X}}
\DeclareMathOperator{\y}{\mathbf{y}}
\DeclareMathOperator{\vv}{\mathbf{v}}
\DeclareMathOperator{\Y}{\mathcal{Y}}
\renewcommand{\k}{\mathbf{k}}
\DeclareMathOperator{\e}{\mathbf{e}}
\DeclareMathOperator{\m}{\boldsymbol{\mu}}
\newcommand{\pt}[1]{\rho_{#1}}
\newcommand{\mt}[1]{\boldsymbol{\mu}_{#1}}
\newcommand{\kl}[2]{D_{KL}\left(#1 \parallel #2\right)}
\newcommand{\N}[2]{\mathcal{N}\left(#1 , #2\right)}
\newcommand{\bc}[1]{#1_c}
\DeclareMathOperator{\R}{\mathbb{R}}
\newcommand{\I}[1]{\boldsymbol{I}}
\newcommand*{\defeq}{\stackrel{\text{def}}{=}}
\newtheorem{theorem}{Theorem}[section]
\newtheorem{proposition}[theorem]{Proposition}
\newcommand{\tidx}[2]{#1_{#2}}
\newcommand{\didx}[2]{#1^{(#2)}}
\renewcommand{\vec}[1]{\boldsymbol{#1}}
\newcommand{\pars}{\theta}
\newcommand{\parsn}{\vec{\pars}}
\newcommand{\parst}[1]{\tidx{\pars}{#1}}
\newcommand{\parsnt}[1]{\tidx{\parsn}{#1}}
\newcommand{\alphat}[1]{\tidx{\alpha}{#1}}
\newcommand{\yt}[1]{\tidx{\y}{#1}}
\newcommand{\constvec}[2]{\vec{#1}}
\newcommand{\0}[1]{\constvec{0}{#1}}
\newcommand{\1}[1]{\constvec{1}{#1}}
\newcommand{\yd}{y}
\newcommand{\ydd}[1]{\didx{\yd}{#1}}
\newcommand{\xdd}[1]{\didx{x}{#1}}
\newcommand{\parsdd}[1]{\didx{\pars}{#1}}
\newcommand{\oh}[2]{\mathbf{e}_{#1}}
\newcommand{\ds}[1]{\{1,#1\}}
\newcommand{\dsd}[2]{\ds{#1}^{#2}}
\newcommand{\ui}[1]{U\ds{#1}}
\def\net{\Psi\xspace}
\newcommand{\sender}[2]{p_{_S}\left(#1 \mid #2\right)}
\newcommand{\out}{p_{_O}}
\newcommand{\outn}{\vec{p}_{_O}}
\newcommand{\rec}{p_{_R}}
\newcommand{\inp}{p_{_I}}
\newcommand{\flow}{p_{_F}}
\newcommand{\update}{p_{_U}}
\newcommand{\pred}[1]{\hat{#1}}
\newcommand{\eps}{\vec{\pred{\epsilon}}}
\begin{document}
\title{\textsc{Bayesian Flow Networks}}
\author{Alex Graves, Rupesh Kumar Srivastava, Timothy Atkinson, Faustino Gomez}
\date{
\vspace{-6pt}
\texttt{\{alex,rupesh,timothy,tino\}@nnaisense.com}\\
\vspace{6pt}
NNAISENSE
}
\maketitle

\begin{abstract}
This paper introduces \emph{Bayesian Flow Networks} (BFNs), a new class of generative model in which the parameters of a set of independent distributions are modified with Bayesian inference in the light of noisy data samples, then passed as input to a neural network that outputs a second, interdependent distribution.
Starting from a simple prior and iteratively updating the two distributions yields a generative procedure similar to the reverse process of diffusion models; however it is conceptually simpler in that no forward process is required.
Discrete and continuous-time loss functions are derived for continuous, discretised and discrete data, along with sample generation procedures.
Notably, the network inputs for discrete data lie on the probability simplex, and are therefore natively differentiable, paving the way for gradient-based sample guidance and few-step generation in discrete domains such as language modelling.
The loss function directly optimises data compression and places no restrictions on the network architecture.
In our experiments BFNs achieve competitive log-likelihoods for image modelling on dynamically binarized MNIST and CIFAR-10, and outperform all known discrete diffusion models on the text8 character-level language modelling task\footnote{Code and trained models can be found at \url{https://github.com/nnaisense/bayesian-flow-networks}}.
\end{abstract}
\section{Introduction}
Large-scale neural networks have revolutionised generative modelling over the last few years, with an unprecedented ability to capture complex relationships among many variables. 
Building a convincing joint model of all the pixels in a high resolution image, for example, was impossible before the advent of modern generative networks.

Key to the expressive power of most of these networks --- including autoregressive models e.g.~\citep{sutskever2011generating,graves2013generating}, flow-based models~\citep{rezende2015variational}, deep VAEs~\citep{vahdat2020nvae} and diffusion models~\citep{sohl2015deep} --- is that the joint distribution they encode is broken down into a series of steps, thereby eluding the ``curse of dimensionality'' that would doom any effort to explicitly define all the interactions among so many variables.
In colloquial terms they solve a hard problem by splitting it into easy pieces.

A general way to view such distributions is as an exchange of messages between a sender, Alice, who has access to some data, and her friend Bob, who wishes to receive it in as few bits as possible. 
At each step Alice sends a message to Bob that reveals something about the data.
Bob attempts to guess what the message is: the better his guess the fewer bits are needed to transmit it.
After receiving the message, Bob uses the information he has just gained to improve his guess for the next message.
The loss function is the total number of bits required for all the messages.

In an autoregressive language model, for example, the messages are the word-pieces the text is divided into. 
The distribution encoding Bob’s prediction for the first message is of necessity uninformed: a zero-gram prior based on the relative frequencies of different word-pieces. 
The transmission cost is the negative log-probability under this prior. 
Bob then uses the first word-piece to predict the second; on average, the second prediction will be slightly more informed than the first, and the expected transmission cost will be slightly lower. 
The process repeats with the predictions improving at each step. 
The sum of the transmission costs is the negative log-probability of the complete text sequence, which is the loss function minimised by maximum likelihood training.  
It is also the minimum number of bits that would be required for Alice to transmit the pieces to Bob using arithmetic coding~\citep{witten1987arithmetic}. 
There is therefore a direct correspondence between fitting an autoregressive model with maximum likelihood and training it for data compression.

Autoregressive networks are currently state-of-the-art for language modelling~\citep{openai2023gpt4}, and in general perform well on discrete data where a natural ordering exists.
However they have proved less effective in domains such as image generation, where the data is continuous and no natural order exists among variables (e.g. there is no reason to generate one pixel before another).
They also have the drawback that generating samples requires as many network updates as there are variables in the data.

Diffusion models are an alternative framework that has proved particularly effective for image generation~\cite{dhariwal2021diffusion,rombach2022high}.
In this case the transmission procedure is a little more complex\footnote{We are here describing the reverse process of diffusion models.}.
Each message Bob receives is a noisy version of the message before, where the noise is designed so that in expectation the messages approach the data.
The transmission cost at each step is the Kullback-Leibler divergence between the distribution from which Alice draws the message and Bob's prediction of that distribution (which is a reparameterisation of his prediction of the data, and which is therefore improved by the information he gained from the previous message).
The sum of the KL divergences is the \emph{evidence lower bound} minimised by diffusion training~\citep{sohl2015deep}; it is also the expected number of bits needed to transmit the data using an efficient bits-back coding scheme~\citep{Wallace1991ClassificationBM,hinton1993keeping}. 
Once again there is an exact equivalence between the loss function used to train the model and the model’s ability to compress data, as elucidated by previous authors~\citep{townsend2019practical}.

We posit that the superiority of diffusion over autoregression for image generation lies in the way diffusion progresses from coarse to fine image details as the level of noise decreases --- a more natural way to construct an image than one dot at a time.
However diffusion has yet to match autoregression for discrete data, which is unfortunate, as diffusion models have the advantage of decoupling the number of generation steps from the number of variables.
A fundamental challenge is that when the data is discrete, the noise in the diffusion process is also discrete, and therefore discontinuous.
To return to the transmission metaphor, if the data is a piece of text, then Bob begins the process with a totally garbled text, every symbol of which is either randomly altered or left unchanged by each of Alice's messages.
A key motivation for this work was our belief that a fully continuous transmission process --- where Alice's messages smoothly alter Bob's beliefs --- would be more effective for discrete data.
Moreover this should open the door to gradient-based sample guidance~\citep{dhariwal2021diffusion} and few-step generation techniques~\citep{salimans2022progressive,watson2022learning,song2023consistency}, similar to those that have been developed for continuous diffusion.

\begin{figure}[t!] 
    \includegraphics[width=\textwidth]{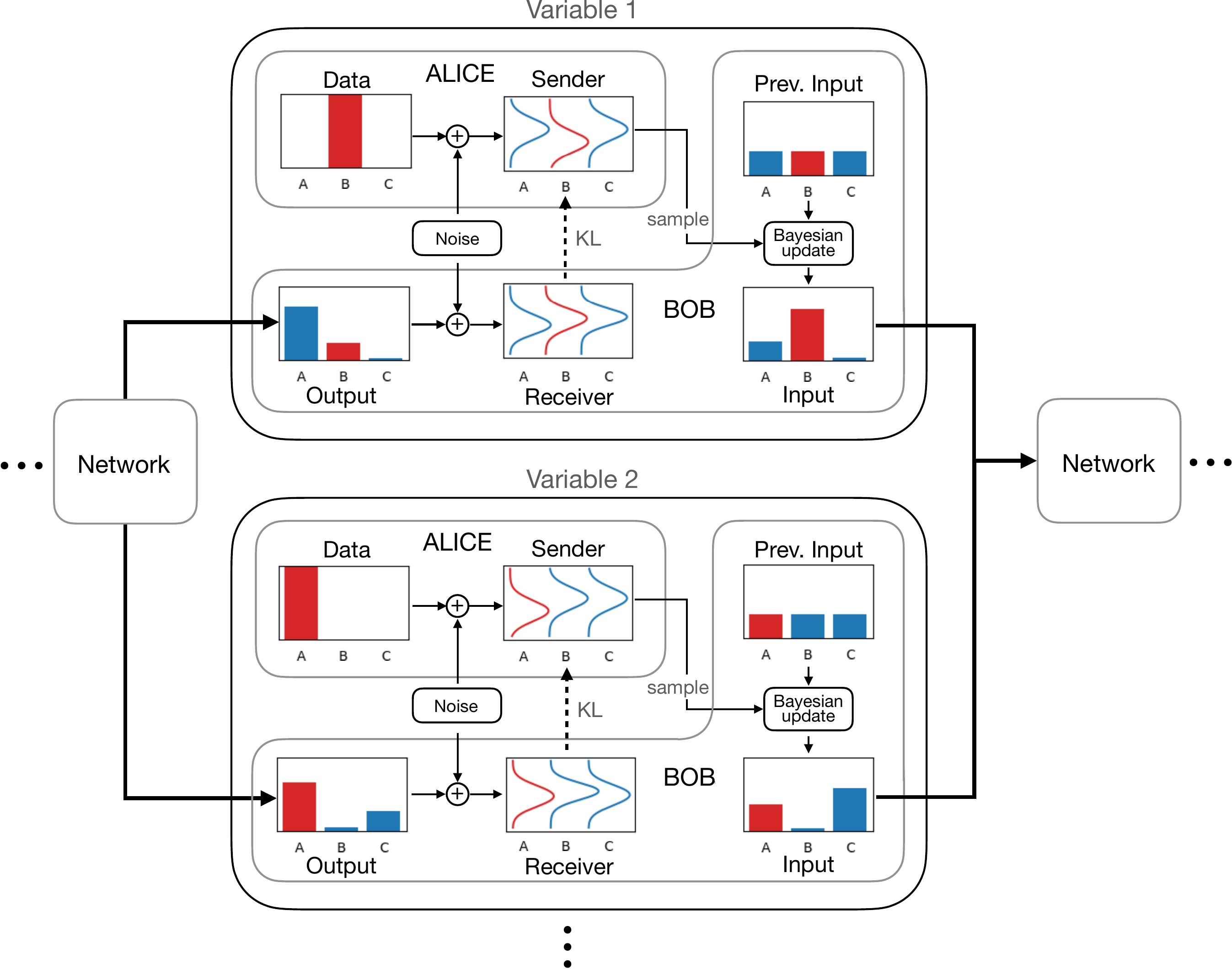}
\caption{\textbf{System Overview}. The figure represents one step of the modelling process of a Bayesian Flow Network. The data in this example is a ternary symbol sequence, of which the first two variables (`B' and `A') are shown.  At each step the network emits the parameters of the output distribution based on the parameters of the previous input distribution.  The sender and receiver distributions (both of which are continuous, even when the data is discrete) are created by adding random noise to the data and the output distribution respectively.  A sample from the sender distribution is then used to update the parameters of the input distribution, following the rules of Bayesian inference. Conceptually, this is the message sent by Alice to Bob, and its contribution to the loss function is the KL divergence from the receiver to the sender distribution.}
\label{fig:overview}
\end{figure}

\emph{Bayesian Flow Networks} (BFNs), the model introduced in this paper, differ from diffusion models in that the network operates on the parameters of a data distribution, rather than on a noisy version of the data itself.
This ensures that the generative process is fully continuous and differentiable, even when the data is discrete.
BFNs can be summarised by the following transmission scheme (Figure~\ref{fig:overview}).
Bob has an ``input distribution'' which is initially a simple prior: a standard normal for continuous data, a uniform categorical for discrete data.
At each transmission step he feeds the parameters of the input distribution (e.g. the mean of a normal distribution, the probabilities of a categorical distribution) into a neural network.
The network outputs the parameters of a second distribution referred to as the ``output distribution''.
Alice then creates a ``sender distribution'' by adding noise to the data according to a predefined schedule, and Bob creates a ``receiver distribution'' by convolving the output distribution with the same noise distribution used by Alice: intuitively, for every value the data could take on, Bob constructs the sender distribution Alice would have used if that value was correct, then sums over all these hypothetical sender distributions, weighted by the probability of the corresponding value under the output distribution.
Alice picks a sample from the sender distribution and sends it to Bob at a cost equal to the KL divergence from receiver to sender.
Bob then uses the sample to update his input distribution, following the rules of Bayesian inference.
Usefully, the Bayesian updates are available in closed-form as long as the input distribution models all the variables in the data independently.
Once the update is complete, Bob again feeds the parameters of the input distribution to the network which returns the parameters of the output distribution.
The process repeats for $n$ steps, at which point Bob can predict the data accurately enough that Alice can send it to him without any noise.

Note the key difference between the input and output distributions: the input distribution receives information about each variable in the data independently (via the Bayesian updates), and is therefore unable to exploit contextual information, such as neighbouring pixels in an image or related words in a text; the output distribution, on the other hand, is produced by a neural network that jointly processes all the parameters in the input distribution, giving it access to all available context.
Intuitively, the combination of the input and output distributions represents a division of labour between Bayesian inference and deep learning that plays to both of their strengths: the former provides a mathematically optimal and finely controllable way to collect and summarise information about individual variables, while the latter excels at integrating information over many interrelated variables.

The above transmission process defines an $n$-step loss function that can be generalised to continuous time by sending $n$ to $\infty$.
In continuous time the Bayesian updates become a \emph{Bayesian flow} of information from the data to the network.
As well as removing the need to predefine the number of steps during training, the continuous-time loss function is mathematically simpler and easier to compute than the discrete-time loss.
A BFN trained with continuous-time loss can be run for any number of discrete steps during inference and sampling, with performance improving as the number of steps increases.

The rest of the paper is structured as follows. 
A short summary of related work is given in Section~\ref{sec:related}.
The basic framework of BFNs, along with a general derivation of the discrete and continuous time loss functions is provided in Section~\ref{sec:bfn}.
Specialisations of the framework to continuous, discretised and discrete data are provided in Sections~\ref{sec:cts}--\ref{sec:discrete}, along with pseudocode for training, evaluating and sampling from the network.
Experimental results on the CIFAR-10, dynamically binarized MNIST and text8 datasets are provided in Section~\ref{sec:experiments} and concluding remarks are given in Section~\ref{sec:conclusion}.

\section{Related Work}\label{sec:related}
Of existing methods, Bayesian Flow Networks are most closely related to diffusion models.
However the two differ in some crucial aspects.
Most obviously BFNs embody a function from one distribution to another --- rather than from data to a distribution, like diffusion models and most other probabilistic networks. 
One advantage of this approach is that, because the parameters of a categorical distribution are real-valued probabilities, the inputs to the network are continuous even when the data is discrete. 
This contrasts with discrete diffusion, which natively uses discrete samples as input~\citep{sohl2015deep,hoogeboom2021,austin2021d3pm}.

Numerous authors have proposed continuous variants of discrete diffusion.
Typically these rely either on mapping to and from a continuous embedding space~\citep{strudel2022self,li2022diffusionlm,dieleman2022continuous,chen2022analog}, or on restricting continuous diffusion to the probability simplex~\citep{richemond2022categorical,mahabadi2023tess,lou2023reflected,han2023ssd}.
While we do not directly compare against the above methods, we note that continuity is an inherent property of the Bayesian Flow framework (the network inputs automatically lie on the probability simplex by virtue of being the parameters of a categorical distribution), rather than a constraint added to an existing system.
As well as reducing the number of free parameters and design choices (e.g. the continuous embedding space, the mapping functions), this ensures that BFNs directly optimise the negative log-likelihood of discrete data, unlike continuous diffusion methods for discrete data, which typically require either simplified loss functions~\citep{mahabadi2023tess} or auxiliary loss terms~\citep{li2022diffusionlm} to make learning stable.

For continuous data, BFNs are most closely related to variational diffusion models~\citep{kingma2021variational}, with a very similar continuous-time loss function.
The main difference in this case is that the network inputs are considerably less noisy in BFNs than in variational diffusion and other continuous diffusion models. 
This is because the generative process of BFNs begins with the parameters of a fixed prior, whereas that of diffusion models begins with pure noise.
We hypothesise that the reduction in noise could lead to faster learning on large datasets where the model underfits; however we have yet to test this hypothesis experimentally.

Another key difference from diffusion models is that there is no need to define and invert a forward process for BFNs, which arguably makes it easier to adapt them to different distributions and data types.
We showcase this flexibility by adapting BFNs to continuous, discretised and discrete data, with minimal changes to the training procedure.
This contrasts with e.g.\ discretised diffusion, which requires carefully defined transition matrices~\citep{austin2021d3pm}.
\section{Bayesian Flow Networks}\label{sec:bfn}
This section covers the basic mathematical formalism of Bayesian Flow Networks, laying out the structure of the various functions and distributions required by the model, along with the discrete and continuous-time loss functions used for training.
Specific instantiations of the general framework for continuous, discretised and discrete data are given in Sections~\ref{sec:cts}--\ref{sec:discrete}. 
\subsection{Input and Sender Distributions}
Given $D$-dimensional data $\x = \left(\didx{x}{1},\dots,\didx{x}{D}\right) \in \X^D$, let $\parsn = \left(\parsdd{1},\dots,\parsdd{D}\right)$ be the parameters of a factorised \emph{input distribution} $\inp(\cdot \mid \parsn)$, with
\begin{align}
\inp(\x \mid \parsn) = \prod_{d=1}^D \inp(\didx{x}{d} \mid \parsdd{d}).
\end{align}
For example, $\parsdd{d}$ may consist of the probabilities of a categorical distribution. 
Let $\sender{\cdot}{\x;\alpha}$ be a similarly factorised \emph{sender distribution} with $\y =\left(\didx{y}{1},\dots,\didx{y}{D}\right) \in \Y^D$ and
\begin{align}
\sender{\y}{\x;\alpha} = \prod_{d=1}^D \sender{\didx{y}{d}}{\didx{x}{d}; \alpha},
\end{align}
where $\alpha \in \R^+$ is an \emph{accuracy} parameter defined such that when $\alpha=0$, the sender samples are entirely uninformative about $\x$ and as $\alpha$ increases the samples become progressively more informative.
\subsection{Output Distribution \texorpdfstring{$\out(\cdot \mid \parsn, t)$}{}}
During the data transmission process, the input parameters $\parsn$ are passed along with the process time $t$ as input to a neural network $\net$.
The network then emits an output vector $\net(\parsn, t) = \left(\didx{\net}{1}(\parsn, t),\dots,\didx{\net}{D}(\parsn, t)\right)$ which is used to parameterise an \textit{output distribution} factorised in the same way as the input and sender distributions:
\begin{align}
\out(\x \mid \parsn, t) = \prod_{d=1}^D \out(\didx{x}{d} \mid \didx{\net}{d}(\parsn, t)).
\end{align}
As discussed in the introduction, the key difference between the input and output distributions is that while each $\inp(\didx{x}{d} \mid \parsdd{d})$ depends only on information gathered via $\sender{\didx{y}{d}}{\didx{x}{d};\alpha}$ about $\didx{x}{d}$, each $\out(\didx{x}{d} \mid \didx{\net}{d}(\parsn, t))$ depends (via the network) on all of $\parsn$ and hence all of $\x$.
The output distribution, unlike the input distribution, can therefore exploit context information, such as surrounding pixels in an image or related words in a text.
\subsection{Receiver Distribution \texorpdfstring{$\rec(\cdot \mid \parsn; t, \alpha)$}{}}
Given sender distribution $\sender{\cdot}{\x; \alpha}$ and output distribution $\out(\cdot \mid \parsn, t)$ the \emph{receiver distribution} over $\Y^D$ is defined as
\begin{align}
\rec(\y \mid \parsn; t, \alpha) &= \E_{\out(\x' \mid \parsn; t)}\sender{\y}{\x'; \alpha}.\label{r_dist}
\end{align}
Intuitively this can be understood as a receiver who knows the form of the sender distribution $\sender{\cdot}{\x ; \alpha}$ but does not know $\x$, and therefore integrates over all $\x' \in \X^D$, and hence all possible sender distributions, weighted by the probability  given to $\x'$ by the output distribution $\out(\x \mid \parsn, t)$.
The receiver distribution therefore combines two sources of uncertainty: the ``known unknown'' of the sender distribution entropy (which is a function of $\alpha$), and the ``unknown unknown'' of the output distribution entropy.
\subsection{Bayesian Updates}
Given parameters $\parsn$ and sender sample $\y$ drawn with accuracy $\alpha$ the \emph{Bayesian update function} $h$ is derived by applying the rules of Bayesian inference to compute the updated parameters $\parsn'$: 
\begin{align}
\parsn' \leftarrow h(\parsn, \y, \alpha).
\end{align} 
The \emph{Bayesian update distribution} $\update(\cdot \mid \parsn, \x; \alpha)$ is then defined by marginalizing out $\y$:
\begin{align}
\update(\parsn' \mid \parsn, \x; \alpha) = \E_{\sender{\y}{\x;\alpha}} \delta \left(\parsn' -h(\parsn, \y, \alpha) \right),\label{param_update_dist}
\end{align}
where $\delta \left(\cdot -\vec{a}\right)$ is the multivariate Dirac delta distribution centred on the vector $\vec{a}$.
In Sections~\ref{sec:cts_additive} and \ref{sec:disc_additive} we will prove that both forms of $\update(\cdot \mid \parsn, \x; \alpha)$ considered in this paper have the following property: the accuracies are additive in the sense that if $\alpha = \alpha_a + \alpha_b$ then
\begin{align}
\update(\parsn'' \mid \parsn, \x; \alpha) = \E_{\update(\parsn' \mid \parsn, \x; \alpha_a)} \update(\parsn'' \mid \parsn', \x; \alpha_b)\label{additive}.
\end{align}
It follows from this property that given prior input parameters $\parsnt{0}$, the probability of observing parameters $\parsnt{n}$ after drawing a sequence of $n$ sender samples $\yt{1},\dots,\yt{n}$ with accuracies $\alpha_1,\dots,\alpha_n$ is
\begin{align}
\E_{\update(\parsnt{1}\mid\parsnt{0},\x;\alphat{1})}\E_{\update(\parsnt{2}\mid\parsnt{1},\x;\alphat{2})}\dots\E_{\update(\parsnt{n-1}\mid\parsnt{n-2},\x;\alphat{n-1})}\update(\parsnt{n} \mid \parsnt{n-1},\x;\alphat{n} ) = \update\left(\parsnt{n} \mid \parsnt{0}, \x; \sum_{i=1}^n \alphat{i}\right).
\label{updateseq}
\end{align}
\subsection{Accuracy Schedule \texorpdfstring{$\beta(t)$}{}}
By performing an infinite number of transmission steps, the Bayesian update process can be generalized to continuous time.
Let $t \in [0,1]$ be the process \textit{time} and let $\alpha(t) > 0$ be the \emph{accuracy rate} at time $t$.
Now define the \emph{accuracy schedule} $\beta(t)$ as 
\begin{align}
\beta(t) = \int_{t'=0}^{t} \alpha(t') dt'.
\end{align}
It follows from the above definitions that $\beta(t)$ is a monotonically increasing function of $t$, that $\beta(0) = 0$, and that $\frac{d\beta(t)}{dt} = \alpha(t)$.

Specific forms of $\beta(t)$ for continuous and discrete data are provided in Sections~\ref{sec:cts_beta} and \ref{sec:disc_beta}. 
Both are derived using simple heuristics, with a deeper investigation left for future work.
\subsection{Bayesian Flow Distribution \texorpdfstring{$\flow(\cdot \mid \x ; t)$}{}}
Given prior parameters $\parsnt{0}$, Bayesian update distribution $\update(\cdot \mid \parsn, \x; \alpha)$ and accuracy schedule $\beta(t)$, the \emph{Bayesian flow distribution} $\flow(\cdot \mid \x ; t)$ is the marginal distribution over input parameters at time $t$, defined by
\begin{align}
\flow(\parsn \mid \x ; t) = \update(\parsn \mid \parsnt{0}, \x; \beta(t))\label{param_flow_dist}.
\end{align}
\subsection{Loss Function \texorpdfstring{$L(\x)$}{}}
Given prior parameters $\parsnt{0}$ and accuracy schedule $\beta(t)$, consider a sequence of $n$ sender samples $\yt{1},\dots,\yt{n}$ sampled at times $t_1,\dots,t_n$ where $t_i = i/n$. The sender distribution at step $i$ is $\sender{\cdot}{\x; \alpha_i}$
where
\begin{align}
\alphat{i} &= \beta(t_i) - \beta(t_{i-1}),\label{alpha_i}
\end{align}
the receiver distribution at step $i$ is $\rec(\cdot \mid \parsnt{i-1}; t_{i-1}, \alphat{i})$, 
and the input parameter sequence $\parsnt{1},\dots,\parsnt{n}$ is recursively calculated from
\begin{align}
\parsnt{i} = h(\parsnt{i-1}, \y, \alphat{i}).
\end{align}
Define the $n$-step \textit{discrete-time loss} $L^n(\x)$ as the expected number of nats required to first transmit $\yt{1},\dots,\yt{n}$, and the \textit{reconstruction loss} $L^r(\x)$ as the expected number of nats required to then transmit $\x$.
Since --- using a bits-back coding scheme~\citep{hinton1993keeping, duda2009asymmetric} --- it requires $\kl{p_{_S}}{\rec}$ nats to transmit a sample from $p_{_S}$ to a receiver with $\rec$,
\begin{align}
L^n(\x) = \E_{p(\parsnt{1},\dots,\parsnt{n-1})}\sum_{i=1}^n \kl{\sender{\cdot}{\x ; \alphat{i}}}{\rec(\cdot \mid \parsnt{i-1} ; t_{i-1}, \alphat{i})}\label{disc_t_loss_n_step},
\end{align}
where
\begin{align}
p(\parsnt{1},\dots,\parsnt{n}) = \prod_{i=1}^{n} \update(\parsnt{i}\mid \parsnt{i-1}, \x ; \alphat{i}),
\end{align}
and since the number of nats needed to transmit $x$ using an arithmetic coding scheme~\citep{witten1987arithmetic} based on $p(x)$ is $-\ln p(x)$, and the marginal probability of $\parsnt{n}$ is given by $\flow(\cdot \mid \x, 1)$,
\begin{align}
L^r(\x) = -\E_{\flow(\parsn \mid \x, 1)} \ln \out(\x \mid \parsn; 1).
\end{align}
Note that $L^r(\x)$ is not directly optimised in this paper; however it is indirectly trained by optimising $L^n(\x)$ since both are minimised by matching the output distribution to the data.
Furthermore, as long as $\beta(1)$ is high enough, the input distribution at $t=1$ will be very close to $\x$, making it trivial for the network to fit $\out(\x \mid \parsn; 1)$.

The loss function $L(\x)$ is defined as the total number of nats required to transmit the data, which is the sum of the n-step and reconstruction losses:
\begin{align}
L(\x) = L^n(\x) + L^r(\x)
\end{align}
Alternatively  $L(\x)$ can be derived as the loss function of a variational autoencoder (VAE;~\citep{kingma2013auto}). Consider the sequence $\yt{1},\dots,\yt{n}$ as a latent code with posterior probability given by
\begin{align}
q(\yt{1},\dots,\yt{n}) = \prod_{i=1}^n \sender{\y_i}{\x; \alpha_i},
\end{align}
and autoregressive prior probability given by
\begin{align}
p(\yt{1},\dots,\yt{n}) = \prod_{i=1}^n \rec(\y_i \mid \parsnt{i-1}; t_{i-1}, \alphat{i}).
\end{align}
Then, noting that the decoder probability $p(\x \mid \yt{1},\dots,\yt{n}) = \out(\x \mid \parsnt{n}; 1)$, the complete transmission process defines a VAE with loss function given by the negative variational lower bound (VLB)
\begin{align}
L(\x) = - \text{VLB}(\x) &= \kl{q}{p} - \E_{\yt{1},\dots,\yt{n} \sim q} \ln p(\x \mid \yt{1},\dots,\yt{n})\label{vae_loss}\\
&=L^n(\x) + L^r(\x).
\end{align}
\subsection{Discrete-Time Loss \texorpdfstring{$L^{n}(\x)$}{}}
Eq.~\ref{disc_t_loss_n_step} can be rewritten as
\begin{align}
L^{n}(\x) = n\E_{i \sim \ui{n}} \E_{\update(\parsnt{1} \mid \parsnt{0}, \x ; \alphat{1})}\dots\E_{\update(\parsn \mid \parsnt{i-2}, \x ; \alphat{i-1})} \kl{\sender{\cdot}{\x ; \alphat{i}}}{\rec(\cdot \mid \parsn ; t_{i-1}, \alphat{i})},
\end{align}
where $\ui{n}$ is the uniform distribution over the integers from 1 to $n$.
Furthermore, it follows from Eqs.~\ref{updateseq} and ~\ref{param_flow_dist} that
\begin{align}
\E_{\update(\parsnt{1} \mid \parsnt{0}, \x ; \alphat{1})}\dots\E_{\update(\parsn \mid \parsnt{i-2}, \x ; \alphat{i-1})} &= \E_{\update(\parsn \mid \parsnt{0}, \x ; \beta(t_{i-1}))}\\
&= \E_{\flow(\parsn \mid \x ; t_{i-1})},
\end{align}
and hence
\begin{align}
L^{n}(\x) = n \E_{i \sim \ui{n}, \flow(\parsn \mid \x ; t_{i-1})} \kl{\sender{\cdot}{\x ; \alphat{i}}}{\rec(\cdot \mid \parsn; t_{i-1}, \alphat{i})}\label{disc_t_loss_exp},
\end{align}
which allows us approximate $L^{n}(\x)$ via Monte-Carlo sampling without computing the $n$-step sum.
\subsection{Continuous-Time Loss \texorpdfstring{$L^{\infty}(\x)$}{}}
Eq.~\ref{disc_t_loss_exp} can be used to train the network directly.
However this presupposes that $n$ is fixed during training.
Furthermore, for discrete and discretised data the KL terms do not have analytic solutions, leading to noisy gradient estimates.

Inspired by Variational Diffusion Models~\cite{kingma2021variational} we derive a continuous-time loss function $L^{\infty}(\x)$ by taking the limit of $L^{n}(\x)$ as $n \rightarrow \infty$.
This turns out to be mathematically simpler than the discrete-time loss, as well as removing both the noisy gradients for the discrete and discretised KL terms and the need to fix $n$ during training.

Let
\begin{align}
\epsilon &\defeq \frac{1}{n},\\
\alpha(t, \epsilon) &\defeq \beta(t) - \beta(t-\epsilon),\label{deltat}\\
L^{\infty}(\x) &\defeq \lim_{n\rightarrow\infty}L^n(\x).
\end{align}
Then, from the definition of $L^n(\x)$ in Eq.~\ref{disc_t_loss_exp},
\begin{align}
L^{\infty}(\x) = \lim_{\epsilon \rightarrow 0} \frac{1}{\epsilon} \E_{t \sim U(\epsilon,1), \flow(\parsn \mid \x, t-\epsilon)} \kl{\sender{\cdot}{\x; \alpha(t, \epsilon)}}{\rec(\cdot \mid \parsn; t-\epsilon, \alpha(t, \epsilon))},
\end{align}
where $U(a,b)$ is the continuous uniform distribution over the interval $[a,b]$.
As we will see, for all the sender, receiver distribution pairs in this paper,
\begin{align}
\kl{\sender{\cdot}{\x; \alpha}}{\rec(\cdot \mid \parsn; \alpha, t)} = \sum_{d=1}^D\kl{\N{g(\xdd{d})}{C\alpha^{-1}}}{P^{(d)}(\parsn, t) \ast \N{0}{C\alpha^{-1}}}\label{convkl},
\end{align}
where $g: \X \rightarrow \Y$ is a function from data space to sender space, $P^{(d)}(\parsn, t)$ is a distribution over $\Y$ with finite expectation and variance, $\ast$ denotes the convolution of two probability distributions and $C$ is a scalar constant.

The following proposition is now required:
\begin{proposition}\label{proposition}
For a continuous univariate probability distribution $P$ with finite expectation $E[P]$ and variance $Var[P]$, the convolution $P \ast \N{0}{\sigma^2} \rightarrow \N{E[P]}{\sigma^2}$ as $\sigma^2 \rightarrow \infty$.
\end{proposition}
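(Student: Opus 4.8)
The plan is to work entirely with probability densities. Since $P$ is a continuous distribution, let $p$ be its density with respect to Lebesgue measure and write $\mu = E[P]$. The convolution $P \ast \N{0}{\sigma^2}$ is the law of $X + Z$ with $X \sim P$ and $Z \sim \N{0}{\sigma^2}$ independent, so its density at $y$ is $\int_{\R} p(x)\,\phi_\sigma(y-x)\,dx$ with $\phi_\sigma$ the $\N{0}{\sigma^2}$ density. The cleanest way to make sense of ``$P \ast \N{0}{\sigma^2} \to \N{\mu}{\sigma^2}$'' --- and the form in which it is used to collapse Eq.~\ref{convkl} to an analytic Gaussian--Gaussian KL --- is to show that the \emph{ratio} of the convolution density to the $\N{\mu}{\sigma^2}$ density tends to $1$ at every fixed $y$ as $\sigma^2 \to \infty$. (Pointwise convergence of the densities themselves would be vacuous, since the shared normalising factor $1/\sqrt{2\pi\sigma^2}$ drives both to $0$.)

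First I would form this ratio explicitly; the two Gaussian normalising constants cancel, leaving
\[
R(y,\sigma) \defeq \frac{(P \ast \N{0}{\sigma^2})(y)}{\N{\mu}{\sigma^2}(y)} = \int_{\R} p(x)\,\exp\!\left(\frac{(y-\mu)^2 - (y-x)^2}{2\sigma^2}\right) dx.
\]
Substituting $u = x-\mu$ and completing the square in $u$ rewrites the exponent as $\frac{(y-\mu)^2}{2\sigma^2} - \frac{(u-(y-\mu))^2}{2\sigma^2}$, so that for each fixed $x$ the integrand converges to $p(x)$ as $\sigma^2 \to \infty$. If the limit may be passed inside the integral, then $R(y,\sigma) \to \int_{\R} p(x)\,dx = 1$, which is the claim.

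Justifying that interchange is the step I expect to be the only genuine obstacle, because the naive bound on the exponent grows like $\exp(|x-\mu|\,|y-\mu|)$, which is integrable against $p$ only if $P$ has exponential moments --- something finite variance does not guarantee. The resolution is precisely the completed-square form above: discarding its manifestly non-positive term $-\frac{(u-(y-\mu))^2}{2\sigma^2}$ bounds the exponent by $\frac{(y-\mu)^2}{2\sigma^2}$ \emph{uniformly in} $x$. Hence for all $\sigma \ge 1$ the integrand is dominated by the fixed integrable function $p(x)\exp\!\big((y-\mu)^2/2\big)$ (at fixed $y$), and the dominated convergence theorem applies. I would close by recording the intuition that the convolution has mean exactly $\mu$ and variance $\sigma^2 + Var[P]$, so the two laws differ only through the fixed additive variance $Var[P]$, whose relative size $Var[P]/\sigma^2$ vanishes; the finiteness of $E[P]$ and $Var[P]$ in the hypotheses is what makes the limiting Gaussian well defined and its variance asymptotically matched.
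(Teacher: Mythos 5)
Your proof is correct, but it takes a genuinely different route from the paper's. The paper formalises the statement as convergence in distribution of the standardised convolution: it writes a sample from $P \ast \N{0}{n\epsilon^2}$ as $X_0 + \sum_{j=1}^n X_j$ with $X_0 \sim P$ and the $X_j$ i.i.d.\ $\N{0}{\epsilon^2}$, then applies the Lyapunov central limit theorem (with $\lambda = 1$, hence third absolute moments, plus an auxiliary restriction $\epsilon^2 < \pi/8$) to conclude that the centred, rescaled sum tends to $\N{0}{1}$. You instead formalise the statement as pointwise convergence of the density ratio $R(y,\sigma) \to 1$ at each fixed $y$, proved by completing the square and dominated convergence; your algebra and your domination bound (dropping the non-positive completed-square term, giving the envelope $p(x)\exp\left((y-\mu)^2/2\right)$ for $\sigma \ge 1$) are both correct. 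Your route is more elementary --- no CLT, no auxiliary sequence of Gaussian increments, no third-moment bookkeeping; indeed your argument never uses $Var[P]$ at all, only finiteness of $E[P]$ to centre the limit, whereas the paper's Lyapunov step silently assumes $E\left(|Y_0|^{3}\right)$ is finite, which finite mean and variance alone do not guarantee, so your hypotheses are also more robust. What each approach buys is different: the paper's gives a global, distributional statement at the natural scale $\sigma$ (writing $W_\sigma \sim P \ast \N{0}{\sigma^2}$, the law of $(W_\sigma - E[P])/\sigma$ tends to standard normal), whereas yours is a local statement at fixed $y$. Since the proposition is ultimately invoked to replace the receiver density inside a log-ratio in the KL of Eq.~\ref{convkl}, your density-ratio form is arguably closer to what the application needs; but to make that step fully rigorous one would need the ratio convergence to hold uniformly over the region carrying most of the sender's mass (i.e.\ over $|y - E[P]| = O(\sigma)$), which neither your proof nor the paper's establishes --- that gap belongs to the paper's informal use of the proposition, not to your argument.
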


\begin{proof}
Let $\epsilon^2$ be some variance in the interval $\left(0, \frac{\pi}{8}\right)$ and consider the sequence of random variables $X_0,X_1,\dots,X_n$ where $X_0 \sim P$ and $X_j \sim \N{0}{\epsilon^2}$ for $j > 0$. Define
\begin{align}
Y_j &\defeq \begin{cases}X_0 - E[P]&\text{if } j=0,\\ X_j &\text{ otherwise.}\end{cases}\\
R_n &\defeq \sum_{j=0}^n Y_j,\\
S^2_n &\defeq \sum_{j=1}^n Var[Y_j] = n \epsilon^2,\\
T^2_n &\defeq Var[P] + S^2_n.
\end{align}
It follows from the definition of convolution that $\sum_{j=0}^n X_j \sim P \ast \N{0}{n\epsilon^2}$. 
Since $n \epsilon^2 \rightarrow \infty$ as $n \rightarrow \infty$, and $\sum_{j=0}^n X_j =  R_n + E[P]$, the result is proved if it can be shown that as $n \rightarrow \infty$, $R_n \rightarrow \N{0}{n\epsilon^2}$ or equivalently $R_n/(\epsilon\sqrt{n}) \rightarrow \N{0}{1}$.

\sloppy The Lyapunov central limit theorem~\citep{georgii2008stochastics} states that if there exists $\lambda > 0$ such that $\lim_{n\rightarrow \infty}\frac{1}{T_n^{2+\lambda}}\sum_{j=0}^n E\left(|Y_j|^{2+\lambda}\right) = 0$ then $R_n/T_n \rightarrow \N{0}{1}$.
First note that $T_n^2 \rightarrow S_n^2 = n\epsilon^2$ as $n \rightarrow \infty$.
Hence if $R_n/T_n \rightarrow \N{0}{1}$ then $R_n/(\epsilon\sqrt{n}) \rightarrow \N{0}{1}$.
Now set $\lambda=1$ and observe that for $Y_j \sim \N{0}{\epsilon^2}$, $\E\left(|Y_j|^{3}\right)$ is the third moment of the half-normal distribution, which is $\epsilon^3\sqrt{\frac{8}{\pi}}$.
Our choice of $\epsilon^2$ therefore ensures that $E\left(|Y_j|^{3}\right) < \epsilon^2$ for $j > 0$.
Also note that $T_n^3 > S_n^3$ and, since $E[P]$ and $Var[P]$ are finite, $E\left(|Y_0|^{3}\right) < C$ for some constant $C$.
Hence
\begin{align}
\frac{1}{T_n^3}\sum_{j=0}^n E\left(|Y_j|^{3}\right) &< 
\frac{1}{S_n^{3}}\left(C + n\epsilon^2\right) = \frac{C}{\epsilon^3 n^{3/2}} + \frac{1}{\epsilon\sqrt{n}} \xrightarrow[]{n\rightarrow\infty}0.
\end{align}
\end{proof}
It follows from the continuity of $\beta(t)$ and Eq.~\ref{deltat} that $\alpha(t, \epsilon)^{-1} \rightarrow \infty$ as $\epsilon \rightarrow 0$.
Therefore, Proposition \ref{proposition} can be applied to Eq.~\ref{convkl} to yield
\begin{align}
\lim_{\epsilon \rightarrow 0} \kl{\sender{\cdot}{\x, \alphat{t}}}{\rec(\cdot \mid \parsn, \alphat{t}, t)} &= \sum_{d=1}^D\kl{\N{g(\xdd{d})}{\frac{C}{\alpha(t, \epsilon)}}}{\N{E[P^{(d)}(\parsn, t)]}{\frac{C}{\alpha(t, \epsilon)}}}\label{convkllim}\\
&= \frac{\alpha(t, \epsilon)}{2C} \left\|g(\x) - E[P(\parsn, t)]\right\|^2,
\end{align}
where 
\begin{align}
g(\x) = \left(g(\xdd{1}),\dots,g(\xdd{D})\right),\\
E[P(\parsn, t)] = \left(E[P^{(1)}(\parsn, t)],\dots,E[P^{(D)}(\parsn, t)]\right).
\end{align}
Therefore,
\begin{align}
L^{\infty}(\x) = \E_{t \sim U(0,1), \flow(\parsn \mid \x, t)} \lim_{\epsilon \rightarrow 0} \frac{\alpha(t, \epsilon)}{\epsilon} \frac{\left\|g(\x) -  E[P(\parsn, t)]\right\|^2}{2C}.
\end{align}
Substituting from Eq.~\ref{deltat},
\begin{align}
\lim_{\epsilon \rightarrow 0} \frac{\alpha(t, \epsilon)}{\epsilon} = \lim_{\epsilon \rightarrow 0}\frac{\beta(t)-\beta(t-\epsilon)}{\epsilon} = \frac{d \beta(t)}{d t} = \alpha(t),
\end{align}
and hence
\begin{align}
L^{\infty}(\x) &= \E_{t \sim U(0,1), \flow(\parsn \mid \x, t)} \alpha(t) \frac{\left\|g(\x) -  E[P(\parsn, t)]\right\|^2}{2C}.\label{cts_t_loss}
\end{align}
%
\subsection{Sample Generation}
Given prior parameters $\parsnt{0}$, accuracies $\alphat{1},\dots,\alphat{n}$ and corresponding times $t_i = i/n$, the n-step sampling procedure recursively generates $\parsnt{1},\dots,\parsnt{n}$ by sampling $\x'$ from $\out(\cdot \mid \parsnt{i-1}, t_{i-1})$, 
$\y$ from $\sender{\cdot}{\x', \alphat{i}}$ (meaning that $\y \sim \rec(\cdot \mid \parsnt{i-1}; t_{i-1}, \alphat{i})$ --- see Eq.~\ref{r_dist}), then setting 
$\parsnt{i} = h(\parsnt{i-1}, \y)$.
Given $\parsnt{n}$ the network is run one more time and the final sample is drawn from $ \out(\cdot \mid \parsnt{n}, 1)$.
\section{Continuous Data}\label{sec:cts}
For continuous data $\X = \R$ and hence $\x \in \R^D$.
In our experiments, $\x$ is normalised to lie in $[-1, 1]^D$ to ensure that the network inputs remain in a reasonable range; however this is not essential for the mathematical framework.
\subsection{Input Distribution \texorpdfstring{$\inp(\cdot \mid \parsn)$}{}}\label{sec:cts_input}
The input distribution for continuous data is a diagonal normal:
\begin{align}
\parsn &\defeq \{\m, \rho\}\\
\inp(\x \mid \parsn) &\defeq \N{\x \mid \m}{\rho^{-1}\I{D}},
\end{align}
where $\I{D}$ is the $D \times D$ identity matrix. 
We define the prior parameters as
\begin{align}
\parsnt{0} \defeq \{\0{D}, 1\},
\end{align}
where $\0{D}$ is the length $D$ vectors of zeros.
Hence the input prior is a standard multivariate normal:
\begin{equation}
\inp(\x \mid \parsnt{0}) = \N{\x \mid \0{D}}{\I{D}}.
\end{equation}
The usual Bayesian approach would be to fit the prior mean and variance to the training data. 
However we found that a standard prior worked better in practice, as well as simplifying the equations. 
It is important to remember that the distributions $\inp(\x \mid \parsnt{0})$ are never used directly to make predictions, but rather to inform the network's predictions. 
All that matters is that the parameters fed into the network accurately and accessibly encode the information received so far about $\x$. 
The network can easily learn the empirical prior of the training set and use that to correct its predictions.
\subsection{Bayesian Update Function \texorpdfstring{$h(\parsnt{i-1}, \y, \alpha)$}{}}
Given a univariate Gaussian prior $\N{\mu_a}{\pt{a}^{-1}}$ over some unknown data $x$ it can be shown~\citep{murphy2007conjugate} that the Bayesian posterior after observing a noisy sample $y$ from a normal distribution $\N{x}{\alpha^{-1}}$ with known precision $\alpha$ is $\N{\mu_b}{\pt{b}^{-1}}$, where
\begin{align}
\pt{b} &= \pt{a} + \alpha\label{alpha_update},\\
\mu_b &= \frac{\mu_a \pt{a}  + y \alpha}{\pt{b}}\label{mean_update}.
\end{align}
Since both $\inp(\x \mid \parsn)$ and $\sender{\y}{\x; \alpha}$ distributions are normal with diagonal covariance,  Eqs.~\ref{alpha_update} and \ref{mean_update} can be applied to obtain the following Bayesian update function for parameters $\parsnt{i-1} = \{\mt{i-1}, \pt{i-1}\}$ and sender sample $\y$ drawn from $\sender{\cdot}{\x; \alpha \I{D}} = \N{\x}{\alpha^{-1}\I{D}}$:
\begin{align}
h(\{\mt{i-1}, \pt{i-1}\}, \y, \alpha) = \{\mt{i}, \pt{i}\},
\end{align}
with
\begin{align}
\pt{i} &= \pt{i-1} + \alpha\label{cts_precision_y_update},\\
\mt{i} &= \frac{\mt{i-1} \pt{i-1} + \y \alpha}{\pt{i}}.\label{cts_mean_y_update}
\end{align}
\begin{figure}[t!]
\includegraphics[width=\textwidth]{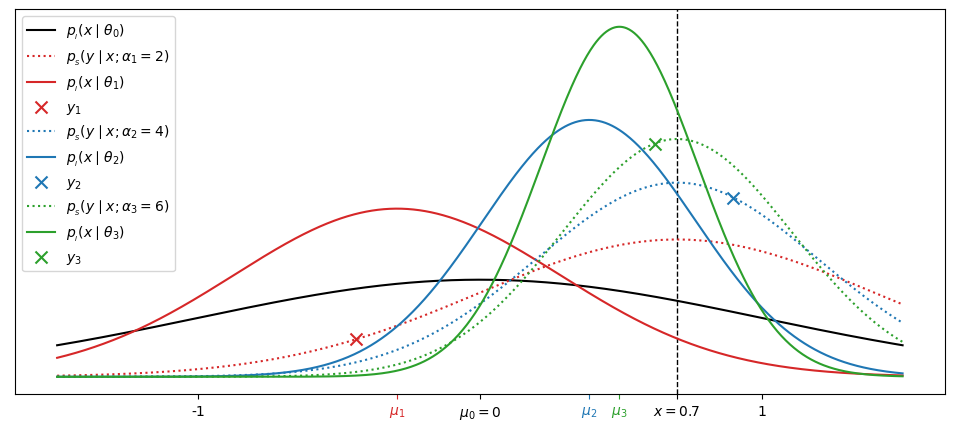}
\caption{\textbf{Bayesian updates for continuous data}. For univariate data $x=0.7$, the initial input distribution parameters $\theta_0 = \{\mu_0=0, \rho_0=1\}$ are updated to $\theta_1=\{\mu_1, \rho_1\}$, $\theta_2=\{\mu_2, \rho_2\}$, $\theta_3=\{\mu_3, \rho_3\}$ by iterating Eqs.~\ref{cts_precision_y_update} and \ref{cts_mean_y_update} with sender samples $y_1$, $y_2$, $y_3$ drawn with accuracies $2$, $4$, $6$ respectively. Note how the input mean ($\mu_1$, $\mu_2$, $\mu_3$) stochastically approaches the data, while the input precision smoothly increases.}
\end{figure}
\subsection{Bayesian Update Distribution \texorpdfstring{$\update(\cdot \mid \parsn, \x; \alpha)$}{}}
Eq.~\ref{cts_mean_y_update} computes $\mt{i}$ given a single sample $\y$ from the sender distribution.
To marginalise over $\y \sim \N{\y \mid \x}{\alpha^{-1}\I{D}}$ as defined in Eq.~\ref{param_update_dist}, the following standard identity for normal distributions can be applied:
\begin{align}
X \sim \N{\mu_X}{\sigma_X^2} \implies aX + b \sim \N{a\mu_X + b}{a^2\sigma_X^2}\  \forall a, b \in \R.\label{normal_identity_1}
\end{align}
Substituting $X=\y$, $\mu_X=\x$, $\sigma^2_X=\alpha^{-1}\I{D}$, $a=\frac{\alpha}{\pt{i}}$ and $b=\frac{\mt{i-1}\pt{i-1}}{\pt{i}}$, Eq.~\ref{cts_mean_y_update} gives:
\begin{align}
\mt{i} \sim \N{\frac{\alpha \x + \mt{i-1}\pt{i-1}}{\pt{i}}}{\frac{\alpha}{\pt{i
}^2}\I{D}},\label{cts_input_mean_distribution}
\end{align}
and therefore (since $\mt{i}$ is the only random part of $\parsnt{i}$)
\begin{align}
\update(\parsnt{i} \mid \parsnt{i-1}, \x; \alpha) = \N{\mt{i} \mid \frac{\alpha \x + \mt{i-1}\pt{i-1}}{\pt{i}}}{\frac{\alpha}{\pt{i
}^2}\I{D}}.\label{cts_update_dist}
\end{align}
\begin{figure}[t]
\includegraphics[width=\textwidth]{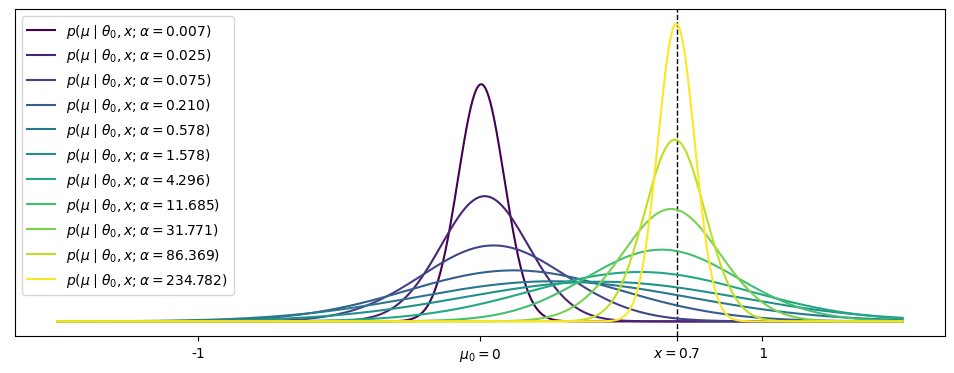}
\caption{\textbf{Bayesian update distribution for continuous data}. For $x=0.7$, the plot shows the distribution $p(\mu \mid \theta_0, x; \alpha)$ over input mean $\mu$ from Eq.~\ref{cts_input_mean_distribution} given initial parameters $\mu_0=0, \rho_0=1$ and 11 $\alpha$ values spaced log-linearly between $e^{-5}$ and $e^5$. Note how the distribution is tightly concentrated around $\mu_0$ for very low alpha, then smoothly progresses to a tight concentration around $x$ for high alpha.}
\end{figure}
\subsection{Additive Accuracies}\label{sec:cts_additive}
We can check that the sender accuracies are additive in the sense required by Eq.~\ref{additive} by first observing that if $\parsnt{i-1} = \{\mt{i-1}, \pt{i-1}\}$ is drawn from $p(\cdot \mid \parsnt{i-2}, \x; \alpha_a)$ then
\begin{align}
\mt{i-1} \sim \N{\frac{\alpha_a \x + \mt{i-2}\pt{i-2}}{\pt{i-1}}}{\frac{\alpha_a}{\pt{i-1}^2}\I{D}}.
\end{align}
Define
\begin{align}
\boldsymbol{\mu}'_i \defeq \frac{\alpha_b \x + \mt{i-1}\pt{i-1}}{\pt{i}} = \frac{\pt{i-1}}{\pt{i}}\mt{i-1} + \frac{\alpha_b \x}{\pt{i}},
\end{align}
and apply Identity~\ref{normal_identity_1} with $a = \smash{\frac{\pt{i-1}}{\pt{i}}}$ and $b = \smash{\frac{\alpha_b\x}{\pt{i}}}$ to see that
\begin{align}
\boldsymbol{\mu}'_i  & \sim \N{\frac{\pt{i-1}}{\pt{i}}\frac{\alpha_a \x + \mt{i-2}\pt{i-2}}{\pt{i-1}} + \frac{\alpha_b \x}{\pt{i}}}{\frac{\pt{i-1}^2}{\pt{i}^2}\frac{\alpha_a}{\pt{i-1}^2}\I{D}}\\
&= \N{\frac{(\alpha_a + \alpha_b) \x + \mt{i-2}\pt{i-2}}{\pt{i}}}{\frac{\alpha_a}{\pt{i}^2}\I{D}}.
\end{align}
Now observe that if $\parsnt{i} = \{\mt{i}, \pt{i}\}$ is drawn from $p(\cdot \mid \parsnt{i-1}, \x; \alpha_b)$ then
\begin{align}
\mt{i} &\sim \N{\frac{\alpha_b \x + \mt{i-1}\pt{i-1}}{\pt{i}}}{\frac{\alpha_b}{\pt{i}^2}\I{D}},
\end{align}
and hence
\begin{align}
\mt{i} &\sim \boldsymbol{\mu}'_i + \vec{\epsilon},
\end{align}
where
\begin{align}
\vec{\epsilon} \sim \N{\0{D}}{\frac{\alpha_b}{\pt{i}^2}\I{D}}.
\end{align}
Another standard identity for Gaussian variables can now be applied:
\begin{align}
X \sim \N{\mu_X}{\sigma^2_X}, Y \sim \N{\mu_Y}{\sigma^2_Y} \implies X+Y \sim \N{\mu_X + \mu_Y}{\sigma^2_X+\sigma^2_Y}\label{normal_identity_2},
\end{align}
to see that 
\begin{align}
\mt{i} &\sim \N{\frac{(\alpha_a + \alpha_b) \x + \mt{i-2}\pt{i-2}}{\pt{i}}}{\frac{\alpha_a + \alpha_b}{\pt{i}^2}\I{D}},
\end{align}
and hence
\begin{align}
\E_{\update(\parsnt{i-1}\mid \parsnt{i-2}, \x; \alpha_a)} \update(\parsnt{i} \mid \parsnt{i-1}, \x; \alpha_b) =  \update(\parsnt{i} \mid \parsnt{i-2}, \x; \alpha_a + \alpha_b),
\end{align}
as required. 
\subsection{Accuracy Schedule \texorpdfstring{$\beta(t)$}{}}\label{sec:cts_beta}
We derive $\beta(t)$ for continuous data by requiring that the expected entropy of the input distribution linearly decreases with $t$.
Intuitively, this means that information flows into the input distribution at a constant rate.
Define
\begin{align}
H(t) &\defeq \E_{\flow(\parsn \mid \x; t)} H(\inp(\cdot \mid \parsn))\\
&= \frac{D}{2} \ln \left(\frac{2\pi e}{1 + \beta(t)}\right).
\end{align}
Then if $H(t)$ linearly decreases with $t$,
\begin{align}
H(t) &= (1-t)H(0) + tH(1)\\
\implies \ln \left(\frac{2\pi}{1 + \beta(t)}\right) &= (1-t)\ln (2 \pi) + t \ln \left(\frac{2\pi}{1 + \beta(1)}\right)\\
\implies -\ln (1+\beta(t)) &= -t\ln(1+\beta(1))\\
\implies (1+\beta(t))^{-1} &= (1+\beta(1))^{-t}.\label{pvs}
\end{align}
Define $\sigma_1$ to be the standard deviation of the input distribution at $t=1$. 
We will choose $\sigma_1$ empirically to minimise the loss; in general it should be small enough to ensure that the reconstruction loss is low, but not so small as to create unnecessary transmission costs.
Recalling that the precision $\rho$ at time $t$ is $1+\beta(t)$, we see that
\begin{align}
\sigma_1^2 = (1 + \beta(1))^{-1}.
\end{align}
Therefore
\begin{align}
(1+\beta(t))^{-1} &= \sigma_1^{2t}\\
\implies \beta(t) &= \sigma_1^{-2t} - 1\label{cts_beta_t}\\
\implies \alpha(t) &= \frac{d \left(\sigma_1^{-2t} - 1\right)}{dt}\\
&= -\frac{2 \ln \sigma_1}{\sigma_1^{2t}}\label{ctsalphat}.
\end{align}
\subsection{Bayesian Flow Distribution \texorpdfstring{$\flow(\cdot \mid \x; t)$}{}}
Recall from Eq.~\ref{param_flow_dist} that
\begin{align}
\flow(\parsn \mid \x; t) &= \update(\parsn \mid \parsnt{0}, \x, \beta(t)).
\end{align}
Therefore, setting $\parsnt{i-1} = \parsnt{0} = \{\0{D},1\}$ and $\alpha = \beta(t)$ in Eq.~\ref{cts_update_dist}, and recalling that $\rho = 1 + \beta(t)$,
\begin{align}
\flow(\parsn \mid \x; t) &=  \N{\m \mid \frac{\beta(t)}{1+\beta(t)}\x}{\frac{\beta(t)}{(1+\beta(t))^2}\I{D}}\\ 
&= \N{\m \mid \gamma(t)\x}{\gamma(t)(1-\gamma(t))\I{D}},\label{cts_param_flow_dist}
\end{align}
where
\begin{align}
\gamma(t) &\defeq \frac{\beta(t)}{1+\beta(t)}\label{gamma}\\
&= \frac{\sigma_1^{-2t} - 1}{\sigma_1^{-2t}}\\
&= 1 - \sigma_1^{2t}\label{cts_gamma_t}.
\end{align}
\begin{figure}[t!]
\includegraphics[width=\textwidth]{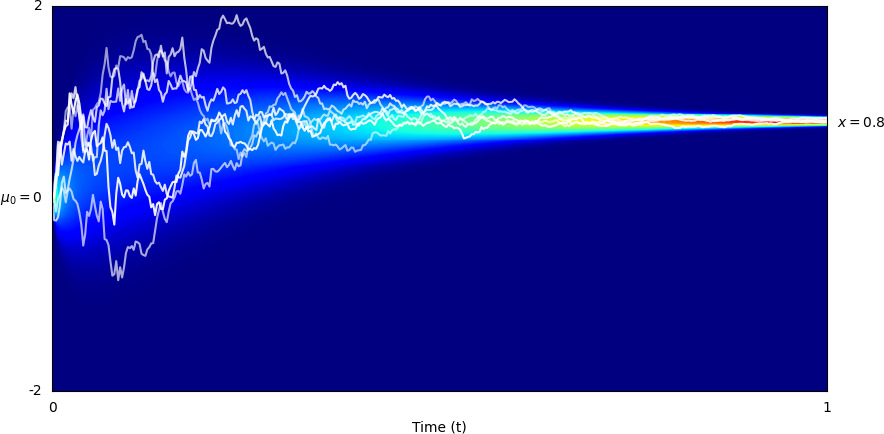}
\caption{\textbf{Bayesian flow for continuous data}. For $x=0.8$, $\sigma_1=0.02$ and $\gamma(t)$ defined as in Eqn.~\ref{cts_gamma_t}, the plot shows stochastic parameter trajectories for the input distribution mean $\mu$ (white lines) superimposed on a log-scale heatmap of the Bayesian flow distribution $p(\theta \mid x; t)$. Note how the trajectories all begin at $\mu_0=0$ then fan out before converging on $x$.}
\label{fig:cts_param_flow}
\end{figure}

\begin{figure}[t]
\centering
\includegraphics[width=\textwidth]{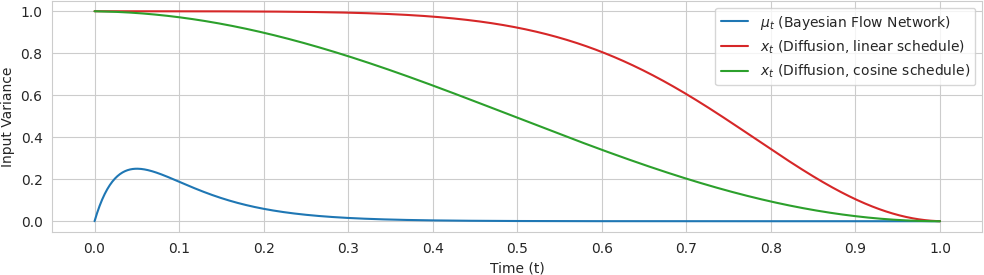}
\caption{\textbf{Input variance for Bayesian Flow Networks and diffusion models}. For $\sigma_1=0.001$ and $\gamma(t)$ defined as in Eqn.~\ref{cts_gamma_t}, the blue line shows the variance $\gamma(t)(1-\gamma(t))$ of the distribution over the input mean $\mu$ as a function of $t$ (see Eq.~\ref{cts_param_flow_dist}). Note that the variance is 0 at $t=0$ (since the input prior $\mu_0$ is deterministic) and becomes small again as $t$ approaches 1 and $\mu$ becomes increasingly concentrated around the data. The green and red lines show the equivalent network input variance for two different noise schedules from the literature (linear~\citep{ ho2020denoising} and cosine~\citep{ nichol2021improved}) during the reverse process of a diffusion model (note that $t$ is reversed relative to diffusion convention). The input variance is much lower for Bayesian Flow Networks.}
\end{figure}
\subsection{Output Distribution \texorpdfstring{$\out(\cdot \mid \parsn; t)$}{}}\label{sec:cts_output}
Following standard practice for diffusion models~\citep{song2020score}, the output distribution is defined by reparameterising a prediction of the Gaussian noise vector $\vec{\epsilon} \sim \N{\0{D}}{\I{D}}$ used to generate the mean $\m$ passed as input to the network.
Recall from Eq.~\ref{cts_param_flow_dist} that
\begin{align}
\m \sim  \N{\gamma(t)\x}{\gamma(t)(1-\gamma(t))\I{D}},
\end{align}
and hence
\begin{align}
\m &= \gamma(t)\x + \sqrt{\gamma(t)(1-\gamma(t))} \vec{\epsilon}\\
\implies \x &= \frac{\m}{\gamma(t)}- \sqrt{\frac{1-\gamma(t)}{\gamma(t)}}\vec{\epsilon}.
\end{align}
The network outputs an estimate $\eps(\parsn, t)$ of $\vec{\epsilon}$ and this is transformed into an estimate $\mathbf{\pred{x}}(\parsn, t)$ of $\x$ by 
\begin{align}
\mathbf{\pred{x}}(\parsn, t) = \frac{\m}{\gamma(t)} - \sqrt{\frac{1-\gamma(t)}{\gamma(t)}}\eps(\parsn, t).
\end{align}
Given $\vec{\pred{x}}(\parsn, t)$ the output distribution is
\begin{align}
\out(\x \mid \parsn; t) = \delta(\x-\mathbf{\pred{x}}(\parsn, t))\label{cts_p_dist},
\end{align}
Note that $\gamma(0) = 0$, making the transformation from $\eps(\parsn, t)$ to $\out(\x \mid \parsn; t)$ undefined at $t=0$. 
We therefore set $\out(\x \mid \parsn; t) = \0{D}$ for $t$ under some small threshold $t_{min}$.
Also,  $\mathbf{\pred{x}}(\parsn, t)$ is clipped to lie within the allowed range $[x_{min}, x_{max}]$ for $\x$.
In our experiments $t_{min} = 1\mathrm{e}{-6}$ and $[x_{min}, x_{max}] = [-1, 1]$.
\subsection{Sender Distribution \texorpdfstring{$\sender{\cdot}{\x; \alpha}$}{}}\label{sec:cts_sender}
The sender space $\Y = \X = \R$ for continuous data, and the sender distribution is normal with precision $\alpha$:
\begin{align}
\sender{\y}{\x; \alpha} &= \N{\y \mid \x}{\alpha^{-1}\I{D}}\label{cts_q_dist}.
\end{align}
\subsection{Receiver Distribution \texorpdfstring{$\rec(\cdot \mid \parsn; t, \alpha)$}{}}
Substituting Eqs.~\ref{cts_p_dist} and  \ref{cts_q_dist} into Eq.~\ref{r_dist},
\begin{align}
\rec(\y \mid \parsn; t, \alpha) &= \E_{\delta(\x'-\mathbf{\pred{x}}(\parsn, t))}\N{\y \mid \x'}{\alpha^{-1}\I{D}}\\
&= \N{\y \mid \mathbf{\pred{x}}(\parsn, t)}{\alpha^{-1}\I{D}}.\label{ctsrecdist}
\end{align}
\begin{figure}[t!]
\includegraphics[width=\textwidth]{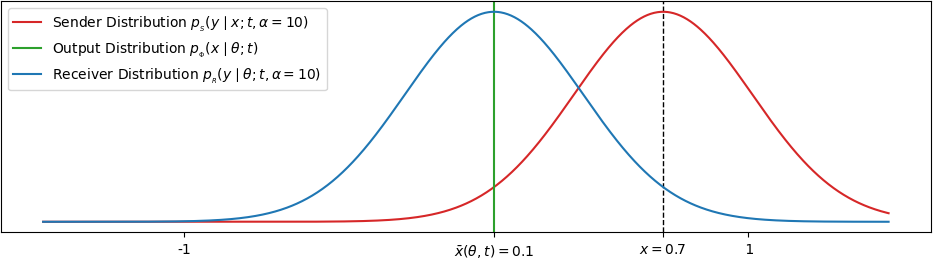}
\caption{\textbf{Sender, output and receiver distributions for continuous data}. Note that the sender and receiver distributions have identical variance and the output distribution is a Dirac delta distribution centred on the network prediction $\pred{x}(\theta, t)$.}
\end{figure}
\subsection{Reconstruction Loss \texorpdfstring{$L^r(\x)$}{}}\label{sec:cts_reconstruction}
Truly continuous data requires infinite precision to reconstruct, which makes the reconstruction loss problematic.
However it would be reasonable to assume that either the data is finely discretised (as all information is on a digital computer), or that it contains some noise.
The reconstruction loss for discretised data is presented in Section~\ref{sec:discd_reconstruction}. 
Alternatively, if we assume the presence of normally distributed measurement noise on $\x$, with fixed isotropic variance $\sigma^2$, then a noisy version of the reconstruction loss can be defined as the expected KL divergence between $\N{\x}{\sigma^2\I{D}}$ and the output distribution at $t=1$:
\begin{align}
L^r(\x) &= \E_{\flow(\parsn \mid \x, 1)}\kl{\N{\x}{\sigma^2\I{D}}}{\N{\mathbf{\pred{x}}(\parsn, 1)}{\sigma^2\I{D}}}\\
&= \E_{\flow(\parsn \mid \x, 1)}\frac{1}{2\sigma^2}\left\|\x -\mathbf{\pred{x}}(\parsn, 1)\right\|^2 .
\end{align}
The noise does not directly affect training, as the reconstruction loss is not optimised. 
However the value of $\sigma$ places a natural upper limit on the value that should be chosen for $\sigma_1$: there is no point transmitting the data to greater precision than it was originally measured.
Empirically, we find that when $\sigma_1 < \sigma/2$ the reconstruction loss is very small.
\subsection{Discrete-Time Loss \texorpdfstring{$L^{n}(\x)$}{}}\label{sec:cts_disc_t_loss}
From Eqs.~\ref{cts_q_dist} and \ref{ctsrecdist},
\begin{align}
\kl{\sender{\cdot}{\x, \alphat{i}}}{\rec(\cdot \mid \parsnt{i-1}; t_{i-1}, \alphat{i})} &= \kl{\N{\x}{\alphat{i}^{-1}\I{D}}}{\N{\mathbf{\pred{x}}(\parsnt{i-1}, t_{i-1})}{\alphat{i}^{-1}\I{D}}}\\
&= \frac{\alphat{i}}{2}\left\|\x -\mathbf{\pred{x}}(\parsnt{i-1}, t_{i-1})\right\|^2,\label{ctskl}
\end{align}
and from Eqs.~\ref{alpha_i} and \ref{cts_beta_t},
\begin{align}
\alpha_i &= \beta(t_i) - \beta(t_{i-1})\\
&= \sigma_1^{-2i/n} - \sigma_1^{-2(i-1)/n}\\
&= \sigma_1^{-2 i / n} \left(1-\sigma_1^{2/n}\right).
\end{align}
Therefore, substituting into Eq.~\ref{disc_t_loss_exp},
\begin{align}
L^{n}(\x) = \frac{n}{2}\left(1-\sigma_1^{2/n}\right)\E_{i \sim \ui{n},\flow(\parsnt{i-1} \mid \x ; t_{i-1})}  \frac{\left\|\x -\mathbf{\pred{x}}(\parsnt{i-1}, t_{i-1})\right\|^2}{\sigma_1^{2i/n}},\label{n_step_loss_cts}
\end{align}
where $t_{i-1} = (i-1)/{n}$.
\subsection{Continuous-time Loss \texorpdfstring{$L^{\infty}(\x)$}{}}\label{sec:ctsctstloss}
Eq.~\ref{convkl} claimed that
\begin{align}
\kl{\sender{\cdot}{\x, \alpha}}{\rec(\cdot \mid \parsn, \alpha, t)} = \kl{\N{g(\x)}{C\alpha^{-1}\I{D}}}{P(\parsn, t) \ast \N{\0{D}}{C\alpha^{-1}\I{D}}},
\end{align}
for some embedding function $g: \X \rightarrow \Y$, constant $C$ and distribution $p_{\parsn}$ over $\Y^D$ with finite mean and variance.
If $g$ is the identity function, $C=1$ and 
\begin{align}
P(\y \mid \parsn, t) &= \delta(\y - \mathbf{\pred{x}}(\parsn, t)),\label{pgycts}
\end{align}
then $P(\parsn, t)$ has finite mean and variance and
\begin{align}
\N{\y \mid g(\x)}{C\alpha^{-1}\I{D}} = \N{\y \mid \x}{\alpha^{-1}\I{D}} &= \sender{\y}{\x; \alpha},\\
P(\y \mid \parsn, t) \ast \N{\0{D}}{C\alpha^{-1}\I{D}} = \N{ \y \mid \mathbf{\pred{x}}(\parsn, t)}{\alpha^{-1}\I{D}} &= \rec(\y \mid \parsn, \alpha, t),
\end{align}
so the claim is true and the continuous-time loss from Eq~\ref{cts_t_loss} applies, with $E[P(\parsn, t)] = \mathbf{\pred{x}}(\parsn, t)$
and $\alpha(t)$ as defined in Eq~\ref{ctsalphat}, yielding
\begin{align}
L^{\infty}(\x) &= -\ln \sigma_1\E_{t \sim U(0,1), \flow(\parsn \mid \x; t)}  \frac{\left\|\x - \mathbf{\pred{x}}(\parsn, t)\right\|^2}{\sigma_1^{2t}}.
\end{align}
\subsection{Pseudocode}
Pseudocode for evaluating the $n$-step loss $L^n(\x)$ and continuous-time loss $L^{\infty}(\x)$ for continuous data is presented in Algorithms~\ref{alg:n_step_loss_cts} and \ref{alg:cts_t_loss_cts}, while the sample generation procedure is presented in Algorithm~\ref{alg:samp_gen_cts}.
\begin{algorithm}[H]
\begin{algorithmic}
\LineComment{Note that $\parsn = \{\m, \rho\}$, but $\rho$ is fully determined by $t$}
\LineComment{For our experiments $t_{min} = 1\mathrm{e}{-6}$, $[x_{min}, x_{max}] = [-1, 1]$}
\Function{\lstinline{cts_output_prediction}}{$\m \in \R^D, t \in [0,1], \gamma >\in \R^+$, $t_{min} \in \R^+$, $x_{min}, x_{max} \in \R$}
\If{$t < t_{min}$}
\State $\mathbf{\pred{x}}(\parsn, t) \gets \0{D}$
\Else
\State Input $(\m, t)$ to network, receive $\eps(\parsn, t)$ as output
\State $\mathbf{\pred{x}}(\parsn, t) \gets \frac{\m}{\gamma} - \sqrt{\frac{1-\gamma}{\gamma}}\eps(\parsn, t)$
\State clip $\mathbf{\pred{x}}(\parsn, t)$ to $[x_{min}, x_{max}]$
\EndIf
\State \textbf{Return} $\mathbf{\pred{x}}(\parsn, t)$
\EndFunction
\end{algorithmic}
\end{algorithm}
\begin{algorithm}[H]
\caption{Discrete-Time Loss $L^{n}(\x)$ for Continuous Data}\label{alg:n_step_loss_cts}
\begin{algorithmic}
\State \textbf{Require:} $\sigma_1 \in \R^+$, number of steps $n \in \mathbb{N}$
\State \textbf{Input:} continuous data $\x \in \R^D$
\State $i \sim U\{1, n\}$
\State $t \leftarrow \frac{i-1}{n}$
\State $\gamma \leftarrow 1 - \sigma_1^{2 t}$
\State $\m \sim \N{\gamma \x}{\gamma(1-\gamma)\I{D}}$
\State $\mathbf{\pred{x}}(\parsn, t) \leftarrow \text{\sc{\lstinline{cts_output_prediction}}}(\m, t, \gamma)$
\State $ L^n(\x) \gets \frac{n\left(1-\sigma_1^{2/n}\right)}{2 \sigma_1^{2 i / n}} \left\|\x - \mathbf{\pred{x}}(\parsn, t)\right\|^2$
\end{algorithmic}
\end{algorithm}
\begin{algorithm}[H]
\caption{Continuous-Time Loss $L^{\infty}(\x)$ for Continuous Data}\label{alg:cts_t_loss_cts}
\begin{algorithmic}
\State \textbf{Require:} $\sigma_1 \in \R^+$
\State \textbf{Input:} continuous data $\x \in \R^D$
\State $t \sim U(0,1)$
\State $\gamma \leftarrow 1 - \sigma_1^{2t}$
\State $\m \sim \N{\gamma \x}{\gamma(1-\gamma)\I{D}}$
\State $\mathbf{\pred{x}}(\parsn, t) \gets \text{\sc{\lstinline{cts_output_prediction}}}(\m, t, \gamma)$
\State $ L^{\infty}(\x) \gets -\ln \sigma_1 \sigma_1^{-2t} \left\|\mathbf{\x - \pred{x}}(\parsn, t)\right\|^2$
\end{algorithmic}
\end{algorithm}
\begin{algorithm}[H]
\caption{Sample Generation for Continuous Data}\label{alg:samp_gen_cts}
\begin{algorithmic}
\State \textbf{Require:} $\sigma_1 \in \R^+$, number of steps $n \in \mathbb{N}$
\State $\boldsymbol{\mu} \gets \0{D}$
\State $\rho \gets 1$
\For{$i = 1$ to $n$} 
    \State $t \leftarrow \frac{i-1}{n}$
    \State $\mathbf{\pred{x}}(\parsn, t) \leftarrow \text{\sc{\lstinline{cts_output_prediction}}}(\m, t, 1 - \sigma_1^{2 t})$
    \State $\alpha \gets \sigma_1^{-2 i / n} \left(1-\sigma_1^{2/n}\right)$
    \State $\y \sim \N{\mathbf{\pred{x}}(\parsn, t)}{\alpha^{-1}\I{D}}$
    \State $\m \gets \frac{\rho\boldsymbol{\mu} + \alpha\y}{\rho + \alpha}$
    \State $\rho \gets \rho + \alpha$
\EndFor
\State $\mathbf{\pred{x}}(\parsn, 1) \gets \text{\sc{\lstinline{cts_output_prediction}}}(\m, 1, 1 - \sigma_1^{2})$
\State \textbf{Return} $\mathbf{\pred{x}}(\parsn, 1)$
\end{algorithmic}
\end{algorithm}
\section{Discretised Data}\label{sec:discretised}
This section considers continuous data that has been discretised into $K$ bins.
For example, 8-bit images are discretised into 256 bins, 16-bit audio is discretised in $2^{16} = 65,536$ bins.
This data is represented by tiling $[-1, 1]$ into $K$ intervals, each of length $2/K$.
Let $k_{l}$, $\bc{k}$ and $k_{r}$ denote respectively the left, centre and right of interval $k$, and let $\ds{K}$ denote the set of integers from 1 to $K$.
Then for $k \in \ds{K}$,
 \begin{align}
\bc{k} &= \frac{2k - 1}{K} - 1,\\
k_{l} &= \bc{k} - \frac{1}{K},\\
k_{r} &= \bc{k} + \frac{1}{K}.
\end{align}
Let $k(\x) = \left(k(\xdd{1}),\dots, k(\xdd{D})\right) \in \dsd{K}{D}$ be the vector of the indices of the bins occupied by $\x = \left(\didx{x}{1},\dots, \didx{x}{D}\right) \in \R^D$, and let $k_l(\x)$, $k_c(\x)$ and $k_r(\x)$ be the corresponding vectors of left edges, centres and right edges of the bins.
If the data has not already been discretised, we set $\x = k_c(\x)$.
For example if the red channel in an 8-bit RGB image has index 110, it will be represented by the number $\frac{2*(110) - 1}{256} - 1 = -0.14453125$.
Note that each $\didx{x}{d}$ therefore lies in the range $[\frac{1}{K}-1,1-\frac{1}{K}]$ and not $[-1, 1]$.

The input distribution $\inp(\x \mid \parsn)$, prior parameters $\parsnt{0}$, sender distribution $\sender{\y}{\x ; \alpha}$, Bayesian update function $h(\parsnt{i-1}, \y, \alpha)$, Bayesian update distribution $\update(\parsnt{i} \mid \parsnt{i-1}, \x ; \alpha)$, Bayesian flow distribution $\flow(\parsn \mid \x; t)$ and accuracy schedule $\beta(t)$ are all identical to the continuous case described in Section~\ref{sec:cts}. 
It may surprise the reader that the output distribution is discretised while the input, sender and receiver distributions are not.
We made this choice partly for mathematical convenience (Bayesian updates are considerably more complex for discretised distributions;~\citep{austin2021d3pm}) and partly because we suspected that it would easier for the network to interpret continuous means than discrete probabilities as input.
In a similar vein to our argument for standard priors in Sec.~\ref{sec:cts_input}, we remind the reader that the input distribution only serves to inform the network and not directly to model the data; all that matters is that the input parameters contain enough information to allow the network to make accurate predictions.

Section~\ref{sec:cts_disc_t_loss} noted that the level of measurement noise assumed for continuous data should inform the choice of standard deviation $\sigma_1$ for the input distribution at $t=1$ (which in turn defines the accuracy schedule $\beta(t)$).
For discretised data a similar role is played by the width of the discretisation bins, as these place a natural limit on how precisely the data needs to be transmitted.
For example, for $8$-bit data with 256 bins and hence a bin width of $1/128$, setting $\sigma_1 = 1\mathrm{e}{-3}$ corresponds to a final input distribution with standard deviation roughly one eighth of the width of the bin, which should be precise enough for the network to identify the correct bin with very high probability.

One caveat with discretisation is that calculating the loss has $O(K)$ computational cost, which may be prohibitive for very finely discretised data. 
In any case, the benefits of discretisation tend to decrease as the number of bins increases, as we will see in our experiments.
\begin{figure}[t!] 
\includegraphics[width=\textwidth]{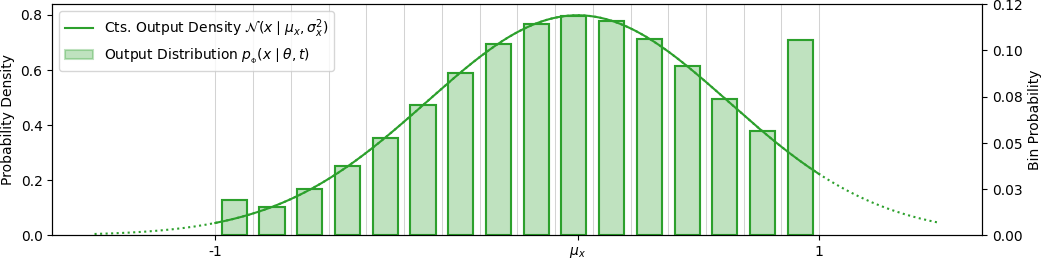}
\caption{\textbf{Output distribution for discretised data}. For univariate data $x$ discretised into $K=16$ bins, the green line shows the continuous distribution $\N{\mu_x}{\sigma^2_x}$ that is discretised to yield the output distribution $\out(x\mid \theta, t)$, as described in Section~\ref{sec:discd_output}. Bin boundaries are marked with vertical grey lines. The heights of the green bars represent the probabilities assigned to the respective bins by $\out(x\mid \theta, t)$. For ease of visualisation these heights are rescaled relative to the probability density, as indicated on the right axis. Note the clipping at $\pm 1$: the area under the dotted green line to the left of $-1$ is added to the probability of the first bin, the area under the dotted green line to the right of $1$ is added to the probability of the last bin.}
\label{fig:discd_p}
\end{figure}
\subsection{Output Distribution \texorpdfstring{$\out(\cdot \mid \parsn, t)$}{}}\label{sec:discd_output}
Discretised continuous distributions offer a natural and expressive way to model discretised data with neural networks~\cite{salimans2017pixel}. 
As in Section~\ref{sec:cts_output}, the network outputs $\net(\parsn, t)$ are not used to predict $\x$ directly, but rather to model the Gaussian noise vector $\e$ used to generate the mean sample $\m$ passed as input to the network.

First $\net(\parsn, t)$ is split into two length $D$ vectors, $\m_{\epsilon}$ and $\ln \vec{\sigma}_{\epsilon}$.
Then these are transformed to $\m_{x}$ and $\vec{\sigma}_{x}$ using
\begin{align}
\m_{x} &= \begin{cases}\0{D} & \text{if $t < t_{min}$},\\ \frac{\m}{\gamma(t)} - \sqrt{\frac{1-\gamma(t)}{\gamma(t)}}\m_{\epsilon} & \text{otherwise},\end{cases}\\
\vec{\sigma}_{x} &= \begin{cases}\1{D} & \text{if $t < t_{min}$},\\ \sqrt{\frac{1-\gamma(t)}{\gamma(t)}}\exp(\ln \vec{\sigma}_{\epsilon}) & \text{otherwise}.\end{cases}
\end{align}
For each $d \in \ds{D}$, define the following univariate Gaussian cdf
\begin{align}
F\left(x \mid \mu_x^{(d)}, \sigma_x^{(d)}\right) &= \frac{1}{2}\left[1+\text{erf}\left( \frac{x - \mu_x^{(d)}}{\sigma_x^{(d)}\sqrt{2}}\right)\right],
\end{align}
and clip at $[-1, 1]$ to obtain
\begin{align}
G\left(x \mid \mu_x^{(d)}, \sigma_x^{(d)}\right) = \begin{cases}
0&\text{if $x \leq -1$},\\
1&\text{if  $x \geq 1$},\\
F\left(x \mid \mu_x^{(d)}, \sigma_x^{(d)}\right)&\text{otherwise}.
\end{cases}
\end{align}
Then, for $k \in \ds{K}$,
\begin{align}
\out^{(d)}(k \mid \parsn; t) \defeq G(k_r\mid  \mu^{(d)}_x, \sigma^{(d)}_x)-G(k_l\mid  \mu^{(d)}_x, \sigma^{(d)}_x),
\end{align}
and hence
\begin{align}
\out(\x \mid \parsnt, t) =
\prod_{d=1}^D \out^{(d)}\left(k(\xdd{d})\mid \parsn; t\right).\label{discd_p_dist}
\end{align}
\subsection{Receiver Distribution \texorpdfstring{$\rec(\cdot \mid \parsn; t, \alpha)$}{}}
Substituting Eq.~\ref{discd_p_dist} and Eq. \ref{cts_q_dist} into Eq.~\ref{r_dist} gives
\begin{align}
\rec(\y \mid \parsn; t, \alpha) &= \E_{\out(\x' \mid \parsnt, t)}\N{\ydd{d} \mid k_c(\x')}{\alpha^{-1} \I{D}}\\
&= \prod_{d=1}^D \int_{x'}d x' {\out^{(d)}\left(k(x') \mid \parsn; t\right)}\N{\ydd{d} \mid k_c(x')}{\alpha^{-1}}\\
&= \prod_{d=1}^D \sum_{k=1}^K \out^{(d)}(k \mid \parsn; t) \N{\ydd{d} \mid k_c}{\alpha^{-1}}\label{discd_r_dist_1}.
\end{align}
\begin{figure}[t!]
\centering
\begin{subfigure}[b]{\textwidth}
\includegraphics[width=\textwidth]{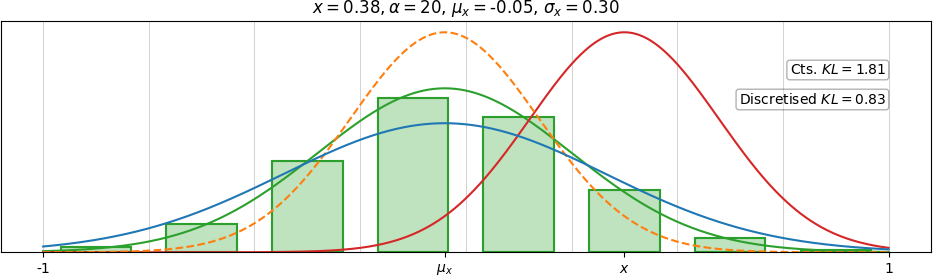}
\end{subfigure}
\begin{subfigure}[b]{\textwidth}
\includegraphics[width=\textwidth]{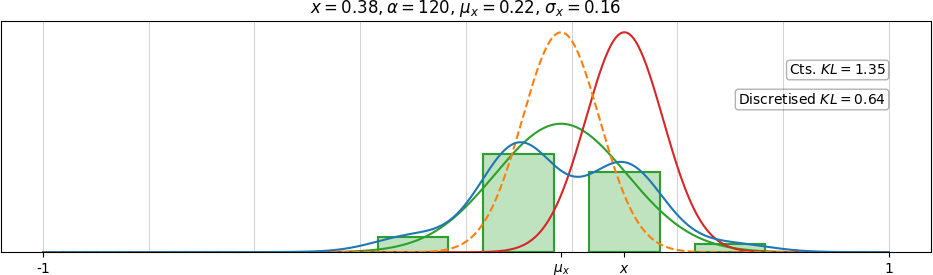}
\end{subfigure}
\begin{subfigure}[b]{\textwidth}
\includegraphics[width=\textwidth]{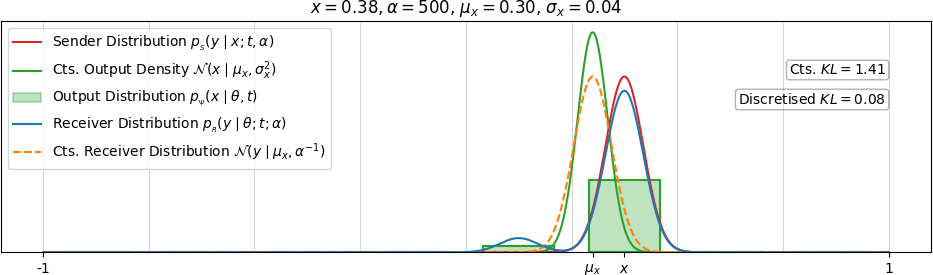}
\end{subfigure}
\caption{\textbf{Sender, output and receiver distributions for discretised data}. For data $x$ discretised into 8 bins, the three plots depict the sender distribution (red line), the discretised output distribution (green bars; heights reflect the probabilities assigned to each bin, rescaled as in Figure~\ref{fig:discd_p}) and receiver distribution (blue line) for progressively increasing values of $\alpha$, and for progressively more accurate predictions of $x$ (both of which typically happen as $t$ increases). Also shown are the continuous distribution $\mathcal{N}(x \mid \mu_x, \sigma^2_x)$ (dotted green line) which is discretized to create the output distribution and the continuous receiver distribution from Section~\ref{sec:cts} (dashed orange line). Bin boundaries are marked with vertical grey lines. Note the KL divergences printed in the top right: taking discretisation into account leads to a lower KL due to the density ``bumps'' at the bin centres where $x$ could be. The advantage of discretisation becomes more pronounced as the prediction gets closer to $x$ and more of the probability mass is concentrated in the correct bin.}
\end{figure}
\subsection{Reconstruction Loss \texorpdfstring{$L^r(\x)$}{}}\label{sec:discd_reconstruction}
The reconstruction loss for discretised data is
\begin{align}
L^r(\x) &= -\E_{\flow(\parsn \mid \x, 1)} \ln \out(\x \mid \parsn; 1)\\
&= -\E_{\flow(\parsn \mid \x, 1)}\sum_{d=1}^D \ln \out^{(d)}\left(k(\xdd{d}) \mid \parsn; 1 \right).
\end{align}
\subsection{Discrete-time Loss \texorpdfstring{$L^{n}(\x)$}{}}
From Eqs.~\ref{cts_q_dist} and \ref{discd_r_dist_1},
\begin{align}
&\kl{\sender{\cdot}{\x, \alphat{i}}}{\rec(\cdot \mid \parsnt{i-1}; t_{i-1}, \alphat{i})}\\ 
&\qquad\qquad\qquad=\kl{\N{\x}{\alphat{i}^{-1}\I{D}}}{\prod_{d=1}^D\sum_{k=1}^K \out^{(d)}(k \mid \parsnt{i-1}, t_{i-1}) \N{k_c}{\alphat{i}^{-1}}},
\end{align}
which cannot be calculated in closed form, but can be estimated with Monte-Carlo sampling.
Substituting into Eq.~\ref{disc_t_loss_exp},
\begin{align}
&L^{n}(\x) = n \E_{i \sim \ui{n},\flow(\parsn \mid \x ; t_{i-1}),\N{\y \mid \x}{\alphat{i}^{-1}\I{D}}} \ln \N{\y \mid \x}{\alphat{i}^{-1}\I{D}}\\
&\qquad\qquad\qquad\qquad- \sum_{d=1}^D\ln \left(\sum_{k=1}^K \out^{(d)}(k \mid \parsn, t_{i-1}) \N{\ydd{d} \mid k_c}{\alphat{i}^{-1}}\right).\label{discd_disc_t_loss_exp}
\end{align}
\subsection{Continuous-time Loss \texorpdfstring{$L^{\infty}(\x)$}{}}
Justifying the claim made in Eq.~\ref{convkl} follows almost the same reasoning here as in Section~\ref{sec:ctsctstloss}, with $C=1$ and $g$ the identity function. 
The only difference is that
\begin{align}
P(\y \mid \parsn; t) = \prod_{d=1}^D \sum_{k=1}^K \out^{(d)}(k \mid \parsn, t) \delta(\ydd{d} - \bc{k}),
\end{align}
which clearly has finite variance and mean.
Since
\begin{align}
P(\y \mid \parsn, t) \ast \N{\0{D}}{C\alpha^{-1}\I{D}} = \rec(\y \mid \parsn, \alpha, t),
\end{align}
the claim holds and the continuous time loss from Eq~\ref{cts_t_loss} can be applied with
\begin{align}
E[P(\parsn, t)] = \left(\sum_{k=1}^K p^{(1)}(k \mid \parsn, t) k_c, \dots,\sum_{k=1}^K p^{(D)}(k \mid \parsn, t) k_c\right) \defeq \mathbf{\pred{k}}(\parsn, t),
\end{align}
and $\alpha(t)$ as defined in Eq~\ref{ctsalphat}, yielding
\begin{align}
L^{\infty}(\x) &= -\ln \sigma_1\E_{t \sim U(0,1), \flow(\parsn \mid \x; t)}  \frac{\left\|\x -\mathbf{\pred{k}}(\parsn, t) \right\|^2}{\sigma_1^{2t}}.
\end{align}
Note that $\mathbf{\pred{k}}(\parsn, t)$ is a function of the complete discretised distribution $\out(\x \mid \parsn, t)$, hence $L^{\infty}(\x)$ depends on both $\m_{\x}$ and $\vec{\sigma}_{\x}$, and not only on $\m_{\x}$, as for continuous data. This also means that calculating $L^{\infty}(\x)$ has $O(K)$ computational cost for discretised data.
\subsection{Pseudocode}
Pseudocode for evaluating the discrete-time loss $L^n(\x)$ and continuous-time loss $L^{\infty}(\x)$ for discretised data is presented in Algorithms~\ref{alg:n_step_loss_discd} and \ref{alg:cts_t_loss_discd}, while sample generation is presented in Algorithm~\ref{alg:samp_gen_discd}.
\begin{algorithm}[H]
\begin{algorithmic}
\Function{\lstinline{discretised_cdf}}{$\mu \in \R, \sigma \in \R^+, x \in \R$}
\State $F(x) \gets \frac{1}{2}\left[1+\text{erf}\left( \frac{x - \mu}{\sigma \sqrt{2}}\right)\right]$
\State $G(x) \gets \begin{cases}
0&\text{ if } x \leq -1\\
1&\text{ if } x \geq 1\\
F(x) &\text{ otherwise} \end{cases}$
\State \textbf{Return} $G(x)$
\EndFunction
\end{algorithmic}
\end{algorithm}
\begin{algorithm}[H]
\begin{algorithmic}
\LineComment{For our experiments $t_{min} = 1\mathrm{e}{-6}$}
\LineComment{$k_{l} = \frac{2(k-1)}{K} - 1$, $k_{r} = \frac{2k}{K} - 1$}
\Function{\lstinline{discretised_output_distribution}}{$\m \in \R^D, t \in [0,1], K \in \mathbb{N}, \gamma \in \R^+$, $t_{min} \in \R^+$}.
\If{$t < t_{min}$}
\State $\m_{x} \gets \0{D}$
\State $\vec{\sigma}_{x} \gets \1{D}$
\Else
\State Input $(\m, t)$ to network, receive $(\m_{\epsilon}, \ln \vec{\sigma}_{\epsilon})$ as output
\State $\m_{x} \gets \frac{\m}{\gamma} - \sqrt{\frac{1-\gamma}{\gamma}}\m_{\epsilon}$
\State $\vec{\sigma}_{x} \gets \sqrt{\frac{1-\gamma}{\gamma}}\exp(\ln \vec{\sigma}_{\epsilon})$
\EndIf
\For{$d \in \ds{D}$, $k \in \ds{K}$}
\State $\out^{(d)}(k \mid \parsn; t) \gets \text{\sc{\lstinline{discretised_cdf}}}(\mu_x^{(d)}, \sigma_x^{(d)}, k_r) - \text{\sc{\lstinline{discretised_cdf}}}(\mu_x^{(d)}, \sigma_x^{(d)}, k_l)$
\EndFor
\State \textbf{Return} $\outn(\cdot \mid \parsn; t)$
\EndFunction
\end{algorithmic}
\end{algorithm}
\begin{algorithm}[H]
\caption{Discrete-Time Loss $L^{n}(\x)$ for Discretised Data}\label{alg:n_step_loss_discd}
\begin{algorithmic}
\LineComment{$k_{c} = \frac{2k-1}{K} - 1$}
\State \textbf{Require:} $\sigma_1 \in \R^+$, number of steps $n \in \mathbb{N}$, number of bins $K \in \mathbb{N}$
\State \textbf{Input:} discretised data $\x \in [\frac{1}{K}-1,1-\frac{1}{K}]^D$
\State $i \sim U\{1, n\}$
\State $t \leftarrow \frac{i-1}{n}$
\State $\gamma \leftarrow 1 - \sigma_1^{2 t}$
\State $\m \sim \N{\gamma \x}{\gamma(1-\gamma)\I{D}}$
\State $\alpha \gets \sigma_1^{-2 i / n} \left(1-\sigma_1^{2/n}\right)$
\State $\y \sim \N{\x}{\alpha^{-1}\I{D}}$
\State $\outn(\cdot \mid \parsn; t) \leftarrow \text{\sc{\lstinline{discretised_output_distribution}}}(\m, t, K, \gamma)$
\State $ L^n(\x) \gets n \left[\ln \N{\y \mid \x}{\alpha^{-1}\I{D}} - \sum_{d}\ln \left(\sum_{k} \out^{(d)}(k \mid \parsn; t) \N{\ydd{d} \mid k_c}{\alpha^{-1}}\right)\right]$
\end{algorithmic}
\end{algorithm}
\begin{algorithm}[H]
\caption{Continuous-Time Loss $L^{\infty}(\x)$ for Discretised Data}\label{alg:cts_t_loss_discd}
\begin{algorithmic}
\State \textbf{Require:} $\sigma_1 \in \R^+$, number of bins $K \in \mathbb{N}$
\State \textbf{Input:} discretised data $\x \in [\frac{1}{K}-1,1-\frac{1}{K}]^D$
\State $t \sim U(0,1)$
\State $\gamma \leftarrow 1 - \sigma_1^{2t}$
\State $\m \sim \N{\gamma \x}{\gamma(1-\gamma)\I{D}}$
\State $\outn(\cdot \mid \parsn; t) \leftarrow \text{\sc{\lstinline{discretised_output_distribution}}}(\m, t, K, \gamma)$
\State $\mathbf{\pred{k}}(\parsn, t) \gets \left(\sum_k \out^{(1)}(k \mid \parsn; t)k_c,\dots, \sum_k \out^{(D)}(k \mid \parsn; t) k_c\right)$
\State $ L^{\infty}(\x) \gets -\ln \sigma_1 \sigma_1^{-2t}\left\|\x -\mathbf{\pred{k}}(\parsn, t) \right\|^2$
\end{algorithmic}
\end{algorithm}
\begin{algorithm}[H]
\caption{Sample Generation for Discretised Data}\label{alg:samp_gen_discd}
\begin{algorithmic}
\LineComment{$\vec{k}_{c} = \left(k^{(1)}_c,\dots,k^{(D)}_c\right)$}
\State \textbf{Require:} $\sigma_1 \in \R^+$, number of steps $n \in \mathbb{N}$, number of bins $K \in \mathbb{N}$
\State $\boldsymbol{\mu} \gets \0{D}$
\State $\rho \gets 1$
\For{$i = 1$ to $n$} 
    \State $t \leftarrow \frac{i-1}{n}$
    \State $\k \sim \text{\sc{\lstinline{discretised_output_distribution}}}(\m, t, K, 1 - \sigma_1^{2 t})$
    \State $\alpha \gets \sigma_1^{-2 i / n} \left(1-\sigma_1^{2/n}\right)$
    \State $\y \sim \N{\k_c}{\alpha^{-1}\I{D}}$
    \State $\m \gets \frac{\rho\boldsymbol{\mu} + \alpha\y}{\rho + \alpha}$
    \State $\rho \gets \rho + \alpha$
\EndFor
\State $\k \sim \text{\sc{\lstinline{discretised_output_distribution}}}(\m, 1, K, 1 - \sigma_1^{2})$
\State \textbf{Return} $\k_c$
\end{algorithmic}
\end{algorithm}
\section{Discrete Data}\label{sec:discrete}
We now consider discrete data in which no meaningful order or distance exists between the classes, unlike the discretised continuous data covered in the previous section.
Some obvious examples are text characters, classification labels or any binary data. 
In this context the data is represented as a $D$ dimensional vector of class indices:  $\x = \left(\didx{x}{1},\dots, \didx{x}{D}\right) \in \dsd{K}{D}$, where $\ds{K}$ is the set of integers from $1$ to $K$.
\subsection{Input Distribution \texorpdfstring{$\inp(\cdot \mid \parsn)$}{}}\label{sec:disc_input}
For discrete data, the input distribution is a factorised categorical over the class indices.
Let $\parsn = \left(\parsdd{1},\dots,\parsdd{D}\right) \in [0,1]^{KD}$ with $\parsdd{d}= \left(\pars_1^{(d)},\dots,\pars_K^{(d)}\right) \in \Delta^{K-1}$, where $\pars_k^{(d)}$ is the probability assigned to class $k$ for variable $d$. 
Then
\begin{align}
\inp(\x \mid \parsn) = \prod_{d=1}^D  \pars_{\didx{x}{d}}^{(d)}.
\end{align}
The input prior is uniform with
\begin{align}
\parsnt{0} = \vec{\frac{1}{K}}\label{disc_input_prior},
\end{align}
where $\vec{\frac{1}{K}}$ is the length $KD$ vector whose entries are all $\frac{1}{K}$.
We chose a uniform prior---rather than an empirical prior fit to the training data---for the same reasons we chose a standard normal prior for continuous data: it's mathematically simpler, and the disparity between the true prior and the simple prior can easily be corrected by the network.
\subsection{Output Distribution \texorpdfstring{$\out(\cdot \mid \parsn; t)$}{}}\label{sec:disc_output}
Given data $\x$, network inputs $\parsn, t$ and corresponding network outputs $\net(\parsn, t) = \left(\didx{\net}{1}(\parsn, t),\dots,\didx{\net}{D}(\parsn, t)\right)\\ \in \R^{KD}$, the output distribution for discrete data is as follows:
\begin{align}
\out^{(d)}(k \mid \parsn; t) &= \left(\text{softmax}(\didx{\net}{d}(\parsn, t))\right)_k,\\
\out(\x \mid \parsn; t) &= \prod_{d=1}^D \out^{(d)}(\xdd{d} \mid \parsn; t).\label{disc_pred_dist}
\end{align}
Note that for binary data only the probability $\theta^{(d)}_1$ that $k=1$ is fed into the network, on the grounds that the probability of $k=2$ can easily be inferred from $\theta^{(d)}_2 = 1 - \theta^{(d)}_1$.
The output distribution for binary data is determined by applying the logistic sigmoid function elementwise to the length $D$ output vector to get the probability for $k=1$:
\begin{align}
\out^{(d)}(1 \mid \parsn; t) = \sigma\left(\didx{\net}{d}(\parsn, t))\right),
\end{align}
where
\begin{align}
\sigma(x) =  \frac{1}{1-e^{-x}},
\end{align}
then inferring the probabilities for $k=2$ from
\begin{align}
\out^{(d)}(2 \mid \parsn; t) = 1 - \out^{(d)}(1 \mid \parsn; t).
\end{align}
In principle one class could also be removed from the inputs and outputs when $K > 2$ and inferred from the others.
However this would require the network to internalise a slightly more sophisticated inference procedure that could potentially slow down learning.
We therefore followed deep-learning convention and included a redundant input and output unit for $K>2$.

All probabilities are rescaled to the range $[-1, 1]$ by multiplying by two then subtracting one before feeding them into the network.
\subsection{Sender Distribution \texorpdfstring{$\sender{\cdot}{\x; \alpha}$}{}}\label{sec:disc_sender}
Given $\omega \in [0,1]$, and a vector of $D$ class indices $\k = \left(\didx{k}{1},\dots,\didx{k}{D}\right) \in \dsd{K}{D}$, let
\begin{align}
p(\didx{k}{d} \mid \didx{x}{d}; \omega) &\defeq \frac{1-\omega}{K} + \omega \delta_{\didx{k}{d} \didx{x}{d}}\label{q_def},
\end{align}
where $\delta_{i j}$ is the Kronecker delta function.
Clearly $p(\didx{k}{d} \mid \didx{x}{d}; \omega) \geq 0\ \forall k$ and $\sum_{k=1}^K p(\didx{k}{d} \mid \didx{x}{d}; \omega) = 1$, so the vector
\begin{align}
a(\didx{x}{d}, \omega) \defeq \left(p(1 \mid \didx{x}{d}; \omega),\dots,p(K \mid \didx{x}{d}; \omega)\right),
\end{align}
defines a valid distribution over $K$ classes.
To simplify notation we will from now on drop the superscripts and refer to $\didx{x}{d}$ as $x$, $p(\didx{k}{d} \mid \didx{x}{d}; \omega)$ as $p(k \mid x; \omega)$ and so on, except where necessary to remove ambiguity.

Consider a vector of integer counts $c = (c_1,\dots,c_K) \in \dsd{m}{K}$, corresponding to the number of times each of the $K$ classes is observed among $m$ independent draws from $a(x, \omega)$.
Then the probability of observing $c$ is given by the following multinomial distribution:
\begin{align}
p(c \mid x, \omega) &= \text{Multi}(m, a(x, \omega))\label{multi_def}\\
&= \frac{m!}{c_1!\dots c_K!} \prod_{k=1}^K \left(p(k \mid x; \omega)\right)^{c_k}\\
&= \frac{m!}{c_1!\dots c_K!} \prod_{k=1}^K \left(\frac{1-\omega}{K} + \omega\delta_{k x}\right)^{c_k}.\label{count_dist}
\end{align}
Now consider the fraction $c_k/m$ of observations of class $k$ in $c$.
Clearly
\begin{align}
\lim_{m\rightarrow \infty} \frac{c_k}{m} = p(k \mid x; \omega),
\end{align}
meaning that for any finite $\omega$ it would be possible to deduce from $c$ what the value of $x$ is if $m$ is sufficiently large.
However as $\omega$ shrinks, $p(k \mid x; \omega)$ becomes closer to uniform, meaning that a larger $m$ is required to unambigously identify $x$ from $c$. 
By defining the accuracy $\alpha \defeq m\omega^2$ and sending $m \rightarrow \infty$ (and hence $\omega \rightarrow 0$ for any finite $\alpha$), $p(c \mid x, \omega)$ can therefore be used to define a continuous-valued sender distribution that smoothly varies from totally uninformative at $\alpha=0$ to totally informative as $\alpha \rightarrow \infty$, like the sender distribution for continuous data.

It can be proved from the central limit theorem that for any set of discrete probabilities $p = \{p_1,\dots,p_K\}$, where $0 < p_k < 1$ $\forall k$, that if $c \sim \text{Multi}(m, p)$ then in the limit $m \rightarrow \infty$ the following result holds~\cite{georgii2008stochastics}:
\begin{align}
&\frac{c - m p}{\sqrt{m p}} \sim \N{0}{\I{K}},
\end{align}
where $\I{K}$ is the $K \times K$ identity matrix.
Therefore
\begin{align}
\lim_{m\rightarrow \infty} p(c_k \mid x,\omega) &= \N{c_k \mid m p(k \mid x; \omega)}{m p(k \mid x; \omega)}\\
&= \frac{1}{\sqrt{2\pi m p(k \mid x; \omega)}}\exp\left(\frac{-\left[c_k - m p(k \mid x,\omega)\right]^2}{2 m p(k \mid x; \omega)}\right).
\end{align}
Now define
\begin{align}
\xi &\defeq 1 + \frac{\omega K}{1-\omega}\label{gamma_def}.
\end{align}
And the length $K$ sender sample $y = (y_1,\dots,y_K)$ as
\begin{align}
y_k &\defeq \left(c_k - \frac{m}{K}\right) \ln \xi\label{y_def}.
\end{align}
Note that $y$, unlike $x$, is continuous ($\Y = \R^{K}, \X = \{1,K\}$), and that $\left(c - \frac{m}{K}\right)$ measures the number of times each class is observed, minus the average number of observations per class.
Intuitively, $y$ provides information about the relative concentration of the classes among the counts, with (since $\ln \xi > 0$) positive values for classes observed more frequently than the mean and negative values for those observed less frequently than the mean. 
As $m \omega^2$ grows the concentration increases around the true class, and hence $y$ become more informative about $x$.

Rearranging Eq.~\ref{y_def},
\begin{align}
c_k &= \frac{y_k}{\ln\xi} + \frac{m}{K}\\
\implies \frac{d c_k}{d y_k} &= \frac{1}{\ln\xi},
\end{align}
which we can use for the following change of variables:
\begin{align}
p(y_k \mid x,\omega) &= \left|\frac{d c_k}{d y_k} \right|p(c_k \mid x, \omega)\\
&= \frac{1}{\ln\xi\sqrt{2\pi m p(k \mid x,\omega)}}\exp\left(\frac{-\left[\frac{y_k}{\ln\xi} + \frac{m}{K} - m p(k \mid x,\omega)\right]^2}{2 m p(k \mid x,\omega)}\right)\label{above},
\end{align}
where we have used the fact that $\xi \geq 1$ and hence $\frac{d c_k}{d y_k} \geq 0$. 
Recall that $\alpha = m\omega^2$ and hence $m = \frac{\alpha}{\omega^2}$,
 which can be substituted into the above to yield
\begin{align}
p(y_k \mid x,\omega) &= \frac{1}{\frac{1}{\omega}\ln\xi}\frac{1}{\sqrt{2\pi \alpha p(k \mid x,\omega)}}\exp\left(\frac{-\left[\frac{y_k}{\frac{1}{\omega}\ln \xi} + \frac{\alpha}{\omega}\left(\frac{1}{K}-p(k \mid x,\omega)\right)\right]^2}{2\alpha p(k \mid x,\omega)}\right).
\end{align}
Substituting from Eq.~\ref{q_def},
\begin{align}
\frac{1}{K}-p(k \mid x,\omega) = \omega\left(\frac{1}{K}-\delta_{kx},\right),
\end{align}
and hence
\begin{align}
p(y_k \mid x,\omega) &= \frac{1}{\frac{1}{\omega}\ln\xi}\frac{1}{\sqrt{2\pi \alpha p(k \mid x,\omega)}}\exp\left(\frac{-\left[\frac{y_k}{\frac{1}{\omega}\ln \xi} - \alpha\left(\delta_{k x} - \frac{1}{K}\right)\right]^2}{2\alpha p(k \mid x,\omega)}\right)\label{p_y_i_omega}.
\end{align}
Applying the identity $\ln(1+x) = \sum_{n=1}^{\infty} \frac{(-1)^{n-1}}{n}x^n$ for $|x| < 1$  to $\ln \xi = \ln\left(1 + \frac{\omega K}{1-\omega} \right)$ it can be seen that
\begin{align}
\ln \xi &\in  \frac{\omega K}{1-\omega} + O(\omega^2),
\end{align}
and hence
\begin{align}
\lim_{\omega \rightarrow 0} \frac{1}{\omega}\ln \xi &= K.\label{gamma_limit}
\end{align}
Furthermore, it follows directly from Eq.~\ref{q_def} that
\begin{align}
\lim_{\omega \rightarrow 0} p(k \mid x,\omega) = \frac{1}{K}\ \forall k \in \ds{K}\label{q_limit}.
\end{align}
Now define
\begin{align}
\sender{y_k}{x;\alpha} \defeq \lim_{\omega \rightarrow 0}p(y_k \mid x,\omega).
\end{align}
Plugging Eq.~\ref{gamma_limit} and \ref{q_limit} into Eq.~\ref{p_y_i_omega},
\begin{align}
\sender{y_k}{x;\alpha} &= \frac{1}{K\sqrt{2\pi \alpha \frac{1}{K}}}\exp\left(\frac{-\left[\frac{y_k}{K} - \alpha\left(\delta_{k x} - \frac{1}{K}\right)\right]^2}{2\alpha \frac{1}{K}}\right)\\
&= \frac{1}{\sqrt{2\pi \alpha K}}\exp\left(\frac{-\left[y_k - \alpha\left(K\delta_{k x} - 1\right)\right]^2}{2\alpha K}\right)\\
&= \N{\alpha\left(K\delta_{k x} - 1\right)}{\alpha K}\label{y_i_dist}.
\end{align}
Restoring the superscript,
\begin{align}
\sender{\ydd{d}}{\xdd{d};\alpha} &= \N{\alpha\left(K \oh{\xdd{d}}{K}- \1{K}\right)}{\alpha K \I{K}}\label{disc_q_def_uni},
\end{align}
where $\1{K}$ is a vector of ones, $\I{K}$ is the identity matrix and $\oh{j}{K}\in \R^{K}$ is the projection from the class index $j$ to the length $K$ one-hot vector defined by $(\oh{j}{K})_k = \delta_{j k}$, and therefore
\begin{align}
\sender{\y}{\x;\alpha} = \N{\y \mid \alpha\left(K \oh{\x}{KD} - \1{KD}\right)}{\alpha K \I{KD}}\label{disc_q_dist},
\end{align}
where $\oh{\x}{KD} \defeq \left(\oh{\xdd{1}}{K},\dots,\oh{\xdd{D}}{K}\right) \in \R^{KD}$.
\subsection{Receiver Distribution \texorpdfstring{$\rec(\cdot \mid \parsn; t, \alpha)$}{}}
Substituting Eq.~\ref{disc_pred_dist} and Eq. \ref{disc_q_dist} into Eq.~\ref{r_dist} gives the following receiver distribution for dimension $d$:
\begin{align}
\rec^{(d)}(\ydd{d} \mid \parsn; t, \alpha) &= \sum_{k=1}^K \out^{(d)}(k \mid \parsn; t) \N{\alpha\left(K \oh{k}{K}- \1{K}\right)}{\alpha K \I{K}}\label{disc_r_dist_uni},\\
\rec(\y \mid \parsn; t, \alpha) &= \prod_{d=1}^D \rec^{(d)}(\ydd{d} \mid \parsn; t, \alpha).\label{disc_r_dist}
\end{align}
\subsection{Bayesian Update Function \texorpdfstring{$h(\parsnt{i-1}, \y, \alpha)$}{}}
Recall from Section~\ref{sec:disc_input} that $(\theta_{i-1})^{(d)}_k$ is the probability assigned to $x^{(d)}=k$ by $p(x^{(d)} \mid \theta_{i-1})$.
Dropping the superscript and returning to the count distribution $p(c \mid x, \omega)$ defined in Eq.~\ref{multi_def}, the posterior probability that $x=k$ after observing $c$ is
\begin{align}
p(k \mid c; \omega) &= \frac{p (c \mid k; \omega) (\theta_{i-1})_k}{\sum_{k'=1}^K p(c \mid k';\omega)(\theta_{i-1})_{k'}}.\label{disc_bayes}
\end{align}
Substituting Eq.~\ref{count_dist} into Eq.~\ref{disc_bayes} and cancelling terms in the enumerator and denominator,
\begin{align}
p(k\mid c;\omega) &= \frac{\left[\frac{1-\omega}{K}\right]^{m-c_k}\left[\frac{1-\omega}{K} + \omega\right]^{c_k} (\theta_{i-1})_k}{ \sum_{k'=1}^K{\left[\frac{1-\omega}{K}\right]^{m-c_{k'}}\left[\frac{1-\omega}{K} +\omega \right]^{c_{k'}}(\theta_{i-1})_{k'}}}\\
&=  \frac{\left[\frac{1-\omega}{K}\right]^{m}\left[1 + \frac{\omega K}{1-\omega}\right]^{c_k}(\theta_{i-1})_k}{ \left[\frac{1-\omega}{K}\right]^{m}\sum_{k'=1}^K{\left[1 + \frac{\omega K}{1-\omega}\right]^{c_{k'}}(\theta_{i-1})_{k'}}}\\
&=  \frac{\left[1 + \frac{\omega K}{1-\omega}\right]^{c_k}(\theta_{i-1})_k}{ \sum_{k'=1}^K{\left[1 + \frac{\omega K}{1-\omega}\right]^{c_{k'}}(\theta_{i-1})_{k'}}}\\
&= \frac{\xi^{c_k}(\theta_{i-1})_k}{ \sum_{k'=1}^K{\xi^{c_{k'}}(\theta_{i-1})_{k'}}}\label{post_prob}.
\end{align}
Now define
\begin{align}
h(\theta, y) &\defeq \frac{e^y\theta}{\sum_{k=1}^K e^{y_{k}}\theta_{k}}\label{disc_update_param_def}.
\end{align}
Substituting the definition of $y_k$ from Eq.~\ref{y_def} into the definition of $h(\theta, y)$ from Eq.~\ref{disc_update_param_def},
\begin{align}
\left(h(\theta_{i-1}, y)\right)_k &= \frac{\exp(-\frac{m}{K} \ln \xi)\exp(c_k\ln \xi )(\theta_{i-1})_k}{\exp(-\frac{m}{K} \ln \xi)\sum_{k'=1}^K \exp(c_{k'} \ln \xi )(\theta_{i-1})_{k'}}\\
&= \frac{\exp(\ln \xi^{c_k} )(\theta_{i-1})_k}{\sum_{k'=1}^K \exp(\ln \xi^{c_{k'}})(\theta_{i-1})_{k'}}\\
&= \frac{\xi^{c_k}(\theta_{i-1})_k}{\sum_{k'=1}^K \xi^{c_{k'}}(\theta_{i-1})_{k'}},\\
\end{align}
and hence, from Eq.~\ref{post_prob},
\begin{align}
h(\theta_{i-1}, y)_k = p(k\mid c;\omega).
\end{align}
Therefore in the limit $m\rightarrow \infty$ with $m\omega^2 = \alpha$, the stochastic parameter update from $\theta_{i-1}$ to $\theta_{i}$ induced by drawing $c$ from $\text{multi}(m, a(x, \omega))$ can be sampled by first drawing $y$ from $\sender{\cdot}{x,\alpha}$ then setting $\theta_{i} = h(\theta_{i-1}, y)$.
Hence the Bayesian update function is 
\begin{align}
h(\parsnt{i-1}, \y, \alpha) \defeq \frac{e^{\y}\parsnt{i-1}}{\sum_{k=1}^K e^{\y_k}(\parsnt{i-1})_{k}},\label{disc_param_update_function}
\end{align}
where the redundant parameter $\alpha$ has been included for consistency with the update function for continuous data.
\subsection{Bayesian Update Distribution \texorpdfstring{$\update(\cdot \mid \parsnt{i-1}, \x; \alpha)$}{}}
Substituting Eqs.~\ref{disc_q_dist} and \ref{disc_param_update_function} into Eq.~\ref{param_update_dist},
\begin{align}
\update(\parsn \mid \parsnt{i-1}, \x; \alpha) &= \E_{\N{\y \mid \alpha\left(K \oh{\x}{KD} - \1{KD}\right)}{\alpha K \I{KD}}} \delta\left(\parsn - \frac{e^{\y}\parsnt{i-1}}{\sum_{k=1}^K e^{\y_k}(\parsnt{i-1})_{k}}\right).\label{disc_par_update_def}
\end{align}
\subsection{Additive Accuracies}\label{sec:disc_additive}
It follows from the definition of the update distribution that if $y_a$ is drawn from $\sender{\cdot}{x; \alpha_a}$ then $\parst{i-1} = h(y_a, \parst{i-2})$ is drawn from $p(\cdot \mid \parst{i-2}, x; \alpha_a)$.
Furthermore, if $y_b$ is drawn from $\sender{\cdot}{x; \alpha_b}$ then $\parst{i} = h(y_b, \parst{i-1}) = h(y_b, h(y_a, \parst{i-2}))$ is drawn from $\E_{\update(\parst{i-1} \mid \parst{i-2}, x; \alpha_a)} \update(\parst{i} \mid \parst{i-1}, x; \alpha_b)$.
Substituting the definition of $h$ from Eqn~\ref{disc_update_param_def},
\begin{align}
h(y_b, h(y_a, \theta_{i-2})) &= \frac{\exp(y_b) \frac{\exp(y_a)\theta_{i-2}}{\sum_{k'=1}^K\exp\left((y_a)_{k'}\right)(\theta_{i-2})_{k'}}}{\sum_{k=1}^K \exp\left((y_b)_k\right)\frac{\exp\left((y_a)_k\right)(\theta_{i-2})_k}{\sum_{k'=1}^K\exp\left((y_a)_{k'}\right)(\theta_{i-2})_{k'}}}\\
&= \frac{\exp(y_b) \exp(y_a)\theta_{i-2}}{\sum_{k=1}^K \exp\left((y_b)_k\right) \exp\left((y_a)_k\right)(\theta_{i-2})_k}\\
&= \frac{\exp(y_a + y_b)\theta_{i-2}}{\sum_{k=1}^K \exp\left((y_a+y_b)_k\right)(\theta_{i-2})_k}\\
&= h(y_a+y_b, \theta_{i-2}).
\end{align}
From Eqn.~\ref{disc_q_def_uni}
\begin{align}
y_{a} &\sim \N{\alpha_a\left(K \oh{x}{K} - \1{K}\right)}{\alpha_a K \I{K}},\\
y_{b} &\sim \N{\alpha_b\left(K \oh{x}{K} - \1{K}\right)}{\alpha_b K \I{K}}\\
\end{align}
and hence, from Identity~\ref{normal_identity_2}
\begin{align}
y_{a} + y_b &\sim \N{(\alpha_a+\alpha_b)\left(K \oh{\x}{KD} - \1{K}\right)}{(\alpha_a+\alpha_b) K \I{K}}.
\end{align}
Therefore, if $y$ is drawn from $\sender{\cdot}{x; \alpha_a + \alpha_b}$ and $\parst{i} = h(y, \parst{i-2})$ then $\parst{i}$ is drawn from\\ $\E_{\update(\parst{i-1} \mid \parst{i-2}, x; \alpha_a)} \update(\parst{i} \mid \parst{i-1}, x; \alpha_b)$ and
\begin{align}
\E_{\update(\parsnt{i-1}\mid \parsnt{i-2}, \x; \alpha_a)} \update(\parsnt{i} \mid \parsnt{i-1}, \x; \alpha_b) =  \update(\parsnt{i} \mid \parsnt{i-2}, \x; \alpha_a + \alpha_b),
\end{align}
as required. 
\subsection{Accuracy Schedule \texorpdfstring{$\beta(t)$}{}}\label{sec:disc_beta}
As with continuous data, the guiding heuristic for $\beta(t)$ was to decrease the expected entropy of the input distribution linearly with $t$. In the continuous case, where the entropy is a deterministic function of $\sigma^2$, applying the heuristic was straightforward; in the discrete case an explicit computation of $\E_{\flow(\parsn \mid x; t)} H\left[\inp(\x \mid \parsn)\right]$ would be needed.
We were unable to derive an analytic expression for this term, but found that
\begin{align}
\beta(t) = t^2 \beta(1)\label{disc_beta_t}
\end{align}
was a reasonable approximation, with $\beta(1)$ determined empirically for each experiment.
Therefore
\begin{align}
\alpha(t) = \frac{d \beta(t)}{d t} = \beta(1) 2t.\label{disc_alpha_t}
\end{align}

\begin{figure}[t!]
\begin{centering}
\includegraphics[width=0.6\textwidth]{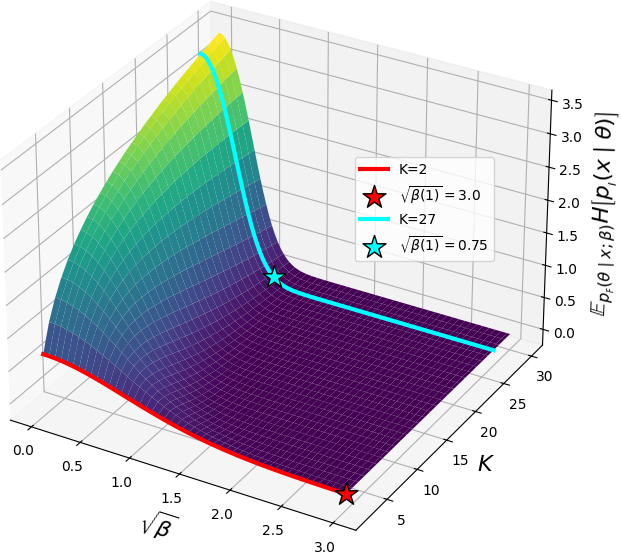}
\caption{\textbf{Accuracy schedule vs. expected entropy for discrete data}. The surface plot shows the expectation over the parameter distribution $p(\theta \mid x; \beta)$ of the entropy of the categorical input distribution $p(x \mid \theta)$ for $K=2$ to $30$ and $\sqrt{\beta}=0.01$ to $3$. The red and cyan lines highlight the entropy curves for 2 and 27 classes, the two values that occur in our experiments. The red and cyan stars show the corresponding values we chose for $\sqrt{\beta(1)}$.}
\end{centering}
\label{fig:disc_acc_vs_entropy}
\end{figure}
\subsection{Bayesian Flow Distribution \texorpdfstring{$\flow(\cdot \mid \x; t)$}{}}
Substituting Eq.~\ref{disc_par_update_def} into Eq.~\ref{param_flow_dist},
\begin{align}
\flow(\parsn \mid \x; t) &= \E_{\N{\y \mid \beta(t)\left(K \oh{\x}{KD} - \1{KD}\right)}{\beta(t) K \I{KD}}} \delta\left(\parsn - \frac{e^{\y}\parsnt{0}}{\sum_{k=1}^K e^{\y_k}(\parsnt{0})_{k}}\right).
\end{align}
Since the prior is uniform with $\parsnt{0} = \vec{\frac{1}{K}}$, this reduces to
\begin{align}
\flow(\parsn \mid \x; t) &= \E_{\N{\y \mid \beta(t)\left(K \oh{\x}{KD} - \1{KD}\right)}{\beta(t) K \I{KD}}} \delta\left(\parsn - \text{softmax}(\y)\right),\label{disc_param_flow}
\end{align}
which can be sampled by drawing $\y$ from $\N{\beta(t)\left(K \oh{\x}{KD} - \1{KD}\right)}{\beta(t) K \I{KD}}$ then setting $\parsn = \text{softmax}(\y)$.

The sender distribution for discrete data can therefore be interpreted as a source of softmax logits for the Bayesian flow distribution; the higher the sender accuracy $\alpha$ is, the larger in expectation the logits corresponding to $\x$ will be in $\y$, hence the closer $\parsn$ will be to $\oh{\x}{KD}$ and the more information the network will gain about $\x$.
\begin{figure}[t!]
\includegraphics[width=\textwidth]{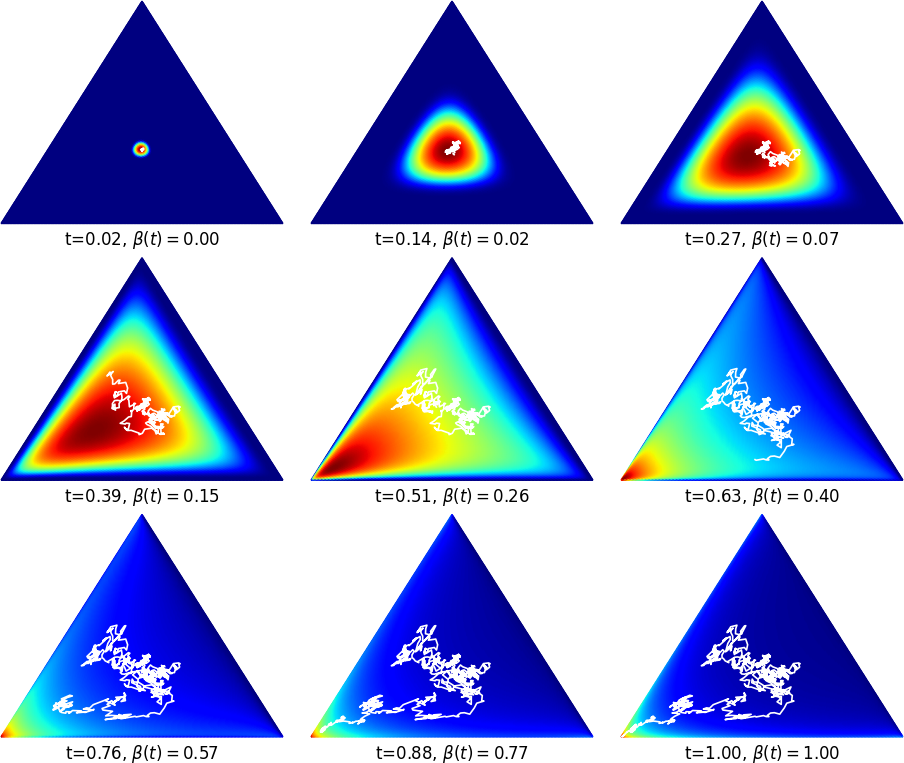}
\caption{\textbf{Bayesian flow for discrete data}. For $K=3$, the input distribution parameters $\parsn = (\theta_1, \theta_2, \theta_3)$ can be visualised as points on the 2-simplex, with the data $x$ corresponding to the bottom left corner. For the accuracy schedule $\beta(t)$ from Eq.~\ref{disc_beta_t}, the white line shows a single input parameter trajectory starting from $\parsnt{0} = \left(\frac{1}{3},\frac{1}{3},\frac{1}{3}\right)$ and evolving under the Bayesian update distribution $\update(\parsnt{i} \mid \parsnt{i-1}; x, \beta(t_i)-\beta(t_{i-1}))$ from Eq.~\ref{disc_par_update_def}, superimposed on log-scale heatmaps of the Bayesian flow distribution $\flow(\parsn \mid x; t)$ from Eq.~\ref{disc_param_flow}, plotted at regular intervals from $t=0.02$ to $1$.}
\label{fig:bayes_flow_disc}
\end{figure}
\begin{figure}[t!]
\includegraphics[width=\textwidth]{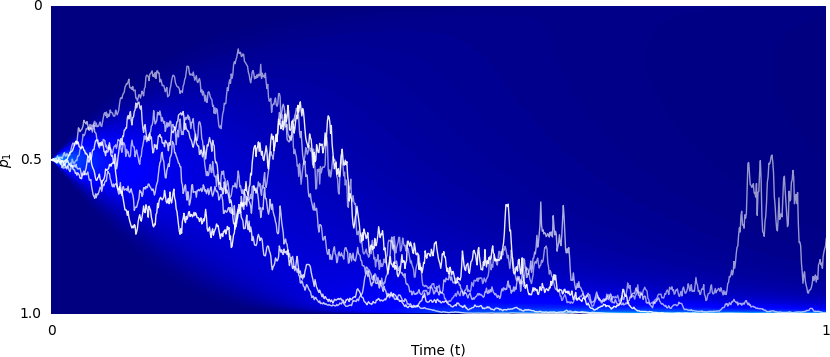}
\caption{\textbf{Bayesian flow for binary data}.
For the input probability $p_1$ of class one, the plot shows several parameter trajectories starting from $p_1 = 0.5$ at $t=0$ and evolving under the Bayesian update distribution to $t=1$, superimposed on a log-scale heatmap of the Bayesian flow distribution.
$\beta(1) = 4$ in this plot.
Note that both here and in Figure~\ref{fig:bayes_flow_disc} the convergence towards the data appears slower and noisier than the equivalent trajectories for continuous data in Figure~\ref{fig:cts_param_flow}. 
This is a fundamental consequence of discreteness: since all points in $\X$ are equidistant the input distributions cannot concentrate on values close to $\x$ as the trajectories progress.}
\label{fig:bayes_flow_bin}
\end{figure}
\subsection{Reconstruction Loss \texorpdfstring{$L^r(\x)$}{}}\label{sec:disc_reconstruction}
The reconstruction loss for discrete data is
\begin{align}
L^r(\x) = -\E_{\flow(\parsn \mid \x, 1)}\ln \out(\x \mid \parsn; 1).\label{disc_reconstruction_loss}
\end{align}
\subsection{Discrete-time Loss \texorpdfstring{$L^{n}(\x)$}{}}
From Eqs.~\ref{disc_q_def_uni} and \ref{disc_r_dist_uni},
\begin{align}
&\kl{\sender{\cdot}{\xdd{d}; \alpha}}{\rec^{(d)}(\cdot \mid \parsn; t, \alpha)}\\ 
&\qquad=\kl{\N{\alpha\left(K \oh{\xdd{d}}{K} - \1{K}\right)}{\alpha K \I{K}}}{\sum_{k=1}^K \out^{(d)}(k \mid \parsn; t) \N{\alpha\left(K\oh{k}{K}- \1{K}\right)}{\alpha K \I{K}}}.
\end{align}
Therefore, substituting into Eq.~\ref{disc_t_loss_exp},
\begin{align}
&L^{n}(\x) = n\E_{i \sim U\{1,n\},p(\parsn \mid \x ; t_{i-1}),\N{\y \mid \alphat{i}\left(K \oh{\x}{KD} - \1{KD}\right)}{\alphat{i} K \I{KD}}} \ln \N{\y \mid \alphat{i}\left(K \oh{\x}{KD} - \1{KD}\right)}{\alphat{i} K \I{KD}}\\
&\qquad\qquad\qquad-\sum_{d=1}^D \ln \left(\sum_{k=1}^K \out^{(d)}(k \mid \parsn; t_{i-1}) \N{\ydd{d} \mid \alphat{i}\left(K\oh{k}{K}- \1{K}\right)}{\alphat{i} K \I{K}}\right),\label{discdisc_t_loss_exp}
\end{align}
where, from Eq.~\ref{disc_beta_t},
\begin{align}
\alpha_i &= \beta(t_i) - \beta(t_{i-1})\\
&= \beta(1)\left(\left(\frac{i}{n}\right)^2 - \left(\frac{i-1}{n}\right)^2\right)\\
&= \beta(1)\left(\frac{2i -1}{n^2}\right).
\end{align}
\subsection{Continuous-time Loss \texorpdfstring{$L^{\infty}(\x)$}{}}
Let
\begin{align}
\vv \defeq \frac{\y}{\alpha} + 1,
\end{align}
and apply Identity~\ref{normal_identity_1} to see that if
\begin{align}
\ydd{d} \sim \sender{\cdot}{\xdd{d}; \alpha} = \N{\alpha(K\oh{\xdd{d}}{K} - \1{K})}{\alpha K\I{K}},
\end{align}
then
\begin{align}
\didx{v}{d} \sim \N{K\oh{\xdd{d}}{K}}{\frac{K}{\alpha}\I{K}},
\end{align}
and similarly if
\begin{align}
\ydd{d} \sim \rec^{(d)}(\cdot \mid \parsn; t, \alpha) = \sum_{k=1}^K \out^{(d)}(k \mid \parsn; t) \N{\ydd{d} \mid \alpha\left(K\oh{k}{K}- \1{K}\right)}{\alpha K \I{K}},
\end{align}
then 
\begin{align}
\didx{v}{d} &\sim \sum_{k=1}^K \out^{(d)}(k \mid \parsn; t)\N{K\oh{k}{K}}{\frac{K}{\alpha}\I{K}}\\
&= K\sum_{k=1}^K \out^{(d)}(k \mid \parsn; t) \delta(\cdot - \oh{k}{K}) \ast \N{\0{K}}{\frac{K}{\alpha}\I{K}}.
\end{align}
The Kullback-Leibler divergence is invariant under affine transformations of variables, hence
\begin{align}
&\kl{\sender{\cdot}{\xdd{d}; \alpha}}{\rec^{(d)}(\cdot \mid \parsn; t, \alphat{i})}\\
&\qquad= \kl{\N{K\oh{\xdd{d}}{K}}{\frac{K}{\alpha}\I{K}}}{\sum_{k=1}^K \out^{(d)}(k \mid \parsn; t)K \delta(\cdot - \oh{k}{K}) \ast \N{\0{K}}{\frac{K}{\alpha}\I{K}}}.
\end{align}
Now set $C=K$, $g(\xdd{d}) = K \oh{\xdd{d}}{K}$ and
\begin{align}
P^{(d)}(\parsn, t) = K \sum_{k=1}^K \out^{(d)}(k \mid \parsn; t) \delta(\cdot - \oh{k}{K}),
\end{align}
which has finite variance and the following finite expectation
\begin{align}
E[P^{(d)}(\parsn, t)] = K \mathbf{\pred{e}}^{(d)}(\parsn, t),\label{disc_p_expectation}
\end{align}
where
\begin{align}
\mathbf{\pred{e}}^{(d)}(\parsn, t) \defeq \sum_{k=1}^K \out^{(d)}(k \mid \parsn; t)\oh{k}{K}.
\end{align}
The conditions in Eq.~\ref{convkl} are therefore satisfied and Eqs.~\ref{disc_p_expectation} and \ref{disc_alpha_t} can be substituted into Eq.~\ref{cts_t_loss} to yield
\begin{align}
L^{\infty}(\x) = K \beta(1) \E_{t\sim U(0,1),\flow(\parsn \mid \x, t)} t \|\oh{\x}{KD} - \mathbf{\pred{e}}(\parsn, t)\|^2,
\end{align}
where
\begin{align}
\mathbf{\pred{e}}(\parsn, t) \defeq \left(\mathbf{\pred{e}}^{(1)}(\parsn, t),\dots,\mathbf{\pred{e}}^{(D)}(\parsn, t)\right).
\end{align}
\subsection{Pseudocode}
Pseudocode for evaluating the discrete-time loss $L^n(\x)$ and continuous-time loss $L^{\infty}(\x)$ for discrete data is presented in Algorithms~\ref{alg:n_step_loss_disc} and \ref{alg:cts_t_loss_disc}, while sample generation is presented in Algorithm~\ref{alg:samp_gen_disc}.
\begin{algorithm}[H]
\begin{algorithmic}
\Function{\lstinline{discrete_output_distribution}}{$\parsn \in [0,1]^{KD}, t \in [0,1]$}
\State Input $(\parsn, t)$ to network, receive $\net(\parsn, t)$ as output
\For{$d \in \ds{D}$}
\If{$k = 2$}
\State $\out^{(d)}(1 \mid \parsn; t) \gets \sigma\left(\net^{(d)}(\parsn, t)\right)$
\State $\out^{(d)}(2 \mid \parsn; t) \gets 1 - \out^{(d)}(1 \mid \parsn; t)$
\Else
\State $\out^{(d)}(\cdot \mid \parsn; t) \gets \text{softmax}(\net^{(d)}(\parsn, t))$
\EndIf
\EndFor
\State \textbf{Return} $\outn(\cdot \mid \parsn; t)$
\EndFunction
\end{algorithmic}
\end{algorithm}
\begin{algorithm}[H]
\caption{Discrete-Time Loss $L^{n}(\x)$ for Discrete Data}\label{alg:n_step_loss_disc}
\begin{algorithmic}
\State \textbf{Require:} $\beta(1) \in \R^+$, number of steps $n \in \mathbb{N}$, number of classes $K \in \mathbb{N}$
\State \textbf{Input:} discrete data $\x \in \ds{K}^D$
\State $i \sim U\{1, n\}$
\State $t \leftarrow (i-1)/n$
\State $\beta \leftarrow \beta(1)t^2$
\State $\y' \sim \N{\beta\left(K\oh{\x}{KD}-
\1{KD}\right)}{\beta K\I{KD}}$
\State $\parsn \gets \text{softmax}(\y')$
\State $\outn(\cdot \mid \parsn; t) \leftarrow \text{\sc{\lstinline{discrete_output_distribution}}}(\parsn, t)$
\State $\alpha \leftarrow \beta(1)\left(\frac{2i -1}{n^2}\right)$
\State $\y \sim \N{\alpha\left(K\oh{\x}{KD}-
\1{KD}\right)}{\alpha K\I{KD}}$
\State $ L^n(\x) \gets n \left[\ln \N{\y \mid \alpha\left(K\oh{\x}{KD}-
\1{KD}\right)}{\alpha K\I{KD}} - \sum_{d}\ln \left(\sum_{k} \out^{(d)}(k \mid \parsn; t) \N{\ydd{d} \mid \alpha\left(K\oh{k}{K}-
\1{K}\right)}{\alpha K\I{K}}\right)\right]$
\end{algorithmic}
\end{algorithm}
\begin{algorithm}[H]
\caption{Continuous-Time Loss $L^{\infty}(\x)$ for Discrete Data}\label{alg:cts_t_loss_disc}
\begin{algorithmic}
\State \textbf{Require:} $\beta(1) \in \R^+$, number of classes $K \in \mathbb{N}$
\State \textbf{Input:} discrete data $\x \in \ds{K}^D$
\State $t \sim U(0,1)$
\State $\beta \leftarrow \beta(1)t^2$
\State $\y \sim \N{\beta\left(K\oh{\x}{KD}-
\1{KD}\right)}{\beta K\I{KD}}$
\State $\parsn \gets \text{softmax}(\y)$
\State $\outn(\cdot \mid \parsn; t) \leftarrow \text{\sc{\lstinline{discrete_output_distribution}}}(\parsn, t)$
\State $\mathbf{\pred{e}}(\parsn, t) \gets \left(\sum_{k}\out^{(1)}(k \mid \parsn; t)\oh{k}{K},\dots,\sum_{k} \out^{(D)}(k \mid \parsn; t)\oh{k}{K}\right)$
\State $ L^{\infty}(\x) \gets K\beta(1)t\left\|\oh{\x}{KD} -\mathbf{\pred{e}}(\parsn, t) \right\|^2$
\end{algorithmic}
\end{algorithm}
\begin{algorithm}[H]
\caption{Sample Generation for Discrete Data}\label{alg:samp_gen_disc}
\begin{algorithmic}
\State \textbf{Require:} $\beta(1) \in \R^+$, number of steps $n \in \mathbb{N}$, number of classes $K \in \mathbb{N}$
\State $\parsn \gets \left(\vec{\frac{1}{K}}\right)$
\For{$i = 1$ to $n$} 
    \State $t \leftarrow \frac{i-1}{n}$
    \State $\k \sim \text{\sc{\lstinline{discrete_output_distribution}}}(\parsn, t)$
    \State $\alpha \leftarrow \beta(1)\left(\frac{2i -1}{n^2}\right)$
    \State $\y \sim \N{\alpha\left(K\oh{\k}{KD}-
    \1{KD}\right)}{\alpha K\I{KD}}$
    \State $\parsn' \gets e^{\y} \parsn$
    \State $\parsn \gets \frac{\parsn'}{\sum_k \parsn'_k}$
\EndFor
\State $\k \sim \text{\sc{\lstinline{discrete_output_distribution}}}(\parsn, 1)$
\State \textbf{Return} $\k$
\end{algorithmic}
\end{algorithm}
\section{Experiments}\label{sec:experiments}
We evaluated Bayesian Flow Networks (BFNs) on the following generative benchmarks: CIFAR-10 (32$\times$32 8-bit color images), dynamically binarized MNIST (28$\times$28 binarized images of handwritten digits) and text8 (length 256 character sequences with a size 27 alphabet).
The continuous (Sec.~\ref{sec:cts}) and discretised (Sec.~\ref{sec:discretised}) versions of the system were compared on CIFAR-10, while the discrete version (Sec.~\ref{sec:discrete}) was applied to the other datasets.
In all cases, the network was trained using the continuous-time loss $L^{\infty}(\x)$, with the discrete-time loss $L^{n}(\x)$ evaluated for testing only, with various values of $n$.
Standard network architectures and training algorithms were used throughout to allow for direct comparison with existing methods.
Because the focus of this paper is on probabilistic modelling rather than image generation, FID scores were not calculated. However, examples of generated data are provided for all experiments.

\begin{table}[t!]
\centering
\begin{tabular}{@{}llc@{}}
\toprule
Model                                                               & \multicolumn{1}{c}{Dynamically Binarized MNIST} & CIFAR-10 \\ \midrule
Improved DDPM \citep{nichol2021improved}                            &                                     & 2.94     \\
NVAE \citep{vahdat2020nvae}                                         & \multicolumn{1}{c}{78.01}           & 2.91     \\
PixelVAE++\textsuperscript{\dag} \citep{sadeghi2019pixelvae++}        & \multicolumn{1}{c}{78.00}           & 2.90     \\
Locally Masked PixelCNN\textsuperscript{\dag} \citep{jain2020locally} & \multicolumn{1}{c}{77.58}           & 2.89     \\
Image Transformer\textsuperscript{\dag} \citep{parmar2018image}       &                                     & 2.89     \\
DDPM++ \citep{kim2021soft}                                          &                                     & 2.88     \\
LSGM \citep{vahdat2021score}                                        &                                     & 2.87     \\
VDVAE \citep{child2020very}                                         & \multicolumn{1}{c}{}                & 2.87     \\
Sparse Transformer\textsuperscript{\dag} \citep{child2019generating}  &                                     & 2.80     \\
Reflected Diffusion \citep{lou2023reflected}                        &                                     & 2.68     \\
VDM \citep{kingma2021variational}                                   &                                     & 2.65\\    
ARDM-Upscale 4 \citep{hoogeboom2021autoregressive}                  &                                     & 2.64
\\ \midrule
\textbf{BFN}                                                        & \multicolumn{1}{c}{77.87}                &      2.66    \\ 
\midrule
CR-NVAE* \citep{sinha2021consistency}                                & \multicolumn{1}{c}{76.93}          & 2.51    \\
VDM* \citep{kingma2021variational}                                   & \multicolumn{1}{c}{}                & 2.49    \\ \bottomrule
\end{tabular}
\caption{\textbf{Comparison of dynamically binarized MNIST and CIFAR-10 results with other methods}. The best published results for both datasets (*) use data augmentation for regularization. Results for models marked with (\textsuperscript{\dag}) are exact values; all other results are upper bounds.}
\label{tab:mnist-cifar-results}
\end{table}
\subsection{Dynamically Binarized MNIST}
\begin{table}[t!]
\centering
\begin{tabular}{cccccccc}
\toprule
$n$-steps & 10 & 25 & 50 & 100 & 784 & 1000 & $\infty$\\ 
\midrule
NPI & $95.21$ & $84.40$ & $81.06$ & $79.46$ & $78.02$ & $78.07$ & $77.87$ \\ 
\bottomrule
\end{tabular}
\caption{\textbf{Dynamically binarized MNIST results}. NPI is nats per image  averaged over 2,000 passes through the test set with $L^{n}(\x)$ or $L^{\infty}(\x)$ sampled once per test image per pass. The reconstruction loss $L^r(\x)$ (included in NPI) was $0.46$. 784 is the total number of pixels per image, hence the number of steps required to generate an image with an autoregressive model.}
\label{tab:mnist_results}
\end{table}

\textbf{Data.}\quad 
The binarized MNIST benchmark data was originally created from the MNIST dataset of handwritten images \citep{lecun-mnisthandwrittendigit-2010} by treating the grayscale pixel intensities as Bernoulli probabilities and sampling a particular binarization \citep{salakhutdinov2008quantitative} which is held fixed during training.
In recent years, a variant of the same benchmark has become more popular, with a new binarization sampled from the probabilities for every training batch.
The two are not comparable, as the latter, which we refer to as dynamically binarized MNIST, effectively has a larger training set and hence gives better test set performance.
All our experiments and the results referenced from the literature use dynamically binarized MNIST.
\\

\begin{figure}[t!]
\centering
\begin{subfigure}{.49\textwidth}
  \centering
  \includegraphics[width=0.9\linewidth]{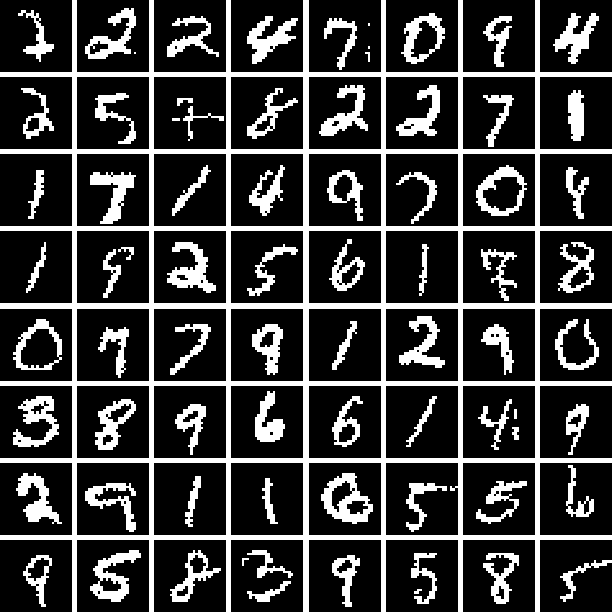}
  \caption{Test Data}
\end{subfigure}
\begin{subfigure}{.49\textwidth}
  \centering
  \includegraphics[width=0.9\linewidth]{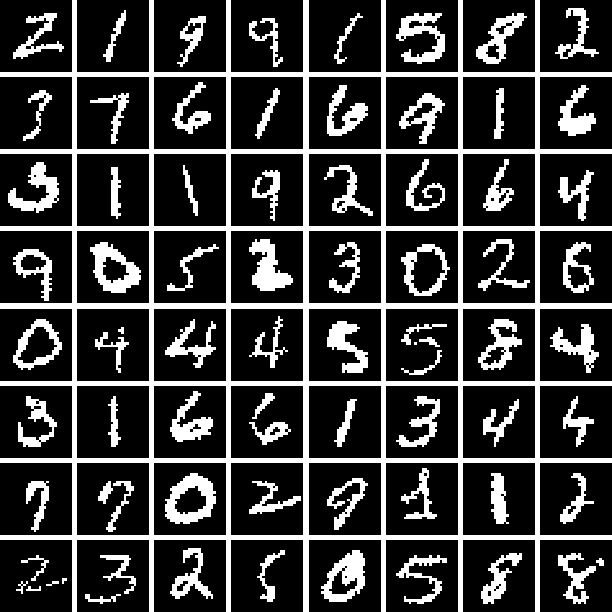}
  \caption{Generated Data}
\end{subfigure}
\caption{\textbf{MNIST real and generated data}. Samples generated with 100 steps.}
\end{figure}

\noindent\textbf{Setup.}\quad The network architecture was based on a U-Net introduced for diffusion models~\citep{nichol2021improved}.
Starting from the hyperparameters used for the CIFAR-10 dataset (see Appendix A in the above reference), we made the following modifications: the number of resblocks was reduced from three to two and the layer widths were reduced from $[C,2C,2C,2C]$ to $[C,2C,2C]$ with $C=128$.
Finally, the input and output of the standard network were concatenated and projected back to the output size.
600 randomly selected training images (1\% of the training set) were used as a validation set.
The optimiser was AdamW~\citep{loshchilov2017decoupled} with learning rate $0.0001$, weight decay 0.01 and $(\beta_1,\beta_2) = (0.9,0.98)$.
Dropout was used with probability 0.5, the training batch size was 512, and $\beta(1)$ was set to $3$ (see Sec.~\ref{sec:disc_beta}).
The network was trained for $150\,000$ weight updates until early stopping.
An exponential moving average of model parameters with a decay rate of 0.9999 was used for evaluation and sample generation.
The total number of learnable parameters was approximately 25M.
\\

\begin{figure}[t!]
\centering
\begin{subfigure}{\textwidth}
    \centering
    \includegraphics[width=\linewidth]{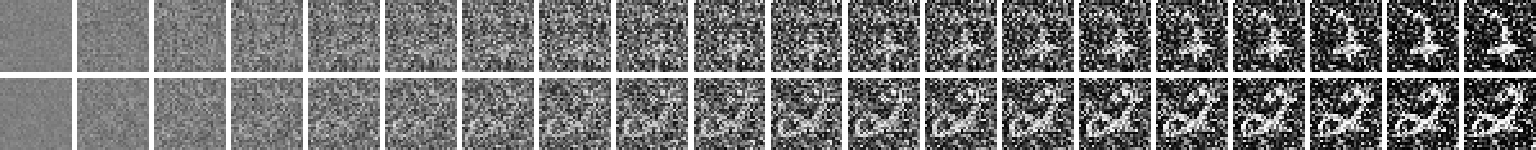}
    \caption{Input Distribution}
\end{subfigure}
\par\bigskip
\begin{subfigure}{\textwidth}
    \centering
    \includegraphics[width=\linewidth]{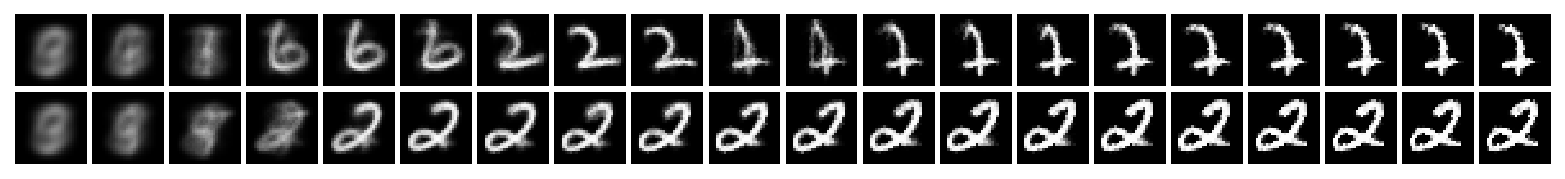}
    \caption{Output Distribution}
\end{subfigure}

\caption{\textbf{MNIST Input and output distributions}. For two test set images the figure shows the white pixel probability at 20 steps evenly spaced between $t=0$ and $t=1/3$. Note how the input probabilities are initially uniform whereas the output distribution initially predicts a superposition of multiple digits, closely matching the per-pixel marginal prior over the training set: this supports our belief that the network learns to correct for the uniform prior in the input distribution. Also note that the output distribution is much less noisy than the input distribution, and that it changes more dramatically as new information is received (e.g. the network appears to switch from predicting a $6$ to a $2$ to a $7$ for the first image). This highlights the network's use of context to resolve ambiguity and noise in the input distribution.}
\end{figure}

\noindent\textbf{Results.}\quad As can be seen from Table~\ref{tab:mnist-cifar-results}, BFN is close to state-of-the-art for this task with no data augmentation. 
Table~\ref{tab:mnist_results} shows the expected inverse relationship between loss and number of steps.
Direct optimisation of the $n$-step loss would likely lead to reduced loss for low values of $n$; however we leave that for future work.
One issue is that the reconstruction loss was relatively high at 0.46 nats per image. 
The obvious way to decrease this would be to increase $\beta(1)$, but we found that doing so led to slower learning and worse performance.
Along with the loss curves in Figure~\ref{fig:bin_mnist_loss}, this suggests that the accuracy schedule is suboptimal for binary data.

\begin{figure}[t!]
\centering
\begin{subfigure}{.49\textwidth}
  \centering
  \includegraphics[width=0.9\linewidth]{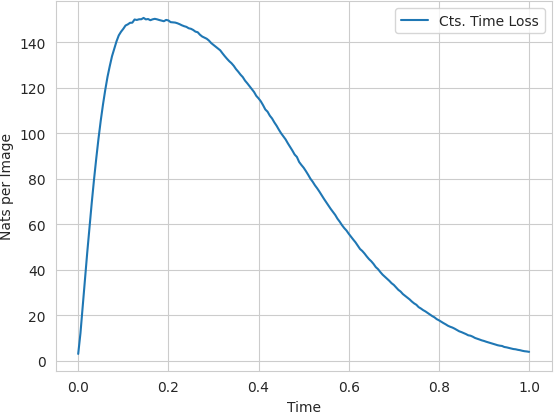}
\end{subfigure}
\begin{subfigure}{.49\textwidth}
  \centering
  \includegraphics[width=0.9\linewidth]{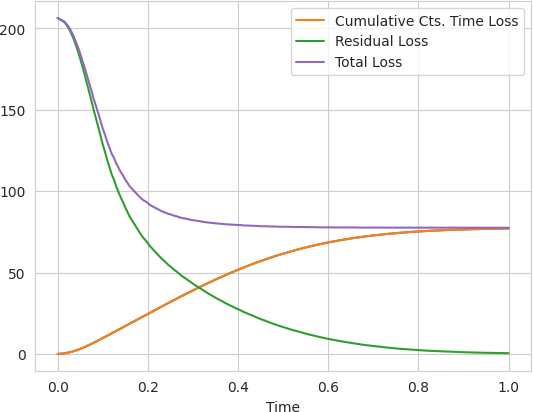}
\end{subfigure}
\caption{\textbf{MNIST losses against time}. The left plot shows the mean over the test set of the cts. time loss $L^{\infty}(\x)$ used for training for transmission time $t$ between 0 and 1. The right plot shows the average cumulative value of $L^{\infty}(\x)$ up to $t$, along with the reconstruction loss $L^r(\x)$ evaluated at $t$ and the sum of these two losses, which would be the total loss if the transmission process halted at $t$.
Note the unevenness of $L^{\infty}(\x)$ against $t$: we speculate that rescaling $\beta(t)$ to make the loss curve more uniform could improve performance.}
\label{fig:bin_mnist_loss}
\end{figure}
\subsection{CIFAR-10}
\begin{table}[t!]
\centering
\begin{tabular}{ccccc}
\toprule
$n$-steps & Cts. (256 bins) & Discd. (256 bins) & Cts. (16 bins) & Discd. (16 bins)\\ 
\midrule
10  & 6.18 & 3.91 & 1.42 & 1.16\\ 
25  & 3.65 & 3.16 & 1.11 & 1.02\\ 
50  & 3.10 & 2.93 & 1.03 & 0.98\\ 
100 & 2.86 & 2.81 & 0.99 & 0.96 \\ 
250 & 2.73 & 2.73 & 0.97 & 0.94\\ 
500 & 2.69 & 2.71 & 0.96 & 0.94\\ 
1000& 2.67 & 2.70 & 0.96 & 0.94\\ 
\midrule
$\infty$ &  2.66 & 2.68 & 0.96 & 0.94\\ 
\bottomrule
\toprule
$L^r(\x)$ &  0.001 & 0.003 & 0.073 & 0.070\\ 
\midrule
Updates & 5M & 5M & 250K & 1M \\ 
\bottomrule
\end{tabular}
\caption{\textbf{CIFAR-10 results}. All losses are bits per dimension (BPD) averaged over 100 passes through the test set with $L^{n}(\x)$ or $L^{\infty}(\x)$ sampled once per test image per pass. The reconstruction losses $L^r(\x)$ (included in BPD) and the number of training updates for each network are shown below.}
\label{tab:cifar_results}
\end{table}

\textbf{Data.}\quad Two sets of generative modelling experiments were conducted on the CIFAR-10 database~\citep{Krizhevsky09learningmultiple}, one at the standard bit-depth of 8, corresponding to 256 discretised bins per colour channel, and one at a reduced bit-depth of 4, corresponding to $16$ bins per channel.
In both cases the bins evenly partitioned the interval $[-1,1]$ and the data was pre-processed by assigning each channel intensity to the nearest bin centre, as described in Section~\ref{sec:discretised}.
The purpose of comparing 16 and 256 bin discretisation was twofold: (1) to test the hypothesis that the advantage of training with the discretised loss from Section~\ref{sec:discretised} rather than the continuous loss from Section~\ref{sec:cts} would be greater when the number of bins was lower, and (2) to test whether modelling the data at lower precision would lead to improved perceptual quality.
No data augmentation, such as horizontal flips or random crops, was used on the training set.
\\

\begin{figure}[t!]
\centering
\begin{subfigure}{.5\textwidth}
  \centering
  \includegraphics[width=0.9\linewidth]{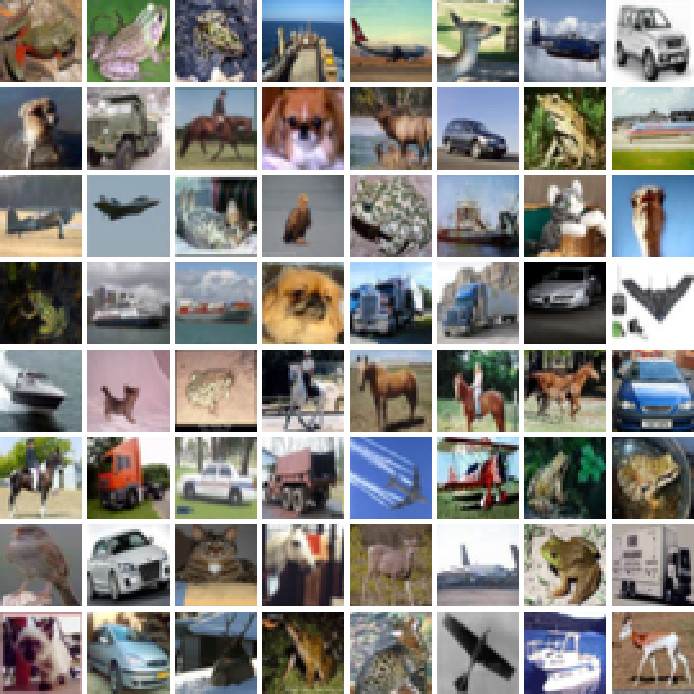}
  \caption{Test Data (256 bins)}
\end{subfigure}%
\begin{subfigure}{.5\textwidth}
  \centering
  \includegraphics[width=0.9\linewidth]{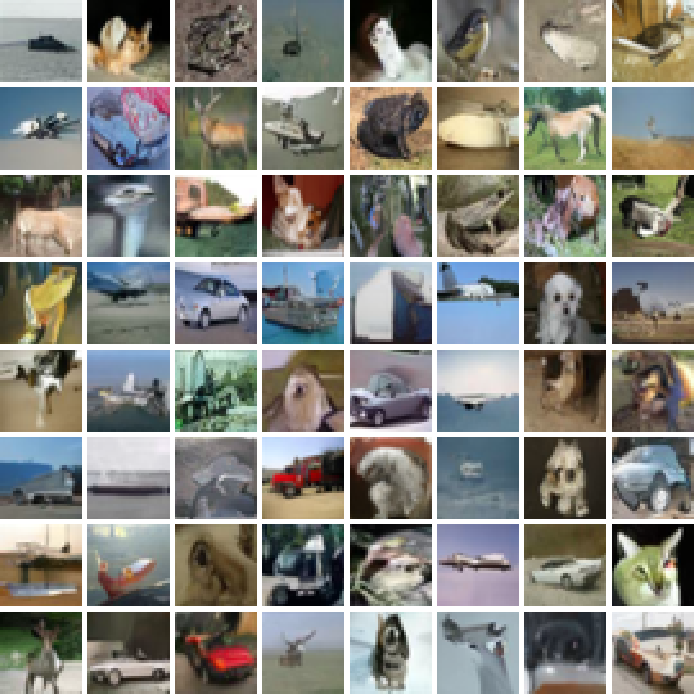}
  \caption{Generated Data (256 bins)}
\end{subfigure}
\par\bigskip
\begin{subfigure}{.5\textwidth}
  \centering
  \includegraphics[width=0.9\linewidth]{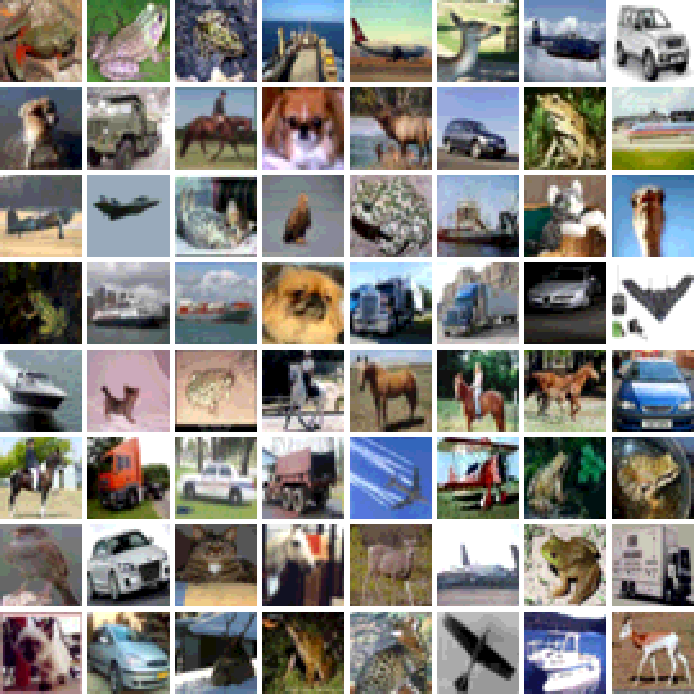}
  \caption{Test Data (16 bins)}
\end{subfigure}%
\begin{subfigure}{.5\textwidth}
  \centering 
  \includegraphics[width=0.9\linewidth]{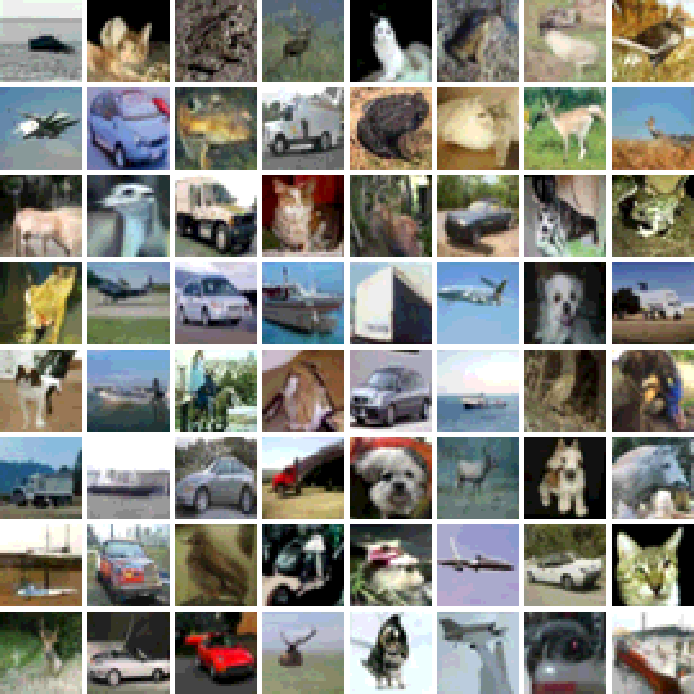}
  \caption{Generated Data (16 bins)}
\end{subfigure}
\caption{\textbf{CIFAR-10 real and generated data}. Samples generated with 4,000 steps, using networks trained with discretised loss. The same random seed was used for both sets of samples. Note the improved image quality of the 16 bin samples compared to the 256 bin samples.}
\label{fig:cifar_samples}
\end{figure}

\noindent\textbf{Setup.}\quad 
The network architecture was essentially the same as that used for Variational Diffusion Models (VDMs~\citep{kingma2021variational}), including the Fourier feature inputs.
The only modification was an extra input-output connection similar to the network for MNIST.
In total there were approximately 31M learnable parameters.
The following hyperparameters were used for all CIFAR-10 experiments:
a validation set of 500 randomly selected training images (1\% of the training set),
the  AdamW~\citep{loshchilov2017decoupled} optmizer with weight decay 0.01, learning rate $0.0002$ and $(\beta_1,\beta_2) = (0.9,0.99)$,
dropout with probability 0.1,
training batch size of 128,
$t_{min} = 1\mathrm{e}{-6}$,
$[x_{min}, x_{max}] = [-1, 1]$, and
an exponential moving average of model parameters with a decay rate of 0.9999 for evaluation and sample generation.
For the 256 bin experiments $\sigma_1 = 0.001$, while for the 16 bin experiments $\sigma_1 = \sqrt{0.001}$.
For the networks trained with continuous loss, the reconstruction loss was measured using the discretised version of $L^r(\x)$ from Section~\ref{sec:discd_reconstruction} rather than the continuous version from Section~\ref{sec:cts_reconstruction}, using a discretised Gaussian with mean equal to $\hat{x}(\parsn, 1)$ and std.\ deviation chosen empirically to be $\sigma_1$ for 256 bins and $0.7 \sigma_1$ for 16 bins.
This ensured the results were comparable between continuous and discretised training, and consistent with the literature.
\\

\begin{figure}[t!]
\centering
\begin{subfigure}{\textwidth}
    \centering
    \includegraphics[width=\linewidth]{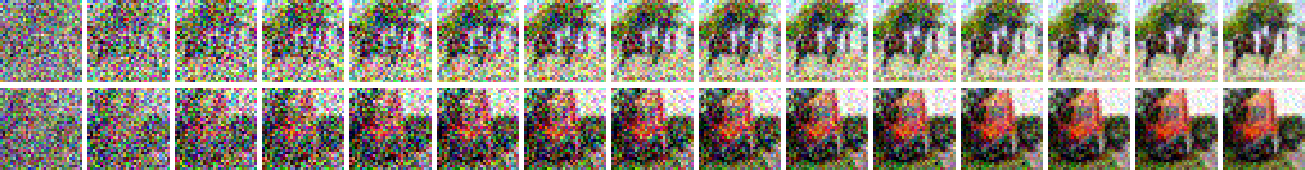}
    \caption{Input Mean}
\end{subfigure}
\par\bigskip
\begin{subfigure}{\textwidth}
    \centering
    \includegraphics[width=\linewidth]{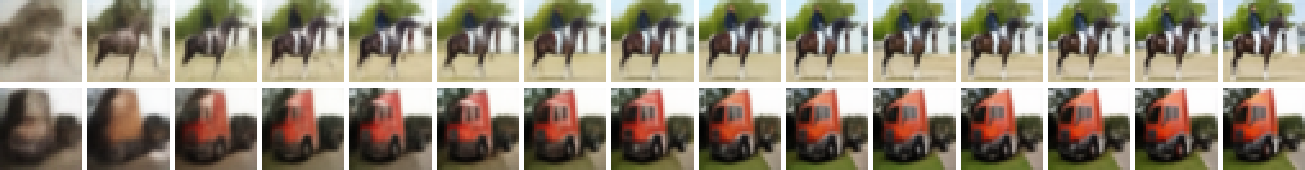}
    \caption{Output Mean}
\end{subfigure}
\caption{\textbf{CIFAR-10 Input and output distributions}. For two test set images the figure shows the means of the input and output distributions at steps evenly spaced between $t=0$ and $t=0.25$. }
\end{figure}

\noindent\textbf{Results.}\quad Table~\ref{tab:mnist-cifar-results} shows that the best performing BFN gives 2.66 BPD for the 256 bin data, which is close to the state-of-the-art at 2.64 BPD.
The most obvious performance benchmark (given the shared network architecture and similarity in loss function) is the VDM result at 2.65 BPD~\citep{kingma2021variational}.
However this took 10M weight updates to achieve, and due to time constraints we were only able to train BFNs for 5M updates.
Validation performance was still improving after 5M updates, and it remains unclear how much performance would improve with 10M updates.

Table~\ref{tab:cifar_results} shows that discretised loss gave better performance than continuous loss for 16 bins, as well as much faster training time (250K updates vs. 1M).
This supports the hypothesis that training with discretised loss is most beneficial when the number of bins is relatively low.
Furthermore, for both 16 and 256 bins, discretised training gave much better results when the number of steps $n$ was low (e.g. 10 or 25).
However continuous loss gave better performance than discretised loss on 256 bins (2.66 BPC vs 2.68); more investigation would be needed to understand why.

Figure~\ref{fig:cifar_samples} shows that discretised training with 16 bins gives better sample quality than training with 256 bins.
This is presumably because the loss function of the former is restricted to the first four bits of the data in which --- as can be seen by comparing the test data at 16 and 256 bins --- most of the perceptually relevant information is contained.
An interesting direction for future work would be to train one BFN to model the lower bits of an image, and a second BFN to conditionally upscale to higher bits, as has previously been explored for autoregressive models~\citep{menick2018generating,hoogeboom2021autoregressive}.

\begin{figure}[t!]
\centering
\begin{subfigure}{.5\textwidth}
  \centering
  \includegraphics[width=0.9\linewidth]{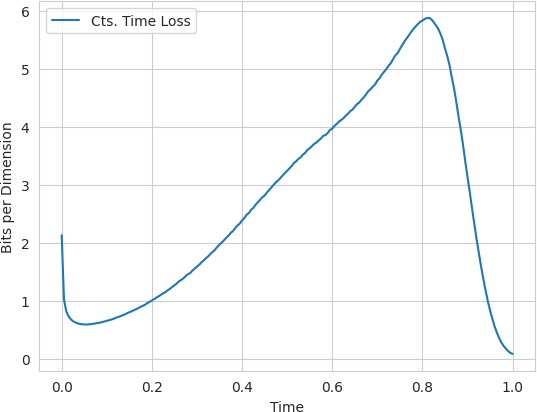}
\end{subfigure}%
\begin{subfigure}{.5\textwidth}
  \centering
  \includegraphics[width=0.9\linewidth]{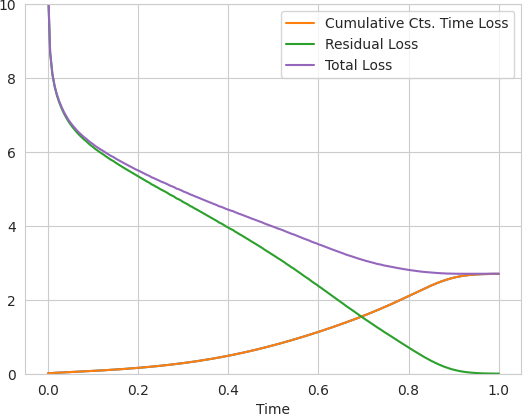}
\end{subfigure}
\caption{\textbf{CIFAR-10 losses against time}. The plot was made using the network trained with discretised loss on 256 bins. Note the high loss at the very start of the process, which we did not observe with discrete data.}
\end{figure}

\begin{table}[t!]
\centering
\begin{tabular}{@{}lll@{}}
\toprule
                                                 & Model                                             & BPC                    \\ \midrule
\multirow{3}{*}{Flow-based models}               & IAF/SCF\textsuperscript{\dag} \citep{ziegler2019}                       & 1.88                   \\
                                                 & Argmax Coupling Flow\textsuperscript{\dag} \citep{hoogeboom2021}        & 1.80                   \\
                                                 & Discrete Flow\textsuperscript{\dag} \citep{tran2019}                    & 1.23                   \\ \midrule
\multirow{3}{*}{Order-agnostic Models}           & OA-ARDM \citep{hoogeboom2021autoregressive}       & 1.43 $\pm$ 0.001 \\
                                                 & MAC \citep{shih2022training}           & 1.40 \\
                                                 \midrule
\multirow{3}{*}{Diffusion models}                & Multinomial Diffusion \citep{hoogeboom2021}                   & 1.72 \\
& D3PM uniform \citep{austin2021d3pm}                   & 1.61 $\pm$ 0.02 \\
                                                 & D3PM NN \citep{austin2021d3pm}                        & 1.59 $\pm$ 0.03 \\
                                                 & D3PM mask \citep{austin2021d3pm}                      & 1.45 $\pm$ 0.02 \\ \midrule
                                                 & \textbf{BFN}   & \textbf{1.41}                       \\ \midrule
Autoregressive baseline                          & Transformer\textsuperscript{\dag} \citep{austin2021d3pm}                    & 1.23                   \\
Best result*                & Adaptive Span Transformer\textsuperscript{\dag} \citep{sukhbaatar2019} & 1.07                   \\ \bottomrule
\end{tabular}
\caption{\textbf{Comparison of text8 results with other methods}. The best published model on this dataset (*) was trained on sequences of length 512. Rest of the above models were trained on sequences of length 256. Results for models marked with (\textsuperscript{\dag}) are exact values; all other results are upper bounds.
}
\label{tab:text8_comparison}
\end{table}
\subsection{text8}
\begin{table}[t!]
\centering
\begin{tabular}{cccccccc}
\toprule
$n$-steps & 10 & 25 & 50 & 100 & 256 & 1000 & $\infty$\\ 
\midrule
BPC & 1.70 & 1.52 & 1.47 & 1.43 & 1.42 & 1.41 & 1.41 \\ 
\bottomrule
\end{tabular}
\caption{\textbf{text8 results}. BPC is bits per character averaged over 1M randomly cropped sequences from the test set with $L^{n}(\x)$ or $L^{\infty}(\x)$ sampled once per crop. The reconstruction loss $L^r(\x)$ (included in BPC) was $0.006$.}
\label{tab:text8_results}
\end{table}

\noindent\textbf{Data.}\quad The text8 dataset~\citep{mahoney09ltcb} was derived from a subset of the enwik9 Wikipedia dataset by removing punctuation and restricting the text to lowercase Latin letters and spaces, giving an alphabet of size 27.
For clarity, we represent the space character with an underscore in figures.
\\

\begin{figure}[t!]
\centering
\begin{subfigure}{.5\textwidth}
  \centering
  \includegraphics[width=0.9\linewidth]{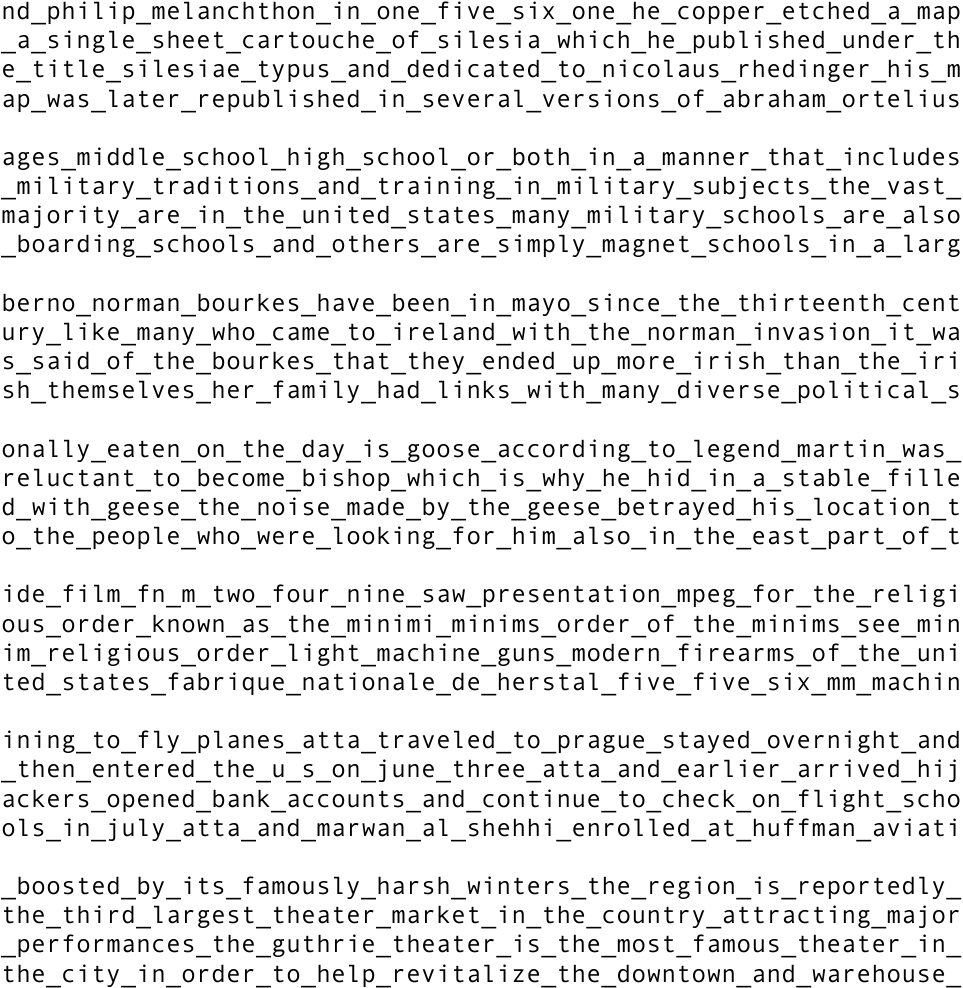}
  \caption{Test Data}
\end{subfigure}%
\begin{subfigure}{.5\textwidth}
  \centering
  \includegraphics[width=0.9\linewidth]{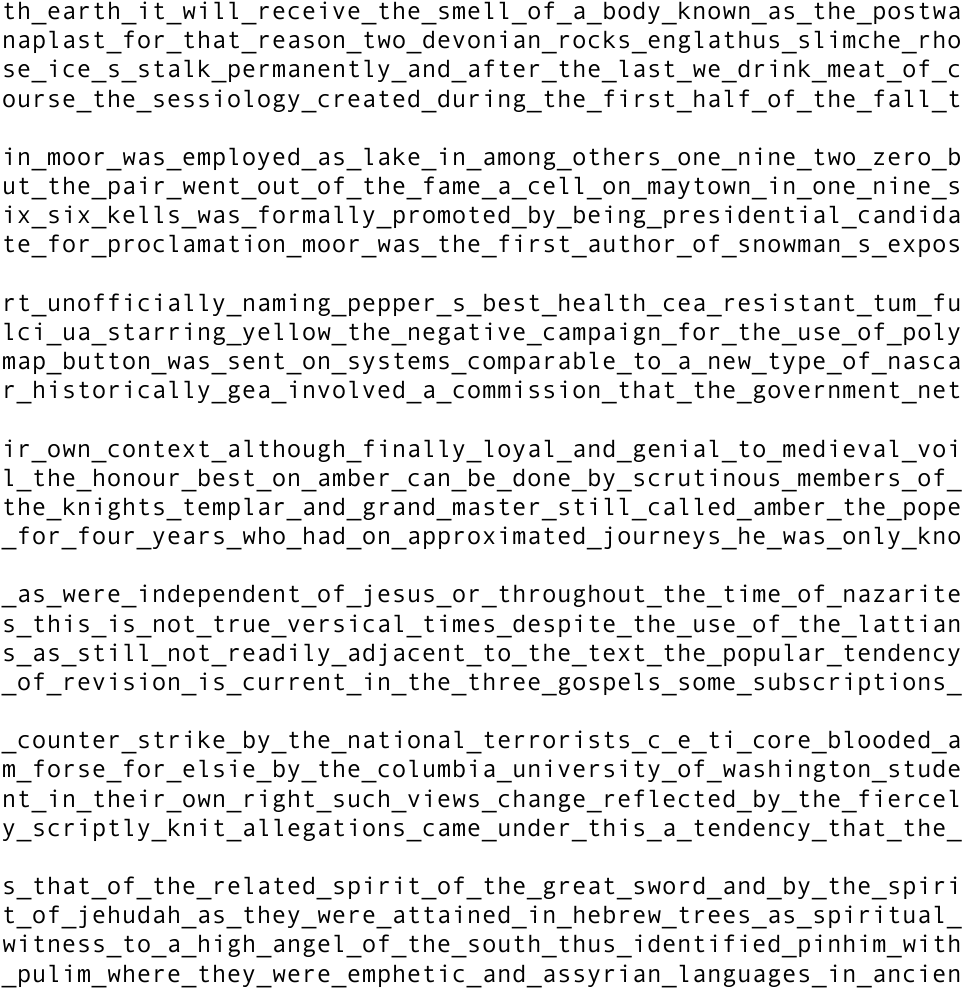}
  \caption{Generated Data}
\end{subfigure}
\caption{\textbf{text8 real and generated data.} Samples generated with 1000 steps.}
\end{figure}

\noindent\textbf{Setup.}\quad The network architecture was a Transformer similar to the small model ($d_{\text{model}}=768$) used by \citet{radford2019language} except that it uses the GELU activation function \citep{hendrycks2016gaussian} and the depth was increased to 24 layers.
The input and output of the Transformer were concatenated and then projected back to the output size to produce the final output.
The standard training/validation/test split of 90M/5M/5M consecutive characters was used, and
the network was trained with a batch size of 3328 sequences of length 256, randomly cropped from the training set, for 1.2\,M weight updates using the AdamW optimizer\citep{loshchilov2017decoupled}. 
The learning rate was set to $10^{-4}$, weight decay to 0.1 and $(\beta_1, \beta_2)$ to $ (0.9, 0.98)$.
An exponential moving average of model parameters with a decay rate of 0.9999 was used for evaluation and sample generation.
Dropout was not used, but overfitting was observed towards the end of training indicating that regularization may further improve results.
$\beta(1)$ was 0.75.
The total number of learnable parameters was approximately 170M.
Note that the  batch size and number of layers were larger than prior results from diffusion models. 
The first choice increases model capacity while the second tends to make overfitting more likely.
These choices were made to maximize the utilization of available resources while achieving results in reasonable time.
\\

\begin{figure}[t!]
\centering
\includegraphics[width=\linewidth]{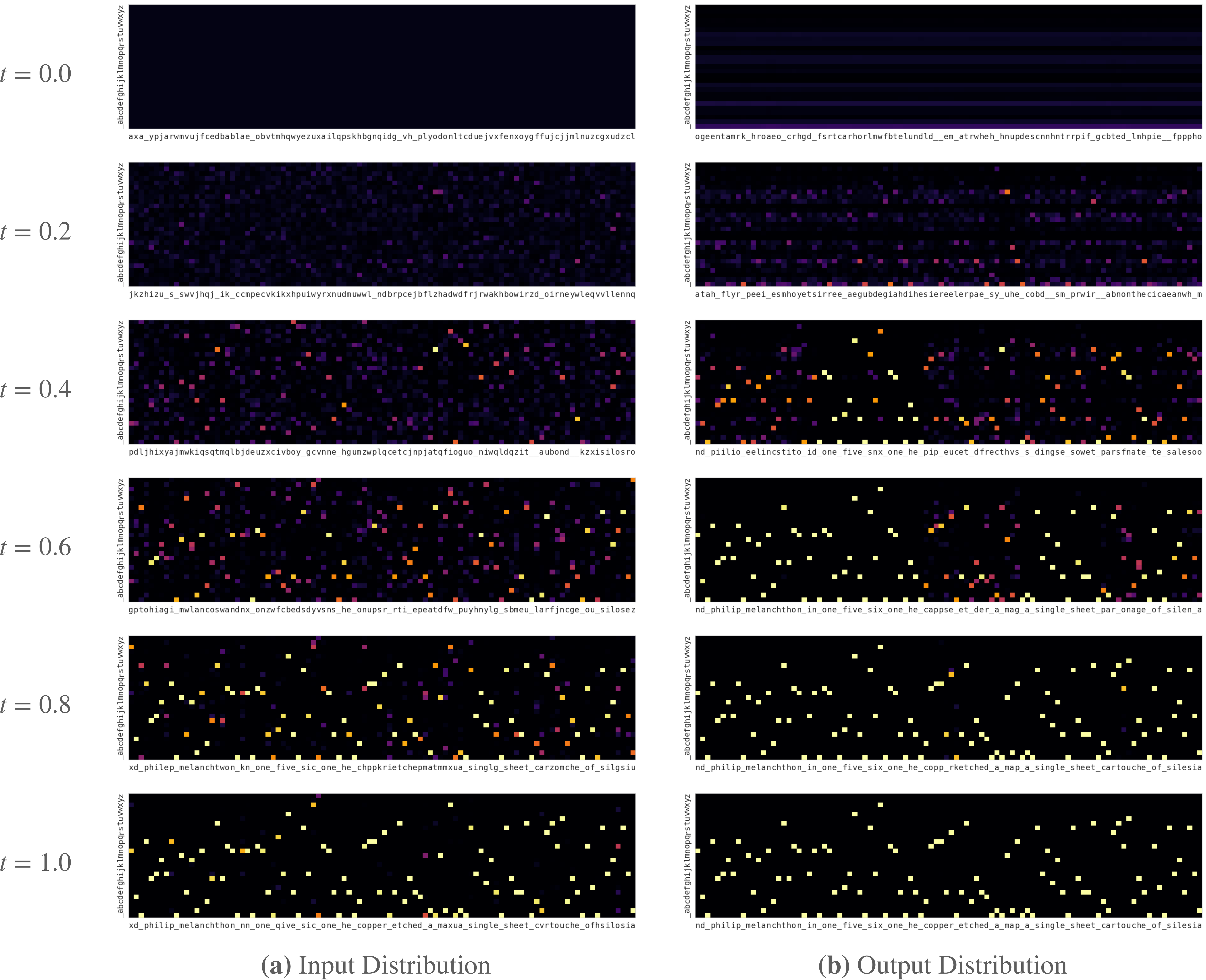}
\caption{\textbf{text8 Input and Output Distributions}. The heatmaps show the character probability distributions across part of a test sequence at various times during the flow process. Whereas the expected entropy for each letter decreases independently in the input distribution, the entropy of the output distribution tends to chunk into words and phrases --- e.g. the date ``one\_five\_six\_one'' is confidently predicted early in the process.}
\end{figure}

\noindent\textbf{Results.}\quad
Table~\ref{tab:text8_comparison} shows that BFN yielded a  1.41 BPC on the text8 test set, which is better than all discrete diffusion models we found in the literature, and close to the best order-agnostic model, MAC at 1.40 BPC.
We note however that both a standard autoregressive baseline and a discrete flow model perform substantially better at 1.23 BPC.
Table~\ref{tab:text8_results} shows that performance is reasonably robust to decreased $n$, with only 100 steps required to reach 1.43 BPC.
This result could probably be improved by training with the discrete-time loss.

\begin{figure}[t!]
\centering
\includegraphics[width=\linewidth]{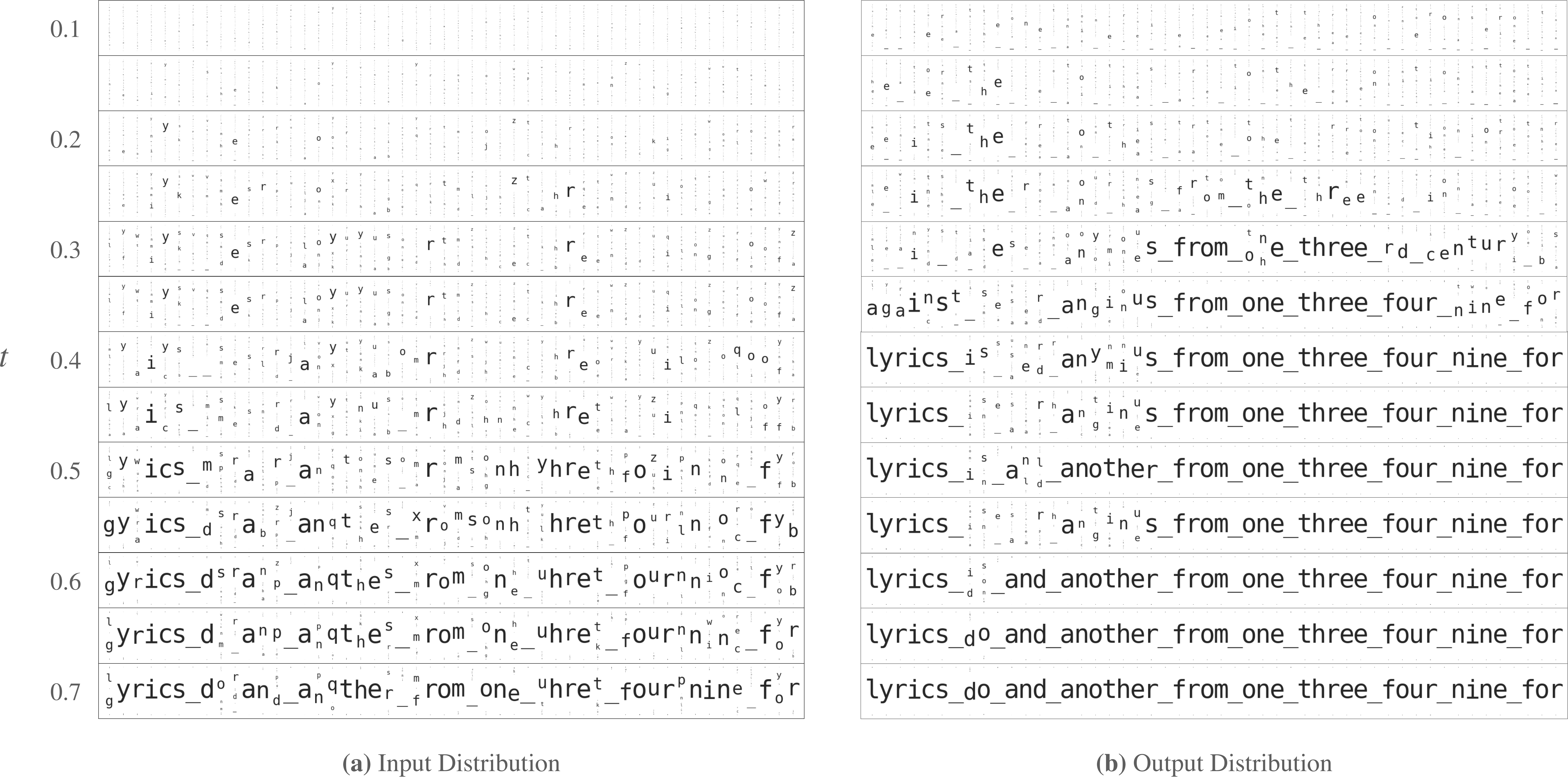}
\caption{\textbf{text8 Input and Output Distributions}. An alternative visualisation with the character sizes scaled in proportion to their probability.}
\end{figure}
\section{Conclusion}\label{sec:conclusion}
This paper introduced Bayesian Flow Networks, a new class of generative model that combines Bayesian inference with neural networks in an iterative modelling process.
Discrete and continuous-time loss functions were derived along with sampling procedures, and the model was succesfully applied to continuous, discretised and discrete data.
We hope this work will inspire fresh perspectives and new directions for generative modelling research.
\section*{Ackowledgements}\label{sec:acknowledgements}
We would like to thank Vojtech Micka for his invaluable engineering and infrastructure support.
\bibliographystyle{plainnat}
\bibliography{bibliography}

\begin{thebibliography}{55}
\providecommand{\natexlab}[1]{#1}
\providecommand{\url}[1]{\texttt{#1}}
\expandafter\ifx\csname urlstyle\endcsname\relax
  \providecommand{\doi}[1]{doi: #1}\else
  \providecommand{\doi}{doi: \begingroup \urlstyle{rm}\Url}\fi

\bibitem[Austin et~al.(2021)Austin, Johnson, Ho, Tarlow, and van~den
  Berg]{austin2021d3pm}
Jacob Austin, Daniel~D. Johnson, Jonathan Ho, Daniel Tarlow, and Rianne van~den
  Berg.
\newblock Structured {{Denoising Diffusion Models}} in {{Discrete
  State-Spaces}}.
\newblock \emph{arXiv preprint arXiv:2107.03006}, July 2021.

\bibitem[Chen et~al.(2022)Chen, Zhang, and Hinton]{chen2022analog}
Ting Chen, Ruixiang Zhang, and Geoffrey Hinton.
\newblock Analog bits: Generating discrete data using diffusion models with
  self-conditioning.
\newblock \emph{arXiv preprint arXiv:2208.04202}, 2022.

\bibitem[Child(2020)]{child2020very}
Rewon Child.
\newblock Very deep vaes generalize autoregressive models and can outperform
  them on images.
\newblock \emph{arXiv preprint arXiv:2011.10650}, 2020.

\bibitem[Child et~al.(2019)Child, Gray, Radford, and
  Sutskever]{child2019generating}
Rewon Child, Scott Gray, Alec Radford, and Ilya Sutskever.
\newblock Generating long sequences with sparse transformers.
\newblock \emph{arXiv preprint arXiv:1904.10509}, 2019.

\bibitem[Dhariwal and Nichol(2021)]{dhariwal2021diffusion}
Prafulla Dhariwal and Alexander Nichol.
\newblock Diffusion models beat gans on image synthesis.
\newblock \emph{Advances in neural information processing systems},
  34:\penalty0 8780--8794, 2021.

\bibitem[Dieleman et~al.(2022)Dieleman, Sartran, Roshannai, Savinov, Ganin,
  Richemond, Doucet, Strudel, Dyer, Durkan, et~al.]{dieleman2022continuous}
Sander Dieleman, Laurent Sartran, Arman Roshannai, Nikolay Savinov, Yaroslav
  Ganin, Pierre~H Richemond, Arnaud Doucet, Robin Strudel, Chris Dyer, Conor
  Durkan, et~al.
\newblock Continuous diffusion for categorical data.
\newblock \emph{arXiv preprint arXiv:2211.15089}, 2022.

\bibitem[Duda(2009)]{duda2009asymmetric}
Jarek Duda.
\newblock Asymmetric numeral systems.
\newblock \emph{arXiv preprint arXiv:0902.0271}, 2009.

\bibitem[Georgii(2008)]{georgii2008stochastics}
H.O. Georgii.
\newblock \emph{Stochastics: Introduction to Probability and Statistics}.
\newblock De Gruyter textbook. Walter De Gruyter, 2008.
\newblock ISBN 9783110191455.
\newblock URL \url{https://books.google.co.uk/books?id=ttJ5xpQX2MgC}.

\bibitem[Graves(2013)]{graves2013generating}
Alex Graves.
\newblock Generating sequences with recurrent neural networks.
\newblock \emph{arXiv preprint arXiv:1308.0850}, 2013.

\bibitem[Han et~al.(2023)Han, Kumar, and Tsvetkov]{han2023ssd}
Xiaochuang Han, Sachin Kumar, and Yulia Tsvetkov.
\newblock {SSD}-{LM}: Semi-autoregressive simplex-based diffusion language
  model for text generation and modular control.
\newblock In \emph{Proceedings of the 61st Annual Meeting of the Association
  for Computational Linguistics (Volume 1: Long Papers)}, pages 11575--11596.
  Association for Computational Linguistics, jul 2023.

\bibitem[Hendrycks and Gimpel(2016)]{hendrycks2016gaussian}
Dan Hendrycks and Kevin Gimpel.
\newblock Gaussian error linear units (gelus).
\newblock \emph{arXiv preprint arXiv:1606.08415}, 2016.

\bibitem[Hinton and Van~Camp(1993)]{hinton1993keeping}
Geoffrey~E Hinton and Drew Van~Camp.
\newblock Keeping the neural networks simple by minimizing the description
  length of the weights.
\newblock In \emph{Proceedings of the sixth annual conference on Computational
  learning theory}, pages 5--13, 1993.

\bibitem[Ho et~al.(2020)Ho, Jain, and Abbeel]{ho2020denoising}
Jonathan Ho, Ajay Jain, and Pieter Abbeel.
\newblock Denoising diffusion probabilistic models.
\newblock \emph{Advances in neural information processing systems},
  33:\penalty0 6840--6851, 2020.

\bibitem[Hoogeboom et~al.(2021{\natexlab{a}})Hoogeboom, Gritsenko, Bastings,
  Poole, Berg, and Salimans]{hoogeboom2021autoregressive}
Emiel Hoogeboom, Alexey~A Gritsenko, Jasmijn Bastings, Ben Poole, Rianne
  van~den Berg, and Tim Salimans.
\newblock Autoregressive diffusion models.
\newblock \emph{arXiv preprint arXiv:2110.02037}, 2021{\natexlab{a}}.

\bibitem[Hoogeboom et~al.(2021{\natexlab{b}})Hoogeboom, Nielsen, Jaini,
  Forr{\'e}, and Welling]{hoogeboom2021}
Emiel Hoogeboom, Didrik Nielsen, Priyank Jaini, Patrick Forr{\'e}, and Max
  Welling.
\newblock Argmax {{Flows}} and {{Multinomial Diffusion}}: {{Learning
  Categorical Distributions}}.
\newblock In \emph{Advances in {{Neural Information Processing Systems}}},
  volume~34, pages 12454--12465. {Curran Associates, Inc.}, 2021{\natexlab{b}}.

\bibitem[Jain et~al.(2020)Jain, Abbeel, and Pathak]{jain2020locally}
Ajay Jain, Pieter Abbeel, and Deepak Pathak.
\newblock Locally masked convolution for autoregressive models.
\newblock In \emph{Conference on Uncertainty in Artificial Intelligence}, pages
  1358--1367. PMLR, 2020.

\bibitem[Kim et~al.(2021)Kim, Shin, Song, Kang, and Moon]{kim2021soft}
Dongjun Kim, Seungjae Shin, Kyungwoo Song, Wanmo Kang, and Il-Chul Moon.
\newblock Soft truncation: A universal training technique of score-based
  diffusion model for high precision score estimation.
\newblock \emph{arXiv preprint arXiv:2106.05527}, 2021.

\bibitem[Kingma et~al.(2021)Kingma, Salimans, Poole, and
  Ho]{kingma2021variational}
Diederik Kingma, Tim Salimans, Ben Poole, and Jonathan Ho.
\newblock Variational diffusion models.
\newblock \emph{Advances in neural information processing systems},
  34:\penalty0 21696--21707, 2021.

\bibitem[Kingma and Welling(2013)]{kingma2013auto}
Diederik~P Kingma and Max Welling.
\newblock Auto-encoding variational bayes.
\newblock \emph{arXiv preprint arXiv:1312.6114}, 2013.

\bibitem[Krizhevsky(2009)]{Krizhevsky09learningmultiple}
Alex Krizhevsky.
\newblock Learning multiple layers of features from tiny images.
\newblock Technical report, University of Toronto, 2009.

\bibitem[LeCun and Cortes(2010)]{lecun-mnisthandwrittendigit-2010}
Yann LeCun and Corinna Cortes.
\newblock {MNIST} handwritten digit database, 2010.
\newblock URL \url{http://yann.lecun.com/exdb/mnist/}.

\bibitem[Li et~al.(2022)Li, Thickstun, Gulrajani, Liang, and
  Hashimoto]{li2022diffusionlm}
Xiang~Lisa Li, John Thickstun, Ishaan Gulrajani, Percy Liang, and Tatsunori~B.
  Hashimoto.
\newblock Diffusion-lm improves controllable text generation.
\newblock \emph{arXiv preprint arXiv:2205.14217}, 2022.

\bibitem[Loshchilov and Hutter(2017)]{loshchilov2017decoupled}
Ilya Loshchilov and Frank Hutter.
\newblock Decoupled weight decay regularization.
\newblock \emph{arXiv preprint arXiv:1711.05101}, 2017.

\bibitem[Lou and Ermon(2023)]{lou2023reflected}
Aaron Lou and Stefano Ermon.
\newblock Reflected diffusion models.
\newblock \emph{arXiv preprint arXiv:2304.04740}, 2023.

\bibitem[Mahabadi et~al.(2023)Mahabadi, Tae, Ivison, Henderson, Beltagy,
  Peters, and Cohan]{mahabadi2023tess}
Rabeeh~Karimi Mahabadi, Jaesung Tae, Hamish Ivison, James Henderson,
  Iz~Beltagy, Matthew~E. Peters, and Arman Cohan.
\newblock Tess: Text-to-text self-conditioned simplex diffusion.
\newblock \emph{arXiv preprint arXiv:2305.08379}, 2023.

\bibitem[Mahoney(2009)]{mahoney09ltcb}
Matt Mahoney.
\newblock Large text compression benchmark., 2009.
\newblock URL \url{http://mattmahoney.net/dc/textdata.html}.

\bibitem[Menick and Kalchbrenner(2018)]{menick2018generating}
Jacob Menick and Nal Kalchbrenner.
\newblock Generating high fidelity images with subscale pixel networks and
  multidimensional upscaling.
\newblock \emph{arXiv preprint arXiv:1812.01608}, 2018.

\bibitem[Murphy(2007)]{murphy2007conjugate}
Kevin Murphy.
\newblock Conjugate bayesian analysis of the gaussian distribution.
\newblock Technical report, University of British Columbia, 2007.

\bibitem[Nichol and Dhariwal(2021)]{nichol2021improved}
Alexander~Quinn Nichol and Prafulla Dhariwal.
\newblock Improved denoising diffusion probabilistic models.
\newblock In \emph{International Conference on Machine Learning}, pages
  8162--8171. PMLR, 2021.

\bibitem[OpenAI(2023)]{openai2023gpt4}
OpenAI.
\newblock Gpt-4 technical report.
\newblock \emph{arXiv preprint arXiv:2303.08774}, 2023.

\bibitem[Parmar et~al.(2018)Parmar, Vaswani, Uszkoreit, Kaiser, Shazeer, Ku,
  and Tran]{parmar2018image}
Niki Parmar, Ashish Vaswani, Jakob Uszkoreit, Lukasz Kaiser, Noam Shazeer,
  Alexander Ku, and Dustin Tran.
\newblock Image transformer.
\newblock In \emph{International conference on machine learning}, pages
  4055--4064. PMLR, 2018.

\bibitem[Radford et~al.(2019)Radford, Wu, Child, Luan, Amodei, Sutskever,
  et~al.]{radford2019language}
Alec Radford, Jeffrey Wu, Rewon Child, David Luan, Dario Amodei, Ilya
  Sutskever, et~al.
\newblock Language models are unsupervised multitask learners.
\newblock Technical report, OpenAI, 2019.

\bibitem[Rezende and Mohamed(2015)]{rezende2015variational}
Danilo Rezende and Shakir Mohamed.
\newblock Variational inference with normalizing flows.
\newblock In \emph{International conference on machine learning}, pages
  1530--1538. PMLR, 2015.

\bibitem[Richemond et~al.(2022)Richemond, Dieleman, and
  Doucet]{richemond2022categorical}
Pierre~H. Richemond, Sander Dieleman, and Arnaud Doucet.
\newblock Categorical {SDEs} with simplex diffusion.
\newblock \emph{arXiv preprint arXiv:2210.14784}, 2022.

\bibitem[Rombach et~al.(2022)Rombach, Blattmann, Lorenz, Esser, and
  Ommer]{rombach2022high}
Robin Rombach, Andreas Blattmann, Dominik Lorenz, Patrick Esser, and Bj{\"o}rn
  Ommer.
\newblock High-resolution image synthesis with latent diffusion models.
\newblock In \emph{Proceedings of the IEEE/CVF conference on computer vision
  and pattern recognition}, pages 10684--10695, 2022.

\bibitem[Sadeghi et~al.(2019)Sadeghi, Andriyash, Vinci, Buffoni, and
  Amin]{sadeghi2019pixelvae++}
Hossein Sadeghi, Evgeny Andriyash, Walter Vinci, Lorenzo Buffoni, and
  Mohammad~H Amin.
\newblock Pixelvae++: Improved pixelvae with discrete prior.
\newblock \emph{arXiv preprint arXiv:1908.09948}, 2019.

\bibitem[Salakhutdinov and Murray(2008)]{salakhutdinov2008quantitative}
Ruslan Salakhutdinov and Iain Murray.
\newblock On the quantitative analysis of deep belief networks.
\newblock In \emph{Proceedings of the 25th international conference on Machine
  learning}, pages 872--879. ACM, 2008.

\bibitem[Salimans and Ho(2022)]{salimans2022progressive}
Tim Salimans and Jonathan Ho.
\newblock Progressive distillation for fast sampling of diffusion models.
\newblock \emph{arXiv preprint arXiv:2202.00512}, 2022.

\bibitem[Salimans et~al.(2017)Salimans, Karpathy, Chen, and
  Kingma]{salimans2017pixel}
Tim Salimans, Andrej Karpathy, Xi~Chen, and Diederik~P Kingma.
\newblock Pixelcnn++: Improving the {PixelCNN} with discretized logistic
  mixture likelihood and other modifications.
\newblock \emph{arXiv preprint arXiv:1701.05517}, 2017.

\bibitem[Shih et~al.(2022)Shih, Sadigh, and Ermon]{shih2022training}
Andy Shih, Dorsa Sadigh, and Stefano Ermon.
\newblock Training and inference on any-order autoregressive models the right
  way.
\newblock \emph{Advances in Neural Information Processing Systems},
  35:\penalty0 2762--2775, 2022.

\bibitem[Sinha and Dieng(2021)]{sinha2021consistency}
Samarth Sinha and Adji~Bousso Dieng.
\newblock Consistency regularization for variational auto-encoders.
\newblock \emph{Advances in Neural Information Processing Systems},
  34:\penalty0 12943--12954, 2021.

\bibitem[Sohl-Dickstein et~al.(2015)Sohl-Dickstein, Weiss, Maheswaranathan, and
  Ganguli]{sohl2015deep}
Jascha Sohl-Dickstein, Eric Weiss, Niru Maheswaranathan, and Surya Ganguli.
\newblock Deep unsupervised learning using nonequilibrium thermodynamics.
\newblock In \emph{International conference on machine learning}, pages
  2256--2265. PMLR, 2015.

\bibitem[Song et~al.(2020)Song, Sohl-Dickstein, Kingma, Kumar, Ermon, and
  Poole]{song2020score}
Yang Song, Jascha Sohl-Dickstein, Diederik~P Kingma, Abhishek Kumar, Stefano
  Ermon, and Ben Poole.
\newblock Score-based generative modeling through stochastic differential
  equations.
\newblock \emph{arXiv preprint arXiv:2011.13456}, 2020.

\bibitem[Song et~al.(2023)Song, Dhariwal, Chen, and
  Sutskever]{song2023consistency}
Yang Song, Prafulla Dhariwal, Mark Chen, and Ilya Sutskever.
\newblock Consistency models.
\newblock \emph{arXiv preprint arXiv:2303.01469}, 2023.

\bibitem[Strudel et~al.(2022)Strudel, Tallec, Altch{\'e}, Du, Ganin, Mensch,
  Grathwohl, Savinov, Dieleman, Sifre, et~al.]{strudel2022self}
Robin Strudel, Corentin Tallec, Florent Altch{\'e}, Yilun Du, Yaroslav Ganin,
  Arthur Mensch, Will Grathwohl, Nikolay Savinov, Sander Dieleman, Laurent
  Sifre, et~al.
\newblock Self-conditioned embedding diffusion for text generation.
\newblock \emph{arXiv preprint arXiv:2211.04236}, 2022.

\bibitem[Sukhbaatar et~al.(2019)Sukhbaatar, Grave, Bojanowski, and
  Joulin]{sukhbaatar2019}
Sainbayar Sukhbaatar, Edouard Grave, Piotr Bojanowski, and Armand Joulin.
\newblock Adaptive {{Attention Span}} in {{Transformers}}.
\newblock \emph{arXiv preprint arXiv:1905.07799}, August 2019.

\bibitem[Sutskever et~al.(2011)Sutskever, Martens, and
  Hinton]{sutskever2011generating}
Ilya Sutskever, James Martens, and Geoffrey~E Hinton.
\newblock Generating text with recurrent neural networks.
\newblock In \emph{Proceedings of the 28th international conference on machine
  learning (ICML-11)}, pages 1017--1024, 2011.

\bibitem[Townsend et~al.(2019)Townsend, Bird, and
  Barber]{townsend2019practical}
James Townsend, Tom Bird, and David Barber.
\newblock Practical lossless compression with latent variables using bits back
  coding.
\newblock \emph{arXiv preprint arXiv:1901.04866}, 2019.

\bibitem[Tran et~al.(2019)Tran, Vafa, Agrawal, Dinh, and Poole]{tran2019}
Dustin Tran, Keyon Vafa, Kumar Agrawal, Laurent Dinh, and Ben Poole.
\newblock Discrete flows: {{Invertible}} generative models of discrete data.
\newblock \emph{Advances in Neural Information Processing Systems}, 32, 2019.

\bibitem[Vahdat and Kautz(2020)]{vahdat2020nvae}
Arash Vahdat and Jan Kautz.
\newblock Nvae: A deep hierarchical variational autoencoder.
\newblock \emph{Advances in neural information processing systems},
  33:\penalty0 19667--19679, 2020.

\bibitem[Vahdat et~al.(2021)Vahdat, Kreis, and Kautz]{vahdat2021score}
Arash Vahdat, Karsten Kreis, and Jan Kautz.
\newblock Score-based generative modeling in latent space.
\newblock \emph{Advances in Neural Information Processing Systems},
  34:\penalty0 11287--11302, 2021.

\bibitem[Wallace(1991)]{Wallace1991ClassificationBM}
Chris~S. Wallace.
\newblock Classification by minimum-message-length inference.
\newblock In \emph{International Conference on Computing and Information},
  1991.

\bibitem[Watson et~al.(2022)Watson, Chan, Ho, and Norouzi]{watson2022learning}
Daniel Watson, William Chan, Jonathan Ho, and Mohammad Norouzi.
\newblock Learning fast samplers for diffusion models by differentiating
  through sample quality.
\newblock \emph{arXiv preprint arXiv:2202.05830}, 2022.

\bibitem[Witten et~al.(1987)Witten, Neal, and Cleary]{witten1987arithmetic}
Ian~H Witten, Radford~M Neal, and John~G Cleary.
\newblock Arithmetic coding for data compression.
\newblock \emph{Communications of the ACM}, 30\penalty0 (6):\penalty0 520--540,
  1987.

\bibitem[Ziegler and Rush(2019)]{ziegler2019}
Zachary Ziegler and Alexander Rush.
\newblock Latent {{Normalizing Flows}} for {{Discrete Sequences}}.
\newblock In \emph{Proceedings of the 36th {{International Conference}} on
  {{Machine Learning}}}, pages 7673--7682. {PMLR}, May 2019.

\end{thebibliography}
\end{document}